\theoremstyle{plain}
\newtheorem{theorem}{Theorem}[section]
\newtheorem{lemma}[theorem]{Lemma}
\newtheorem{corollary}[theorem]{Corollary}
\theoremstyle{definition}
\newtheorem{assumption}[theorem]{Assumption}
\theoremstyle{remark}
\newtheorem{remark}[theorem]{Remark}
\def \avg {\mathrm{avg}}
\def \med {\mathrm{median}}
\definecolor{carnelian}{rgb}{0.7, 0.11, 0.11}
\definecolor{lemonchiffon}{rgb}{1.0, 0.98, 0.8}
\definecolor{darkblue}{rgb}{0.0, 0.0, 0.55}
\definecolor{darkgreen}{RGB}{0, 120, 0}
\newcommand{\ifcomments}{\iftrue}
\begin{document}

\twocolumn[
\icmltitle{FADAS: Towards Federated Adaptive Asynchronous Optimization }



\icmlsetsymbol{equal}{*}

\begin{icmlauthorlist}
\icmlauthor{Yujia Wang}{psu}
\icmlauthor{Shiqiang Wang}{ibm}
\icmlauthor{Songtao Lu}{ibm}
\icmlauthor{Jinghui Chen}{psu}
\end{icmlauthorlist}

\icmlaffiliation{psu}{College of Information Sciences and Technology, Pennsylvania State University, State College, PA, USA}
\icmlaffiliation{ibm}{IBM T. J. Watson Research Center,
Yorktown Heights, NY, USA}

\icmlcorrespondingauthor{Yujia Wang}{yjw5427@psu.edu}
\icmlcorrespondingauthor{Shiqiang Wang}{\mbox{wangshiq@us.ibm.com}}
\icmlcorrespondingauthor{Songtao Lu}{songtao@ibm.com}
\icmlcorrespondingauthor{Jinghui Chen}{jzc5917@psu.edu}

\icmlkeywords{Machine Learning, ICML}

\vskip 0.3in
]



\printAffiliationsAndNotice{}  

\begin{abstract}
Federated learning (FL) has emerged as a widely adopted training paradigm for privacy-preserving machine learning. While the SGD-based FL algorithms have demonstrated considerable success in the past, there is a growing trend towards adopting adaptive federated optimization methods, particularly for training large-scale models. However, the conventional synchronous aggregation design poses a significant challenge to the practical deployment of those adaptive federated optimization methods, particularly in the presence of straggler clients. To fill this research gap, this paper introduces federated adaptive asynchronous optimization, named FADAS, a novel method that incorporates asynchronous updates into adaptive federated optimization with provable guarantees. To further enhance the efficiency and resilience of our proposed method in scenarios with significant asynchronous delays, we also extend FADAS with a delay-adaptive learning adjustment strategy. We rigorously establish the convergence rate of the proposed algorithms and empirical results demonstrate the superior performance of FADAS over other asynchronous FL baselines. 
\end{abstract}

\section{Introduction}

In recent years, federated learning (FL) \cite{mcmahan2017communication} has drawn increasing attention as an efficient privacy-preserving distributed machine learning paradigm. An FL framework consists of a central server and numerous clients, where clients collaboratively train a global model without sharing their private data. FL entails each client conducting multiple local iterations, while the central server periodically aggregates these local updates into the global model. Following the original design of the FedAvg algorithm \cite{mcmahan2017communication}, a large number of stochastic gradient descent (SGD)-based FL methods have emerged, aiming to improve the performance or efficiency of FedAvg \cite{karimireddy2020scaffold, acar2021federated, wang2020tackling}.

In addition to the successes of  SGD-based algorithms in enhancing the efficiency of FL, the adoption of adaptive optimization techniques is becoming increasingly prevalent in FL. 
Adaptive optimization techniques such as Adam \cite{kingma2014adam} and AdamW \cite{loshchilov2017decoupled} have proven their advantages over SGD in effectively training or fine-tuning large-scale models like BERT \cite{devlin2018bert}, ViT \cite{dosovitskiy2021an}, and Llama \cite{touvron2023llama}.
This progress has encouraged the incorporation of adaptive optimization into the FL settings, taking advantage of their ability to navigate update directions and dynamically adjust learning rates. 
For example, FedAdam \cite{reddi2021adaptive} and FedAMS \cite{wang2022communication} employ global adaptive optimization after the server aggregates local model updates. Moreover, strategies such as FedLALR \cite{sun2023fedlalr}, FedLADA \cite{sun2023efficient}, and FAFED \cite{wu2023faster} replace SGD with the Adam optimizer for the local training phase, exemplifying the utility of local adaptive optimizations in FL.

However, existing methods in adaptive FL still rely on traditional synchronous aggregation approaches, where the server must wait for all participating clients to complete their local training before global updates. This reliance presents a significant challenge to the practical implementation of adaptive FL methodologies, as the server is required to wait until slower clients, which may have limited computation or communication capabilities. While asynchronous FL strategies such as FedBuff \cite{nguyen2022federated} and FedAsync \cite{xie2019asynchronous} have been investigated to improve the scalability and to study the impact of client delays on the convergence of SGD-based FL algorithms, the specific implications of asynchronous delays on nonlinear adaptive gradient operations are not completely understood.
This motivates us to explore the following question: 

\textit{Can we develop an asynchronous method for adaptive federated optimization (with provable guarantees) that enhances training efficiency and is resilient to asynchronous delays?}

In this paper, we propose FADAS, \textbf{F}ederated \textbf{AD}aptive \textbf{AS}ynchronous optimization, to address this challenge. FADAS introduces asynchronous updates within the adaptive federated optimization framework and integrates a delay-adaptive mechanism for adjusting the learning rate adaptively in response to burst delays.
We summarize our contributions as follows:
\vspace{-5pt}
\begin{itemize}[leftmargin=*]
\vspace{-2pt}
    \item We propose FADAS, a novel adaptive federated optimization method that extends traditional adpative federated optimization \textit{support asynchronous client updates}. We prove that FADAS achieves a convergence rate of $\cO\big(\frac{1}{\sqrt{TM}} + \frac{\tau_{\max}\tau_{\avg}}{T}\big)$ w.r.t. the number of global communication rounds $T$ and the number of accumulated updates $M$, with bounded worst-case delay, denoted by $\tau_{\max}$, and the average of the maximum delay over all the rounds, denoted by $\tau_{\avg}$. 
\vspace{-2pt}
    \item To further reduce the dependency on the worst-case delay term $\tau_{\max}$ in the convergence rate, we extend FADAS with a \textit{delay-adaptive learning rate adjustment strategy}. Our theoretical results demonstrate that the inclusion of a delay-adaptive learning rate effectively diminishes the dependency on $\tau_{\max}$ in the convergence rate.
\vspace{-2pt}
    \item We conduct experiments across various asynchronous delay settings in both vision and language modeling tasks. Our results indicate that the proposed FADAS, whether or not including the delay-adaptive learning rate, outperforms other asynchronous FL baselines. In particular, the delay-adaptive FADAS demonstrates significant advantages in scenarios with large worst-case delays. Moreover, our experimental results on simulating the wall-clock training time underscores the efficiency of our proposed FADAS approach.
\end{itemize}

\section{Related Work}
\textbf{Federated learning.}
FL, as introduced by \citet{mcmahan2017communication}, has become a pivotal framework for collaboratively training machine learning models on edge devices while keeping local data private. Following the initial FedAvg algorithm, several works studied the theoretical analysis and empirical performance of it \cite{lin2018don, stich2018local,li2019convergence, karimireddy2020scaffold, wang2021cooperative, yang2021achieving}, and a range of works aim to improve FedAvg from different perspectives, such as reducing the impact of data heterogeneity \cite{karimireddy2020scaffold, acar2021federated, wang2020tackling}, saving the communication overhead \cite{reisizadeh2020fedpaq, jhunjhunwala2021adaptive}, and adjusting the parameter aggregation procedure \cite{tan2022adafed, wang2023lightweight}. 

\textbf{Adaptive FL optimizations and adaptive updates. }
Besides traditional SGD-based methods, there is a line of works focusing on adaptive updates in FL. A local adaptive FL method with momentum-based variance-reduced gradient was used in FAFED \cite{wu2023faster}. \citet{li2023fedda} proposed a framework for local adaptive gradient methods in FedDA. FedLALR \cite{sun2023fedlalr} uses local adaptive optimization in FL with local historical gradients and periodically synchronized learning rates. FedLADA \cite{sun2023efficient} is an efficient local adaptive FL method with a locally amended technique. \citet{jin2022accelerated} developed novel adaptive FL optimization methods from the perspective of dynamics of ordinary differential equations. Moreover, \citet{reddi2021adaptive} introduced FedAdagrad, FedAdam and FedYogi, and \citet{wang2022communication} proposed FedAMS for global adaptive FL optimizations. Several works of global adaptive learning rate \cite{jhunjhunwala2023fedexp} and adaptation in aggregation weights \cite{tan2022adafed, wang2023lightweight} are also related to adaptive learning rate adjustment.

\textbf{Asynchronous SGD and asynchronous FL. } 
There have been extensive studies over the years about asynchronous optimization techniques, including asynchronous SGD and its various adaptations. For example, Hogwild \cite{10.5555/2986459.2986537} includes an applicable lock-free, coordinate-wise asynchronous method and has been widely used in multi-thread computation. A body of works focuses on the theoretical analysis and explorations of asynchronous SGD \cite{mania2017perturbed, pmlr-v80-nguyen18c, stich2021critical, JMLRasync, glasgow2022asynchronous} and discusses the gradient delay in the convergence rate \cite{pmlr-v139-avdiukhin21a, mishchenko2022asynchronous, koloskova2022sharper, wu2022delay}. Within federated learning, innovative asynchronous aggregation algorithms like FedAsync \cite{xie2019asynchronous} allow the server to update the global model once a client finishes local training, and FedBuff \cite{nguyen2022federated} introduces a buffered aggregation approach. There are also many works focusing on algorithms based on FedBuff with theoretical and/or empirical analysis \cite{toghani2022unbounded, ortega2023asynchronous, wang2023tackling}, and other aspects of asynchronous FL \cite{chen2020vafl, yang2022anarchic,bornstein2023swift}. Although adaptive FL and asynchronous FL have achieved the success of training large machine learning models with desirable numerical performance, the exploration of asynchronous updates in the context of adaptive FL has not been well-studied yet. In this paper, we start with the asynchronous update framework in adaptive FL and further integrate delay-adaptive learning rate scheduling into it. 

\vspace{-2pt}
\section{Preliminaries}

\textbf{Federated learning. } A general FL framework considers a distributed optimization problem across $N$ clients:\vspace{-1.9em}

\begin{small}
\begin{align}\label{eq:fl-basic}
    \min_{\bx \in \RR^d} f(\bx) := \frac{1}{N}\sum_{i=1}^N F_i(\bx) = \frac{1}{N} \sum_{i=1}^N \EE_{\xi_i\sim\cD_i} [F_i (\bx; \xi_i)],
\end{align}
\end{small}
where $\bx\in\mathbb{R}^d$ is the model parameter with $d$ dimensions, $F_i(\bx)$ is the loss function corresponding to client $i$, $\cD_i$ is the local data distribution on client $i$. The objective in Eq. \eqref{eq:fl-basic} can be interpreted as setting $p_i = \frac{1}{N}$ for all clients in another commonly used objective function in FL, i.e., $f(\bx) = \sum_{i=1}^N p_i \EE_{\xi_i\sim\cD_i} [F_i (\bx; \xi_i)]$ with $p_i \geq 0 $ and $\sum_{i=1}^N p_i = 1$. FedAvg \citep{mcmahan2017communication} is a typical synchronous FL algorithm to solve Eq. \ref{eq:fl-basic}, where in the $t$-th global round, each participating client $i$ performs local SGD updates as follows:
\begin{align}
    \bx_{t,k+1}^i = \bx_{t,k}^i - \eta_l \nabla F_i(\bx_{t,k}^i; \xi) \text{ and } \bx_{t,0}^i = \bx_t
\end{align}
where $\eta_l$ is the learning rate. After several local steps (e.g., $K$ steps of local training), the server performs a global averaging step after receiving all the updates from assigned clients in $\cS_t$, i.e., $\bx_{t+1} = \frac{1}{|\cS_t|}\sum_{i \in \cS_t} \bx_{t,K}^i$. 

\textbf{Adaptive optimization and its application to FL.} 
Several adaptive optimizers have been proposed to improve the convergence of SGD, such as Adagrad \cite{duchi2011adaptive}, RMSProp \cite{tieleman2012lecture}, Adam \cite{kingma2014adam} and its variant AMSGrad \cite{j.2018on}. 
In general machine learning optimization, Adam effectively inherits the benefits of both momentum and RMSProp optimizers, leading to better empirical performance in practical applications.

\citet{reddi2021adaptive} first introduced adaptive federated optimization, which applies the adaptive optimizers during the global aggregation steps in FL. FedAMSGrad \cite{tong2020effective} and FedAMS \cite{wang2022communication} further adjust the effective global learning rate in adaptive FL. Specifically, FedAdam and FedAMS take the idea of viewing the difference of local updates $\bDelta_t^{\text{sync}} = \frac{1}{|\cS_t|} \sum_{i \in \cS_t}\bDelta_t^{i,\text{sync}} = \frac{1}{|\cS_t|} \sum_{i \in \cS_t} (\bx_{t,K}^i - \bx_t) $ as a pseudo-gradient, and applies the Adam or AMSGrad optimizer when updating global model $\bx_{t+1}$ using $\bDelta_t^{\text{sync}}$, i.e., 
\begin{align*}
    & \bbm_t = \beta_1 \bbm_{t-1} + (1-\beta_1) \bDelta_t^{\text{sync}} \notag,\\
    & \bv_t = \beta_2 \bv_{t-1} + (1-\beta_2) \bDelta_t^{\text{sync}} \odot \bDelta_t^{\text{sync}} \notag,\\
    & \bx_{t+1}=\bx_t + \eta \frac{\bbm_t}{\sqrt{\bv_t} + \epsilon} \text{ (FedAdam)},\notag\\
    & \hat\bv_t = \max(\hat\bv_{t-1}, \bv_t), \bx_{t+1}=\bx_t + \eta \frac{\bbm_t}{\sqrt{\hat\bv_t} + \epsilon} \text{ (FedAMS)},
\end{align*}
where $\odot$ denotes the element-wise product for two vectors, and for vectors $\bx,\by \in \RR^d$, $\sqrt{\bx}, \bx/\by$, $\max(\bx, \by)$ denote the element-wise square root, division, and maximum operation of the vectors.

\textbf{Asynchronous updates in FL. }
In asynchronous FL, clients train the model asynchronously and update it to the server once it finishes several steps of local training. FedBuff \cite{nguyen2022federated} has improved the global update steps with the concept of buffer based on the initial FedAsync baseline \cite{xie2019asynchronous}. In FedBuff, it requires the framework maintain a given number (referred to as the concurrency $M_c$) of clients that are actively local training. At the $t$-th global round, after the client $i$ finishes local training, it sends its local update $\bm\Delta_t^i = \bx_{t-\tau,K}^i - \bx_{t-\tau}$ to the server, where $t-\tau$ is the global round where client $i$ starts local training and $0 \leq \tau \leq t$. The server simultaneously accumulates the model update $\bm\Delta_t^i$ to the global update direction $\bm\Delta_t \leftarrow \bm\Delta_t + \bm\Delta_t^i$, and sends the latest global model to a randomly selected client who is idle. When the number of accumulated updates reaches the given buffer size of $M$, the server updates the global model with the averaging $\bm\Delta_t/M$. Meanwhile, clients who have not finished their local training will continue their training based on the previously received global model, and are not affected by the global model updates on the server. During the training, the framework always maintains a fixed number ($M_c$) of clients who are conducting local training. This is achieved by having the server randomly sample an idle client for training each time a client completes its local training and sends its update to the server.

\paragraph{Discussion about synchronous and asynchronous methods.} 
Synchronous FL typically offers consistency and stability, i.e., all client updates are based on the same global model, and this consistency may lead to a more stable and predictable learning process. However, when there exist one or a few clients that are much slower than the majority of clients, which often happens in large-scale systems, synchronous FL can be inefficient since every client needs to wait for the slowest client before progressing with the next round of training. Asynchronous FL is more efficient when clients have system heterogeneity such as diverse computational capabilities or communication bandwidth. In FL, if the delay among clients is relatively uniform, synchronous FL tends to be more stable and efficient. Overall, the choice between synchronous and asynchronous FL hinges on specific needs and system characteristics. Synchronous FL is ideal in homogeneous systems, while asynchronous FL is advantageous in heterogeneous systems with potential straggler clients.

\section{Proposed Method: FADAS}
Although adaptive FL methods achieve promising convergence and generalization performance theoretically and empirically, the existing adaptive FL methods are restricted to synchronous settings, as the server needs to wait for all the assigned clients to finish their local updates for aggregation and then update the global model. However, those synchronous adaptive FL algorithms are susceptible to the presence of stragglers, where slower clients with insufficient computation or communication speed impede the progress of the global update.

To improve the efficiency and resiliency of adaptive FL in the presence of stragglers, we introduce FADAS, a \textbf{F}ederated \textbf{AD}aptive \textbf{AS}ynchronous optimization method. Similar to FedAdam and FedAMS, the proposed FADAS algorithm takes the model update difference from clients as a pseudo-gradient and it updates the global model following an Adam-like update scheme. Algorithm \ref{alg:fadas} summarizes the details.
FADAS keeps the local asynchronous training scheme as FedBuff and maintains the concept of concurrency and buffer size for flexible control of the number of active clients and the frequency of global model update. In FADAS, after the server aggregates to obtain model update difference $\bm\Delta_t$, it finds an adaptive update direction, whose components are computed based on the AMSGrad optimizer \cite{j.2018on} as follows:
\begin{align}\label{eq:fedadam}
\begin{cases}
    & \bbm_t = \beta_1 \bbm_{t-1} + (1-\beta_1) \bDelta_t, \\
    & \bv_t = \beta_2 \bv_{t-1} + (1-\beta_2) \bDelta_t \odot \bDelta_t, \\
    & \hat\bv_t = \max(\hat\bv_{t-1}, \bv_t).
\end{cases}
\end{align}
In general, FADAS enables clients to conduct local training in their own pace, and the server aggregates the asynchronous updates for global adaptive updates. It improves the training efficiency and scalability of over synchronous adaptive FL while inheriting the advantage of adaptive optimizer of reducing oscillations and stabilizing the optimization process.

\begin{algorithm}[ht!]
\caption{FADAS (with \colorbox{teal!20}{delay adaptation})}
  \label{alg:fadas}
  \begin{flushleft}
        \textbf{Input:} local learning rate $\eta_l$, global learning rate $\eta$, adaptive optimization parameters $\beta_1, \beta_2, \epsilon$, server concurrency $M_c$, buffer size $M$, \colorbox{teal!20}{delay threshold $\tau_c$};
        \end{flushleft}
  \begin{algorithmic}[1]
        \STATE Initialize model $\bx_1$, initialize $\bm\Delta_1=\zero$, $\bbm_0=\zero, \bv_0=\zero$, $m=0$ and sample a set of $\cM_0$ with size $M_c$ active clients to run \textbf{local SGD updates}. 
      \REPEAT
        \IF{receive client update}
        \STATE Server accumulates update from client $i$: $\bm\Delta_t \leftarrow \bm\Delta_t + \bm\Delta_{t}^{i}$ and set $m \leftarrow m+1$
        \STATE Sample another client $j$ from available clients
        \STATE Send the current model $\bx_t$ to client $j$, and run \textbf{local SGD updates} on client $j$
        \ENDIF
        \IF{$m=M$}
        \STATE $\bm\Delta_t \leftarrow \frac{\bm\Delta_t}{M} $
        \STATE Update $\bbm_t$, $\bv_t, \hat\bv_t $ by \eqref{eq:fedadam}
        \IF{delay-adaptive}
        \STATE \colorbox{teal!20}{Set $\eta_t$ to be delay-adaptive based on Eq. \eqref{eq:delay_adapt}}
        \ELSE
        \STATE $\eta_t = \eta $
        \ENDIF
        \STATE Update global model $\bx_{t+1}=\bx_t + \eta_t \frac{\bbm_t}{\sqrt{\hat\bv_t} + \epsilon}$
        \STATE Set $m \leftarrow 0$,  $\bm\Delta_{t+1} \leftarrow \zero$, $t \leftarrow t+1$
        \ENDIF
       
      \UNTIL{convergence}
  \end{algorithmic}
\end{algorithm}

Although FADAS applies asynchronous local training for adaptive FL, the global adaptive optimizer adjusts the global update direction only based on local updates but without considering the impact of asynchronous delay. Intuitively, a large asynchronous delay from a client means that this model update is made based on an outdated global model. This may lead to a negative effect on the convergence, and later we also verify this intuition in the theoretical analysis. This inspires us to apply a delay-adaptive learning rate adjustment to improve the resiliency of FADAS to stragglers with large delays. Specifically, we let the server track the delay for every received model update and adopt a delay-adaptive learning rate. We highlight the delay-adaptive steps in Algorithm \ref{alg:fadas} and those steps are executed with almost no extra overhead.

\textbf{Delay tracking.} In general, the server manages the delay record for each client through straightforward time-stamping. For example, the server records the global update round $t'$ when it broadcasts the current global model $\bx_{t'}$ to client $i$, the client conducts local training with $\bx_{t'}$. When the server receives the first $\bm\Delta_t^i$ from client $i$ at round $t \geq t'$, the gradient delay for $\bm\Delta_t^i$, which is $\tau_t^i = t - t'$, is updated and recorded on the server. 

\textbf{Delay-adaptive learning rate. }Assume that for each global update round $t$, clients in the set $\cM_t$ ( $|\cM_t|=M$) send updates to the server. The received model updates at global round $t$ have a maximum delay $\tau_t^{\max}$ defined as $\tau_t^{\max}:= \max\{\tau_t^i, i \in \cM_t\}$. Suppose we set up a delay threshold $\tau_c$, we can define a delay-adaptive learning rate as:
\begin{align}\label{eq:delay_adapt}
\eta_t = 
\begin{cases}
\eta & \text{if } \tau_t^{\max} \leq \tau_c, \\
\min\left\{\eta, \frac{1}{\tau_t^{\max}}\right\} & \text{if } \tau_t^{\max} > \tau_c.
\end{cases}
\end{align}

Intuitively, this design implies that we need to turn the learning rates down for the model update $\bm\Delta_t$ with larger current-step delays. Specifically, if the current-step maximum delay $\tau_t^{\max}$ is larger than a given threshold $\tau_c$, we scale down the learning rates for this step in proportional to $1/\tau_t^{\max}$ (also capped by a constant learning rate $\eta$) to avoid that the high-latency update worsens the convergence.
\textbf{Comparison with FedAsync ~\cite{xie2019asynchronous}. } FedAsync ~\cite{xie2019asynchronous}  also studies delay-adaptive weighted averaging during global model updates. In FedAsync, after the server receives a local model $\bx_{\text{new}}$, it updates $\bx_{t}$ based on $\bx_t = (1-\alpha_t) \bx_{t-1} + \alpha_t \bx_{\text{new}}$, and FedAsync includes a hinge strategy of $\alpha_t$ which is similar to our delay-adaptive strategy in Eq. \eqref{eq:delay_adapt}. However, unlike FedAsync, where the server updates the global model immediately upon receiving a new update from a client, FADAS updates the global model less frequently. In FADAS, the server accumulates $M$ local updates before a global update. 
Moreover, the convergence analysis in FedAsync did not consider their delay adaptation procedure,
while we provide a convergence analysis incorporating the effect of delay adaptation in the next section. 

\section{Theoretical Analysis}
In this section, we delve into the theoretical analysis of our proposed FADAS algorithm. We first introduce some common assumptions required for the analysis. Subsequently, we present the analysis in two parts: one focusing on FADAS without delay adaptation, as discussed in Section~\ref{subsec:fadas}, and the other on the delay-adaptive FADAS in Section~\ref{subsec:da_fadas}.
\begin{assumption}[Smoothness]\label{as:smooth}
Each objective function on the $i$-th worker $F_i(\bx)$ is $L$-smooth, i.e., $\forall \bx,\by \in \RR^d$, 
\begin{align*}
    \|\nabla F_i(\bx) - \nabla F_i(\by) \|\leq L\|\bx-\by\|.
\end{align*}
\end{assumption}

\begin{assumption}[Bounded Variance]\label{as:bounded-v} 
    Each stochastic gradient is unbiased and has a bounded local variance, i.e., for all $\bx, i \in [N]$, we have
    $\EE \big[ \|\nabla F_i(\bx;\xi)- \nabla F_i(\bx)\|^2\big] \leq \sigma^2$,
and the loss function on each worker has a global variance bound,
$\frac{1}{N}\sum_{i=1}^N \|\nabla F_i(\bx)-\nabla f(\bx)\|^2 \leq \sigma_g^2$.
\end{assumption}
Assumption \ref{as:smooth} and \ref{as:bounded-v} are standard assumptions in federated non-convex optimization literature \cite{li2019communication,yang2021achieving,reddi2021adaptive,wang2022communication,wang2023lightweight}. The global variance upper bound of $\sigma_g^2$ in Assumption~\ref{as:bounded-v} measures the data heterogeneity across clients, and a global variance of $\sigma_g^2 = 0$ indicates a uniform data distribution across clients. 

\begin{assumption}[Bounded Gradient]\label{as:bounded-g}
Each loss function on the $i$-th worker $F_i(\bx)$ has $G$-bounded stochastic gradient on $\ell_2$ norm, i.e., for all $\xi$, we have $\|\nabla F_i(\bx;\xi)\|\leq G$.
\end{assumption}
Assumption \ref{as:bounded-g} is necessary for adaptive gradient algorithms for both general \cite{kingma2014adam, chen2020closing}, distributed \cite{wang2022cdadam} and federated adaptive optimization \cite{reddi2021adaptive, wang2022communication, sun2023efficient}. This is because the effective global learning rate for adaptive gradient methods is $\frac{\eta}{\sqrt{\hat\bv_t}+ \epsilon}$, and we need a lower bound for $\big\|\frac{\eta}{\sqrt{\hat\bv_t}+ \epsilon}\big\|$ to guarantee that the effective learning rate does not vanish to zero. 

\begin{assumption}[Bounded Delay of Gradient Computation] \label{as:bound-g-delay}
    Let $\tau_t^i$ represent the delay for global round $t$ and client $i$ which is applied in Algorithm \ref{alg:fadas}. The delay $\tau_t^i$ is the difference between the current global round $t$ and the global round at which client $i$ started to compute the gradient. We assume that the maximum gradient delay (worst-case delay) is bounded, i.e., $\tau_{\max} = \max_{t \in [T], i \in [N]} \{\tau_t^i\} < \infty$. 
\end{assumption}
Assumption~\ref{as:bound-g-delay} is common in analyzing asynchronous and anarchic FL algorithms which incorporate the gradient delays into their algorithm design \cite{koloskova2022sharper, yang2021achieving, nguyen2022federated, toghani2022unbounded, wang2023tackling}. 

\begin{assumption}[Uniform Arrivals of Gradient Computation]\label{as:uniform_arrival}
    Let the set $\cM_t$ (with size $M$) include clients that transmit their local updates to the server in global round $t$. We assume that the clients' update arrivals are uniformly distributed, i.e., from a theoretical perspective, the $M$ clients in $\cM_t$ are randomly sampled without replacement from all clients $[N]$ according to a uniform distribution\footnote{This assumption is only used for theoretical analysis. Our experiments that show the advantage of FADAS empirically do not rely on this assumption. }.
\end{assumption}
\vspace{-2pt}
Assumption~\ref{as:uniform_arrival} is also discussed in Anarchic FL~\cite{yang2022anarchic}, which has been utilized to analyze the \mbox{AFA-CD} algorithm proposed therein. 
\subsection{Convergence Rate of FADAS}\label{subsec:fadas}
For expository convenience, in the following, we provide the theoretical convergence analysis of FADAS under the case of $\beta_1 = 0$. The theoretical analysis and the proof for the general case of $0 \leq \beta_1 < 1$ are provided in Appendix~\ref{sec:async-thm}. We define the average of the maximum delay over time as $\tau_{\avg} = \frac{1}{T} \sum_{t=1}^T \tau_t^{\max} = \frac{1}{T} \sum_{t=1}^T \max_{i \in [N]} \{\tau_t^i\}$ which is useful in our analysis.
\begin{theorem}\label{thm:fadas}
    Under Assumptions~\ref{as:smooth}--\ref{as:uniform_arrival}, let $T$ represent the total number of global rounds, $K$ be the number of local SGD training steps and $M$ be the number of the accumulated updates (buffer size) in each round. If the learning rate $\eta$ and $\eta_l$ satisfies $\eta \eta_l \leq \min \Big\{ \frac{\epsilon^2 M(N-1)}{180 C_G N(N-M) \tau_{\max} KL }, \frac{\sqrt{\epsilon^3 M(N-1)} }{12\sqrt{C_G N(M-1) \tau_{\max}^3 } KL} \Big\}, \eta_l \leq \frac{\sqrt{\epsilon}}{\sqrt{360 C_G \tau_{\max}}KL}$, then the global iterates $\{\bx_t\}_{t=1}^T$ of Algorithm~\ref{alg:fadas} satisfy
    \begin{align}\label{eq:fadas_full}
    & \frac{1}{T}\sum_{t=1}^T  \EE[\|\nabla f(\bx_t)\|^2] \notag\\
    & \leq \frac{4C_G}{\eta \eta_l K T}\cF + \frac{20 C_G \eta_l^2 K L^2 (\sigma^2 + 6K \sigma_g^2)}{\epsilon} \notag\\
    & \quad+ \bigg[\frac{8C_G \eta^2 \eta_l^2 K L^2 \tau_{\avg}\tau_{\max}}{M \epsilon^3} + \frac{12 C_G \eta \eta_l L}{M\epsilon^2}\bigg] \notag\\
    & \quad\cdot \bigg\{ \sigma^2 + \frac{N-M}{N-1}[15\eta_l^2 K^2 L^2 (\sigma^2 + 6K\sigma_g^2) + 3K\sigma_g^2] \bigg\},
    \end{align}
    where $\cF = f(\bx_1) - f_*$, $f_* =\min_{\bx} f(\bx) > -\infty $ and $C_G = \eta_l K G + \epsilon$.
\end{theorem}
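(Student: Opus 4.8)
The plan is to start from the $L$-smoothness of $f$ (which follows from Assumption~\ref{as:smooth}) and derive a per-round descent inequality that I then sum over $t=1,\dots,T$ and rearrange. With $\beta_1=0$ the momentum buffer collapses to $\bbm_t=\bDelta_t$, so the global step is $\bx_{t+1}-\bx_t=\eta\,\bDelta_t/(\sqrt{\hat\bv_t}+\epsilon)$ with $\bDelta_t=\frac{1}{M}\sum_{i\in\cM_t}\bm\Delta_t^i$ and $\bm\Delta_t^i=-\eta_l\sum_{k=0}^{K-1}\nabla F_i(\bx_{t-\tau_t^i,k}^i;\xi)$. The first observation I would record is that Assumption~\ref{as:bounded-g} forces $\|\bDelta_t\|_\infty\le\eta_l KG$, so every coordinate of $\sqrt{\hat\bv_t}+\epsilon$ lies in $[\epsilon,\,C_G]$ with $C_G=\eta_l KG+\epsilon$; this pins the effective learning rate between $\eta/C_G$ and $\eta/\epsilon$ and is precisely the source of the constant $C_G$ and the inverse powers of $\epsilon$ appearing in the statement.

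Next I would expand $\EE f(\bx_{t+1})\le f(\bx_t)+\EE\langle\nabla f(\bx_t),\bx_{t+1}-\bx_t\rangle+\frac L2\EE\|\bx_{t+1}-\bx_t\|^2$ and treat the inner-product term. Because the preconditioner $1/(\sqrt{\hat\bv_t}+\epsilon)$ depends on $\bDelta_t$ itself, I would first decouple it by replacing $\hat\bv_t$ with $\hat\bv_{t-1}$ and charging the difference to an error that telescopes: monotonicity of $\hat\bv_t$ (from the $\max$ in~\eqref{eq:fedadam}) makes $\sum_t\|1/(\sqrt{\hat\bv_{t-1}}+\epsilon)-1/(\sqrt{\hat\bv_t}+\epsilon)\|$ bounded independently of $T$. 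With the preconditioner frozen I can take the conditional expectation of $\bDelta_t$ in two stages: the expectation over the stochastic mini-batches turns $\bm\Delta_t^i$ into $-\eta_l\sum_k\nabla F_i(\bx_{t-\tau_t^i,k}^i)$, and the expectation over which clients land in $\cM_t$ — here Assumption~\ref{as:uniform_arrival} (uniform sampling without replacement) is what lets me replace the size-$M$ sum by the full average $\nabla f$ and produces the finite-population correction factor $\frac{N-M}{N-1}$ visible in the bound.

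The two genuinely new error sources I then have to control are \emph{local drift} and \emph{staleness}. For local drift I would run the standard FedAvg recursion on $\EE\|\bx_{t-\tau,k}^i-\bx_{t-\tau}\|^2$, using Assumptions~\ref{as:smooth}--\ref{as:bounded-v} to obtain a bound of order $\eta_l^2k^2(\sigma^2+6K\sigma_g^2)$; this is what yields the $\sigma^2+6K\sigma_g^2$ and $15\eta_l^2K^2L^2$ groupings. For staleness I would use $\|\nabla f(\bx_t)-\nabla f(\bx_{t-\tau_t^i})\|\le L\|\bx_t-\bx_{t-\tau_t^i}\|$ and expand $\bx_t-\bx_{t-\tau_t^i}$ as a telescoping sum of at most $\tau_t^i\le\tau_{\max}$ global increments; a Cauchy--Schwarz over this window pulls out one factor of the window length, and bounding each increment by $\tfrac{\eta}{\epsilon}\|\bDelta_s\|$ reduces everything to $\sum_s\|\bDelta_s\|^2$. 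The delicate accounting is in the sum over $t$: after swapping the order of summation, each increment index $s$ is charged by every round whose delay window covers it, and I would bound the \emph{number} of such rounds by $\tau_{\max}$ while keeping the explicit weight $\frac1T\sum_t\tau_t^{\max}=\tau_{\avg}$, which is exactly how the product $\tau_{\avg}\tau_{\max}$ (rather than a cruder $\tau_{\max}^2$) arises in~\eqref{eq:fadas_full}.

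Finally I would sum the per-round inequality over $t$, telescope $f(\bx_1)-f(\bx_{T+1})\le f(\bx_1)-f_*=\cF$, and collect terms. Several error contributions are themselves proportional to $\frac1T\sum_t\EE\|\nabla f(\bx_t)\|^2$ (they arise from writing stale or averaged gradients in terms of the current $\nabla f(\bx_t)$); the stated constraints on $\eta\eta_l$ and $\eta_l$ are exactly what is needed to make their coefficients at most $\tfrac12$ so they can be absorbed into the left-hand side, after which dividing by the leading factor $\eta\eta_l K/(4C_G)$ gives~\eqref{eq:fadas_full}. I expect the main obstacle to be the simultaneous bookkeeping of the three coupled effects — the nonlinear preconditioner, the finite-population sampling correction, and the staleness window — and in particular verifying that the learning-rate conditions are tight enough to absorb every $\|\nabla f(\bx_t)\|^2$-proportional remainder (including the one hidden inside the staleness bound through $\|\bDelta_s\|^2$) without degrading the advertised rate.
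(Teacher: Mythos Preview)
Your plan matches the paper's argument in its skeleton: the $L$-smoothness descent lemma, the two-sided bound $\epsilon\le\sqrt{\hat\bv_t}+\epsilon\le C_G$ yielding the $C_G$ and $\epsilon^{-k}$ constants, Assumption~\ref{as:uniform_arrival} to swap the $M$-client buffer for the full-client average (producing the $\tfrac{N-M}{N-1}$ correction), the standard FedAvg drift recursion for $\|\bx_{t-\tau,k}^i-\bx_{t-\tau}\|^2$, telescoping $\bx_t-\bx_{t-\tau_t^i}$ for staleness, and finally the learning-rate constraints to absorb every $\|\nabla f(\bx_t)\|^2$-proportional remainder into the left side.

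One genuine methodological difference: you propose to \emph{freeze} the preconditioner at $\hat\bv_{t-1}$ and charge the gap to a telescoping sum of $\big\|\tfrac{1}{\sqrt{\hat\bv_{t-1}}+\epsilon}-\tfrac{1}{\sqrt{\hat\bv_t}+\epsilon}\big\|_1$. The paper does \emph{not} decouple this way --- it keeps $\hat\bv_t$ in the inner product and passes the expectation over client sampling through directly (writing $\EE[\bDelta_t]=\bar\bDelta_t$), then applies the polarization identity $\langle a,b\rangle=\tfrac12(\|a\|^2+\|b\|^2-\|a-b\|^2)$ with the crude bounds $1/C_G$ and $1/\epsilon$ on each piece. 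Your freezing route is more careful about the correlation between $\bDelta_t$ and $\hat\bv_t$, but it costs an extra $O(C_G G^2 d/(\epsilon T))$ term from the telescoped difference, which is absent from~\eqref{eq:fadas_full}; the paper's cruder handling sidesteps this dimension-dependent residual. On the $\tau_{\avg}\tau_{\max}$ bookkeeping, the paper obtains it not by swapping summation order but simply via $\sum_t(\tau_t^i)^2\le\tau_{\max}\sum_t\tau_t^i\le\tau_{\max}T\tau_{\avg}$ after Cauchy--Schwarz has already produced the squared delay.
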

\begin{corollary}\label{cor:fadas}
    If we choose the global learning rate  $\eta = \Theta (\sqrt{M})$ and $\eta_l = \Theta\Big( \frac{\sqrt{\cF}}{\sqrt{TK(\sigma^2 + K\sigma_g^2)}} \Big)$ in Theorem \ref{thm:fadas}, then for sufficiently large $T$, the global iterates $\{\bx_t\}_{t=1}^T$ of Algorithm~\ref{alg:fadas} satisfy 
    \begin{align}\label{eq:fadas}
    & \frac{1}{T} \sum_{t=1}^T  \EE[\|\nabla f(\bx_t)\|^2] \leq \cO\bigg(\frac{\sqrt{\cF } \sigma}{\sqrt{TKM}} + \frac{\sqrt{\cF} \sigma_g}{\sqrt{TM}}  \notag\\
    & + \frac{\cF}{T} + \frac{\cF G}{T \sqrt{M}} + \frac{\cF \tau_{\max}\tau_{\avg}}{T}\bigg),
    \end{align}%
\end{corollary}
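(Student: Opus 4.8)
The plan is to substitute the prescribed step sizes $\eta = \Theta(\sqrt{M})$ and $\eta_l = \Theta\big(\sqrt{\cF}/\sqrt{TK(\sigma^2+K\sigma_g^2)}\big)$ directly into the bound of Theorem~\ref{thm:fadas} and retain only the dominant contributions, treating $L$, $\epsilon$, $\beta_1$, $\beta_2$ and other absolute constants as $\cO(1)$. Before simplifying, I would verify that these choices are admissible, i.e., that they meet the two step-size constraints of Theorem~\ref{thm:fadas}. Since $\eta\eta_l$ and $\eta_l$ both decay like $1/\sqrt{T}$, while the right-hand sides of the constraints tend to strictly positive constants as $C_G = \eta_l K G + \epsilon \to \epsilon$, both constraints hold once $T$ is large; this is exactly the ``sufficiently large $T$'' hypothesis in the statement.

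The reduction is driven by one algebraic fact: $\eta = \Theta(\sqrt M)$ makes $\eta^2/M = \Theta(1)$, so the explicit $1/M$ in the delay bracket is neutralized by $\eta^2$. The two identities I would repeatedly invoke are $\eta_l^2 K(\sigma^2 + K\sigma_g^2) = \Theta(\cF/T)$ and $\cF/(\eta\eta_l K T) = \Theta\big(\sqrt{\cF(\sigma^2 + K\sigma_g^2)/(TMK)}\big) = \cO\big(\sqrt{\cF}\,\sigma/\sqrt{TKM} + \sqrt{\cF}\,\sigma_g/\sqrt{TM}\big)$. The leading theorem term $\tfrac{4C_G\cF}{\eta\eta_l KT}$ is the only place where the gradient bound $G$ surfaces: writing $C_G = \eta_l K G + \epsilon$, the $\epsilon$-part produces the two stochastic terms just displayed, whereas the $\eta_l K G$-part collapses to $\tfrac{4G\cF}{\eta T} = \cO\big(\cF G/(T\sqrt M)\big)$, giving the fourth listed term. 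The second theorem term $\tfrac{20C_G\eta_l^2 K L^2(\sigma^2+6K\sigma_g^2)}{\epsilon}$ reduces to $\cO(\cF/T)$ after cancelling $\eta_l^2 K(\sigma^2+K\sigma_g^2)$, since $(\sigma^2+6K\sigma_g^2)/(\sigma^2+K\sigma_g^2)$ is bounded.

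It remains to handle the product of the delay bracket and the brace $S = \sigma^2 + \tfrac{N-M}{N-1}\big[15\eta_l^2 K^2 L^2(\sigma^2+6K\sigma_g^2) + 3K\sigma_g^2\big]$. I would first note that for large $T$ the $\eta_l^2$-term in $S$ is $\cO(1/T)$ and hence lower order, so $S = \cO(\sigma^2 + K\sigma_g^2)$. For the first bracket summand $\tfrac{8C_G\eta^2\eta_l^2 K L^2 \tau_{\avg}\tau_{\max}}{M\epsilon^3}$, applying $\eta^2/M = \Theta(1)$ and $\eta_l^2 K = \Theta(\cF/(T(\sigma^2+K\sigma_g^2)))$ and multiplying by $S = \cO(\sigma^2+K\sigma_g^2)$ yields precisely $\cO(\cF\tau_{\max}\tau_{\avg}/T)$, the fifth term. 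For the second summand $\tfrac{12C_G\eta\eta_l L}{M\epsilon^2}$, the factor $\eta/M = \Theta(1/\sqrt M)$ together with $\eta_l\sqrt{\sigma^2+K\sigma_g^2}$ reproduces the same $\cO\big(\sqrt{\cF}\,\sigma/\sqrt{TKM} + \sqrt{\cF}\,\sigma_g/\sqrt{TM}\big)$ stochastic terms already obtained, and is therefore absorbed into them.

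Because the computation is essentially bookkeeping, I do not expect a genuine obstacle. The one place demanding care is the treatment of $C_G$: it secretly carries the $G$-dependence and must be split as $\eta_l K G + \epsilon$ rather than uniformly bounded by $\cO(1)$, otherwise the $\cF G/(T\sqrt M)$ term is lost. A secondary subtlety is avoiding double-counting: several summands each generate an $\cO(\sqrt{\cF}\,\sigma/\sqrt{TKM})$ or $\cO(\cF/T)$ contribution that must be merged rather than listed separately, and one must confirm that every residual $\eta_l^2$-contribution (inside both $S$ and $C_G$) is strictly lower order than the retained $\cO(1/\sqrt{T})$ and $\cO(1/T)$ terms — which is precisely what the ``sufficiently large $T$'' condition guarantees.
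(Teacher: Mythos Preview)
Your proposal is correct and follows essentially the same approach as the paper: substitute $\eta=\Theta(\sqrt{M})$ and $\eta_l=\Theta\big(\sqrt{\cF}/\sqrt{TK(\sigma^2+K\sigma_g^2)}\big)$ into the Theorem~\ref{thm:fadas} bound, split $C_G=\eta_l K G+\epsilon$ to extract the $\cF G/(T\sqrt{M})$ term, use $\eta^2/M=\Theta(1)$ and $\eta_l^2 K(\sigma^2+K\sigma_g^2)=\Theta(\cF/T)$ to simplify, and absorb all residual $\eta_l^2$-contributions as lower order in $T$. Your explicit verification that the step-size constraints are satisfied for large $T$ is a welcome addition that the paper's own proof leaves implicit.
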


\begin{remark}\label{rm:fadas2}
    Corollary \ref{cor:fadas} suggests that given sufficiently large $T$ and relatively small worst-case delay $\tau_{\max}$,
    the proposed FADAS (without delay-adaptive learning rate) achieves a convergence rate of $\cO\big(\frac{1}{\sqrt{TM}}\big)$ w.r.t. $T$ and $M$.
\end{remark}    
\textbf{Comparison to asynchronous FL methods.}
Compared with the analysis for FedBuff in \citet{nguyen2022federated} and \citet{toghani2022unbounded}, our analysis for FADAS obtains a relaxed dependency on the worst-case gradient delay $\tau_{\max}$, and FADAS achieves a slightly better rate on non-dominant term than $\cO\big(\frac{1}{\sqrt{T}} + \frac{\tau_{\max}^2}{T} \big)$ obtained in \citet{toghani2022unbounded}. Moreover, \citet{wang2023tackling} also studied the convergence for FedBuff with relaxed requirements for $\tau_{\max}$, and our FADAS achieves a similar convergence of $\cO\big(\frac{1}{\sqrt{TM}} + \frac{\tau_{\max}\tau_{\avg}}{T} \big)$ as in \citet{wang2023tackling}. It is worthwhile to mention that recently CA$^2$FL \cite{wang2023tackling} improves the convergence of asynchronous FL under heterogeneous data distributions, while the improvement is obtained by using the cached variable on the server for global update calibration. 

Note that when $\tau_{\max}$ in Eq. \eqref{eq:fadas} is large, particularly in cases where $\tau_{\max} \geq \frac{\sqrt{T}}{\sqrt{M}}$, then $\frac{\tau_{\max} \tau_{\avg} }{T}$ becomes the dominant term in the convergence rate. This implies that a large worst-case delay $\tau_{\max}$ may lead to a worse convergence rate. 
In the next subsection, we demonstrate that the delay-adaptive learning rate strategy can relieve this problem and enhance FADAS with better resilience to large worst-case delays.

\subsection{Convergence Rate of Delay-adaptive FADAS} \label{subsec:da_fadas}

In the following, we provide the convergence analysis for delay-adaptive FADAS with $\beta_1 = 0$. To get started, we first define the median of the maximum delay over all communication rounds $[T]$:
\begin{align} \label{eq:tau_med}
    \tau_{\med} = \med\{\tau_1^{\max}, \tau_2^{\max},..., \tau_T^{\max}\}.
\end{align}
The definition of $\tau_{\med}$ implies that the number of global update rounds that have a maximum delay greater than $\tau_{\med}$ is less than half of the total number of global updates $T$. With this definition, we present the following theorem characterizing the convergence rate of delay-adaptive FADAS.

\begin{theorem}\label{thm:fadas_da}
    Under Assumptions~\ref{as:smooth}--\ref{as:uniform_arrival}, let $T$ be the total number of global rounds, $K$ be the number of local SGD training steps and $M$ be the number of the buffer size in each round. If the learning rate $\eta$ and $\eta_l$ satisfies $\eta \eta_l \leq \min \Big\{ \frac{\epsilon^2 M(N-1)}{60 C_G N(N-M) \tau_{\max} KL }, \frac{\sqrt{\epsilon^3 M(N-1)} }{12\sqrt{C_G N(M-1) \tau_{\max}^3 } KL} \Big\}, \eta_l \leq \frac{\sqrt{\epsilon}}{\sqrt{360 C_G \tau_{\max}}KL}$ and $\eta \leq \frac{\sqrt{M}}{\tau_c}$, then the global iterates $\{\bx_t\}_{t=1}^T$ of Algorithm~\ref{alg:fadas} satisfy
    \begin{align}\label{eq:fadas_d_full}
    & \frac{1}{\sum_{t=1}^T \eta_t } \sum_{t=1}^T \eta_t \EE[\|\nabla f(\bx_t)\|^2]\notag\\
    & \leq \frac{4C_G}{\eta \eta_l K T}\cF + \frac{20 C_G \eta_l^2 K L^2 (\sigma^2 + 6K \sigma_g^2)}{\epsilon} \notag\\
    & + \frac{8C_G \eta^3 \eta_l^2 K L^2 T\tau_{\avg}}{M \epsilon^3 \sum_{t=1}^T \eta_t } \sigma^2 + \frac{8C_G \eta^2 \eta_l^2 K L^2 T\tau_{\avg}}{\sqrt{M}\epsilon^3 \sum_{t=1}^T \eta_t } \notag\\
    & \cdot \frac{N-M}{N-1}[15\eta_l^2 K^2 L^2 (\sigma^2 + 6K\sigma_g^2) + 3K\sigma_g^2] + \frac{4 C_G \eta \eta_l L}{M\epsilon^2} \notag\\
    & \cdot \bigg\{\sigma^2 + \frac{N-M}{N-1}[15\eta_l^2 K^2 L^2 (\sigma^2 + 6K\sigma_g^2) + 3K\sigma_g^2] \bigg\},
    \end{align}
    where $\cF = f(\bx_1) - f_*$, $f_* =\min_{\bx} f(\bx) > -\infty $ and $C_G = \eta_l K G + \epsilon$.
\end{theorem}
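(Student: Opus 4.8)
The plan is to re-run the descent-based argument behind Theorem~\ref{thm:fadas}, but to carry the per-round learning rate $\eta_t$ through every estimate and to invoke the delay-adaptive rule \eqref{eq:delay_adapt} exactly where the staleness error is bounded. Since $\beta_1=0$ the global update is $\bx_{t+1}=\bx_t+\eta_t\frac{\bDelta_t}{\sqrt{\hat\bv_t}+\epsilon}$ with $\bDelta_t=\frac1M\sum_{i\in\cM_t}\bDelta_t^i$ and $\bDelta_t^i=\bx_{t-\tau_t^i,K}^i-\bx_{t-\tau_t^i}$. First I would apply $L$-smoothness (Assumption~\ref{as:smooth}) to obtain the one-step bound
\begin{align*}
\EE[f(\bx_{t+1})] &\leq \EE[f(\bx_t)] + \eta_t\,\EE\Big\langle \nabla f(\bx_t),\, \tfrac{\bDelta_t}{\sqrt{\hat\bv_t}+\epsilon}\Big\rangle + \tfrac{L\eta_t^2}{2}\,\EE\Big\|\tfrac{\bDelta_t}{\sqrt{\hat\bv_t}+\epsilon}\Big\|^2,
\end{align*}
and then use the bounded-gradient Assumption~\ref{as:bounded-g}, which forces $\|\bDelta_t^i\|\leq\eta_l KG$ and hence pins the adaptive preconditioner into $\epsilon\leq\sqrt{\hat\bv_t}+\epsilon\leq C_G$ coordinate-wise with $C_G=\eta_l KG+\epsilon$. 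This lets me extract a factor $1/C_G$ from the inner product to expose the useful $-\|\nabla f(\bx_t)\|^2$ contribution, and bound the quadratic term by $\frac{1}{\epsilon^2}\|\bDelta_t\|^2$ (the source of the last term in \eqref{eq:fadas_d_full}).

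Next I would decompose the buffered pseudo-gradient. Taking expectation over the uniformly sampled set $\cM_t$ (Assumption~\ref{as:uniform_arrival}) splits $\bDelta_t$ into its mean---an average of \emph{stale} full-gradient steps taken at the outdated iterates $\bx_{t-\tau_t^i}$---plus client-drift from the $K$ local SGD steps and stochastic/sampling noise. The drift is handled as in the synchronous analysis and contributes the factor $(\sigma^2+6K\sigma_g^2)$ weighted by $\eta_l^2KL^2$, while sampling without replacement from $[N]$ produces the $\frac{N-M}{N-1}$ prefactor and the $\frac1M$ scaling on the variance blocks $\sigma^2$ and $3K\sigma_g^2$. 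The remaining task is to relate each stale gradient $\nabla f(\bx_{t-\tau_t^i})$ back to $\nabla f(\bx_t)$; by smoothness this costs $L\|\bx_t-\bx_{t-\tau_t^i}\|$, and since $\bx_t-\bx_{t-\tau_t^i}=\sum_{s=t-\tau_t^i}^{t-1}\eta_s\frac{\bDelta_s}{\sqrt{\hat\bv_s}+\epsilon}$, Cauchy--Schwarz over the delay window gives
\begin{align*}
\|\bx_t-\bx_{t-\tau_t^i}\|^2 \;\leq\; \frac{\tau_t^i}{\epsilon^2}\sum_{s=t-\tau_t^i}^{t-1}\eta_s^2\,\|\bDelta_s\|^2 \;\leq\; \frac{\tau_t^{\max}}{\epsilon^2}\sum_{s=t-\tau_t^{\max}}^{t-1}\eta_s^2\,\|\bDelta_s\|^2.
\end{align*}

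The crux---and the step where the adaptive rule earns its keep---is bounding the accumulated staleness after summing over $t$. In the non-adaptive proof each $\eta_s=\eta$, so the double sum over $t$ and its window forces the product $\tau_{\avg}\tau_{\max}$ seen in Theorem~\ref{thm:fadas}. Here I would instead exploit \eqref{eq:delay_adapt}: whenever $\tau_t^{\max}>\tau_c$ the rule caps $\eta_t\leq1/\tau_t^{\max}$, so $\eta_t\tau_t^{\max}\leq1$, and whenever $\tau_t^{\max}\leq\tau_c$ the constraint $\eta\leq\sqrt M/\tau_c$ gives $\eta_t\tau_t^{\max}=\eta\tau_t^{\max}\leq\eta\tau_c\leq\sqrt M$; hence $\eta_t\tau_t^{\max}\leq\sqrt M$ in every round. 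Feeding this into the window bound and swapping the order of summation (so each $\eta_s^2\|\bDelta_s\|^2$ is charged to the windows that contain $s$, with total window mass $\sum_{t=1}^T\tau_t^{\max}=T\tau_{\avg}$) removes the worst-case factor $\tau_{\max}$ and replaces it by the controlled product $\eta_t\tau_t^{\max}$ together with $T\tau_{\avg}$. This is what converts the single delay term $\frac{\eta^2\eta_l^2KL^2\tau_{\avg}\tau_{\max}}{M\epsilon^3}$ of Theorem~\ref{thm:fadas} into the two delay terms of \eqref{eq:fadas_d_full}, namely $\frac{\eta^3\eta_l^2KL^2T\tau_{\avg}}{M\epsilon^3\sum_t\eta_t}\sigma^2$ and the heterogeneity block scaling as $\frac{\eta^2\eta_l^2KL^2T\tau_{\avg}}{\sqrt M\epsilon^3\sum_t\eta_t}\cdot\frac{N-M}{N-1}[\cdots]$, where the $\sigma^2$ and heterogeneity parts pick up $1/M$ versus $1/\sqrt M$ from the cap $\eta_t\tau_t^{\max}\leq\sqrt M$ combined with the $\frac1M$ buffer averaging.

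Finally I would telescope the one-step inequality over $t=1,\dots,T$: the function values collapse to $\cF=f(\bx_1)-f_*$, the drift and variance contributions accumulate as above, and---because each descent step carries the factor $\eta_t$---the gradient-norm terms appear as $\sum_t\eta_t\,\EE\|\nabla f(\bx_t)\|^2$. Dividing by $\sum_t\eta_t$ yields the $\eta_t$-weighted average on the left of \eqref{eq:fadas_d_full}; for the $\cF$ contribution I would additionally use that at least half the rounds keep $\eta_t=\eta$ (guaranteed once $\tau_c\geq\tau_{\med}$ by the defining property of $\tau_{\med}$ in \eqref{eq:tau_med}), so $\sum_t\eta_t=\Omega(\eta T)$ and the term becomes $\frac{4C_G}{\eta\eta_lKT}\cF$. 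I expect the crux (the double-sum reorganization) to be the principal obstacle: one must verify that the per-round cap $\eta_t\leq1/\tau_t^{\max}$, the global bound $\eta\leq\sqrt M/\tau_c$, and the $\eta_t$-weighting on the left remain mutually consistent through the swap, and that the two variance components land with precisely the powers $\eta^3/M$ and $\eta^2/\sqrt M$ recorded in \eqref{eq:fadas_d_full}.
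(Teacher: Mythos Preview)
Your proposal is correct and matches the paper's proof: the decisive step is precisely the cap $\eta_t\tau_t^{\max}\le\sqrt M$ (from \eqref{eq:delay_adapt} together with $\eta\le\sqrt M/\tau_c$), applied to the constant-variance piece of the staleness error to produce the $\eta^3/M$ and $\eta^2/\sqrt M$ scalings in \eqref{eq:fadas_d_full}. Two small differences worth flagging. First, the paper does not literally swap the double sum; on the constant-summand piece it bounds $\sum_{s=t-\tau_t^i}^{t-1}\eta_s^2\le\tau_t^i\eta^2$ first, then applies $\eta_t\tau_t^i\le\sqrt M$, and finally $\frac1N\sum_i\sum_t\tau_t^i\le T\tau_{\avg}$---your swap works too, but only because that summand is constant. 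Second, the gradient-norm-dependent parts of the staleness (the delay-adaptive analogs of $A_3,A_4$ in the proof of Theorem~\ref{thm:fadas}) still carry $\tau_{\max}^2$ and $\tau_{\max}^3$ and are \emph{not} improved by the cap; they are absorbed into the left-hand $\sum_t\eta_t\EE\|\nabla f(\bx_t)\|^2$ via the stated $\tau_{\max}$-dependent constraints on $\eta\eta_l$ and $\eta_l$, which is exactly what resolves the consistency worry you raise at the end. Finally, the median lower bound $\sum_t\eta_t\ge\eta T/2$ that you invoke for the $\cF$ term is in fact deferred to Corollary~\ref{cor:fadas_da}; the paper's proof of the theorem itself stops with $\frac{4C_G}{\eta_lK\sum_t\eta_t}\cF$.
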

\begin{corollary}\label{cor:fadas_da}
    If we pick $\tau_c = \tau_{\med}$, the global learning rate  $\eta = \Theta (\sqrt{M}/\tau_c)$ and $\eta_l = \Theta\big(\frac{ \tau_c \sqrt{\cF}}{\sqrt{TK(\sigma^2 + K\sigma_g^2)}} \big)$, then for sufficiently large $T$, the global iterates $\{\bx_t\}_{t=1}^T$ of Algorithm \ref{alg:fadas} satisfy
    \begin{align}\label{eq:fadas_da}
    & \frac{1}{\sum_{t=1}^T \eta_t } \sum_{t=1}^T \eta_t \EE[\|\nabla f(\bx_t)\|^2] \leq \cO \bigg(\frac{\sqrt{\cF} \sigma}{\sqrt{TKM}} + \frac{\sqrt{\cF} \sigma_g}{\sqrt{TM}} \notag\\
    & + \frac{\cF G \tau_c }{T \sqrt{M}} + \frac{\cF \tau_{\avg}}{T} + \frac{\cF (\tau_c^2 + \tau_c \tau_{\avg})}{T}\bigg).
   \end{align}
\end{corollary}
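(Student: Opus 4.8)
The plan is to derive Corollary \ref{cor:fadas_da} by substituting the prescribed learning rates into the finite-time bound \eqref{eq:fadas_d_full} of Theorem \ref{thm:fadas_da}, after first controlling the weighted normalization $\sum_{t=1}^T \eta_t$ that distinguishes the delay-adaptive analysis from the non-adaptive one. The only genuinely new ingredient compared with Corollary \ref{cor:fadas} is handling the weight-dependent factors $\frac{T\tau_{\avg}}{\sum_t \eta_t}$ and the leading $\frac{1}{\sum_t \eta_t}$; everything else is a scaling bookkeeping exercise. I regard the treatment of $\sum_t\eta_t$ as the crux of the argument.

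First I would lower bound $\sum_{t=1}^T \eta_t$. Choosing the threshold $\tau_c = \tau_{\med}$ in \eqref{eq:delay_adapt} and invoking the defining property of the median in \eqref{eq:tau_med}, at least $T/2$ of the rounds satisfy $\tau_t^{\max}\le \tau_{\med}=\tau_c$, and for each such round the schedule \eqref{eq:delay_adapt} returns $\eta_t=\eta$; since $\eta_t\le\eta$ always, this gives $\sum_{t=1}^T \eta_t \ge \frac{T}{2}\eta$, hence $\frac{1}{\sum_t \eta_t}\le \frac{2}{T\eta}$ and $\frac{T}{\sum_t \eta_t}\le \frac{2}{\eta}$. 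This single inequality is exactly the mechanism that decouples the rate from $\tau_{\max}$ and replaces its role by $\tau_c$ and $\tau_{\avg}$, which is how delay adaptation improves on Corollary \ref{cor:fadas}.

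Second, I would substitute $\eta=\Theta(\sqrt{M}/\tau_c)$ and $\eta_l=\Theta\!\big(\tau_c\sqrt{\cF}/\sqrt{TK(\sigma^2+K\sigma_g^2)}\big)$ term by term in \eqref{eq:fadas_d_full}. For ``sufficiently large $T$'' I use $\eta_l=\Theta(1/\sqrt{T})\to 0$, so $C_G=\eta_l KG+\epsilon\to\epsilon=\Theta(1)$ and every curvature factor of the form $15\eta_l^2 K^2 L^2(\sigma^2+6K\sigma_g^2)$ inside the bracketed heterogeneity terms becomes lower order than the surviving $3K\sigma_g^2$. The optimization term $\frac{4C_G\cF}{\eta\eta_l KT}$ must be split as $\frac{4\epsilon\cF}{\eta\eta_l KT}+\frac{4G\cF}{\eta T}$: the first piece yields $\frac{\sqrt{\cF}\sigma}{\sqrt{TKM}}+\frac{\sqrt{\cF}\sigma_g}{\sqrt{TM}}$ after $\sqrt{\sigma^2+K\sigma_g^2}\le \sigma+\sqrt{K}\sigma_g$, while the second piece is precisely the source of $\frac{\cF G\tau_c}{T\sqrt{M}}$. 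The $\eta_l^2$ curvature term contributes $\frac{\cF\tau_c^2}{T}$; the two $\tau_{\avg}$ terms, after applying $\frac{T}{\sum_t\eta_t}\le\frac2\eta$, collapse to $\frac{\cF\tau_{\avg}}{T}$ and $\frac{\cF\tau_c\tau_{\avg}}{T}$ respectively; and the final $O(\eta\eta_l)$ line reproduces the already-present $1/\sqrt{TKM}$ rate. Collecting these and absorbing the constants $L,\epsilon,(N-M)/(N-1)$ and the ratio $(\sigma^2+6K\sigma_g^2)/(\sigma^2+K\sigma_g^2)\le 6$ into $\cO(\cdot)$ gives exactly \eqref{eq:fadas_da}.

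Finally, I would verify feasibility of the chosen rates against the hypotheses of Theorem \ref{thm:fadas_da}: the stated $\eta=\Theta(\sqrt{M}/\tau_c)$ is consistent with the required $\eta\le \sqrt{M}/\tau_c$ by fixing the hidden constant to be at most one, and for large $T$ the products $\eta\eta_l=\Theta\!\big(\sqrt{M\cF}/\sqrt{TK(\sigma^2+K\sigma_g^2)}\big)\to 0$ and $\eta_l\to 0$ eventually satisfy the remaining upper bounds on $\eta\eta_l$ and $\eta_l$. The main obstacle is not the algebra but making the median-based lower bound $\sum_t\eta_t\ge \frac T2\eta$ airtight and correctly tracking how it converts the $\tau_{\max}$-scaled weighted terms of \eqref{eq:fadas_d_full} into the $\tau_c$ and $\tau_{\avg}$ dependence that appears in \eqref{eq:fadas_da}.
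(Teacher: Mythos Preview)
Your proposal is correct and follows essentially the same route as the paper: the key step is the median-based lower bound $\sum_{t=1}^T\eta_t\ge \tfrac{T}{2}\eta$ when $\tau_c=\tau_{\med}$, after which both you and the paper substitute $\eta=\Theta(\sqrt{M}/\tau_c)$ and $\eta_l=\Theta\big(\tau_c\sqrt{\cF}/\sqrt{TK(\sigma^2+K\sigma_g^2)}\big)$ into the bound of Theorem~\ref{thm:fadas_da} and track each term to its contribution in \eqref{eq:fadas_da}. Your explicit splitting of $C_G=\eta_lKG+\epsilon$ and your feasibility check of the learning-rate constraints are slightly more detailed than the paper's presentation, but the argument is the same.
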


\begin{remark}\label{rm:fadas2_da}
    Corollary \ref{cor:fadas_da} suggests that with sufficiently large $T$, delay-adaptive FADAS also achieves a convergence rate of $\cO\big( \frac{1}{\sqrt{TM}}\big)$ w.r.t. $T$ and $M$. 
\end{remark}  

\begin{remark} \label{rm:fadas3_da}
    Compared to the convergence rate in Corollary~\ref{cor:fadas}, the convergence rate in Corollary~\ref{cor:fadas_da} does not rely on the (possibly large) worst-case delay $\tau_{\max}$. In cases where $\tau_c = \tau_{\med} \approx \tau_{\avg} \ll \tau_{\max}$, Corollary \ref{cor:fadas_da} relaxes the requirement from $\tau_{\max}$ to $\tau_{\med}$ for achieving the desired convergence rate. Since $\tau_{\med}$ describes the median of $\tau_t^{\max} = \max_{ i \in [N]} \{\tau_t^i\}$ in each round $t$, the  convergence rate in Corollary \ref{cor:fadas_da} is less sensitive to stragglers who may cause a large worst-case delay in the system. 
\end{remark}

\section{Experiments}
We explore the performance of our proposed FADAS algorithm through experiments on vision and language tasks, using the CIFAR-10/100 \cite{krizhevsky2009learning} datasets with ResNet-18 model \cite{he2016deep} for vision tasks, and applying the pre-trained BERT base model \cite{devlin2018bert} for fine-tuning several datasets from the GLUE benchmark dataset \cite{wang2018glue} for language tasks. We compare our proposed FADAS algorithm against asynchronous FL baselines, such as FedBuff (without differential privacy) \cite{nguyen2022federated} and FedAsync \cite{xie2019asynchronous}, a synchronous SGD-based FL baseline FedAvg \cite{mcmahan2017communication}, and a synchronous adaptive FL baseline FedAMS \cite{wang2022communication}. We summarize some crucial implementation details in the following, and we leave some additional results and experiment details to Appendix \ref{sec:appendix_exp}. Our code can be found at \url{https://github.com/yujiaw98/FADAS}.

\textbf{Overview of vision tasks' implementation. }
We set up a total of 100 clients for the \textit{mild delay} scenario, in which the concurrency $M_c = 20$ and the buffer size $M=10$ by default. We also set up a total of 50 clients for the \textit{large worst-case delay} scenario, with $M_c = 25$ and $M=5$ correspondingly. 
For both settings, we partition the data on clients based on the Dirichlet distribution following \citet{Wang2020Federated,wang2020tackling}, and the parameter $\alpha$ used in Dirichlet sampling determines the degree of data heterogeneity. We apply two levels of data heterogeneity with $\alpha=0.1$ and $\alpha=0.3$. Each client conducts two local epochs of training, and the mini-batch size is 50 for each client. The local optimizer for all methods is SGD with weight decay $10^{-4}$, and we grid search the global and local learning rates individually for each method.  

\textbf{Overview of language tasks' implementation. }
Considering the total number of data samples in the language classification datasets, we set up a total of 10 clients, partition the data on clients based on the labels, and we apply a heterogeneity level of $\alpha=0.6$. Each client conducts one local epoch and the mini-batch size is 32 for each client. The local optimizer for all methods is SGD with weight decay $10^{-4}$, and we grid search the global and local learning rates individually for each method. We set the concurrency $M_c = 5$ and buffer size $M=3$ by default. We employ the widely-used low-rank adaptation method, LoRA \cite{hu2021lora}, as a parameter-efficient fine-tuning strategy for our language classification tasks. This involves freezing the original pre-trained weight matrix $\bW_0 \in \RR^{d\times k}$ and fine-tuning $\Delta \bW$ through low-rank decomposition, where $\bW = \bW_0 + \alpha_{\text{LoRA}} \Delta \bW = \bW_0 + \alpha_{\text{LoRA}}\bB \bA$, $\bB \in \RR^{d\times r}$, and $\bA \in \RR^{r\times k}$, and we adopt $r=1$ and $\alpha_{\text{LoRA}} = 8$ in our experiments. 

\textbf{Overview of delay simulation. } In our experiments, we simulate the asynchronous environment as follows. 
Initially, we partition clients into three categories, including \texttt{Small}, \texttt{Medium}, and \texttt{Large} delay, at the start of training and tag them with a label reflective of their delay magnitude.  This partitioning was executed via a Dirichlet sampling process controlled by the parameter $\gamma$. A smaller $\gamma$ value corresponds to a higher proportion of clients experiencing large delays. Unless otherwise specified in subsequent experiments, we set $\gamma = 1$. To mimic actual wall-clock running times within each delay category, we apply uniform sampling at each round for each client. We adopt the following uniform distributions to simulate wall-clock running time for both the \textit{large worst-case delay} and \textit{mild delay} settings as shown in Table \ref{tab:delay}. 

\subsection{Results on Vision Tasks}
\textbf{Large worst-case delay. } Under this setting, we simulate the wall-clock running time by letting a small proportion of clients have more significant delays than other clients.  Tables \ref{tab:large_delay_cifar10} and \ref{tab:large_delay_cifar100} show the overall performance of training the ResNet-18 model on CIFAR-10 and CIFAR-100, respectively. The results show that FADAS, especially with a delay-adaptive learning rate, offers significant advantages in terms of test accuracy. Compared to FedAsync and FedBuff, both FADAS methods achieve higher accuracy, and FADAS with delay-adaptive learning rates is shown to be more stable during the learning process with lower standard derivation. In these experiments, we conduct a total of $T=500$ global communication rounds, and the maximum delay $\tau_{\max} = 127$, which even more than a quarter of the total number of global communication rounds. Notably, as seen in Tables \ref{tab:large_delay_cifar10} and \ref{tab:large_delay_cifar100}, FedAsync shows severely fluctuating in test accuracy, suggesting that it may be less reliable in situations with large worst-case delays.
\vspace{-5pt}
\begin{table}[H]
    \centering
    \caption{Overview for wall-clock delay simulation (in units of 10 seconds). }
    \vskip 0.05in
    {\small
    \begin{tabular}{llll}
        \toprule
        Delay & \texttt{Small} & \texttt{Medium} & \texttt{Large} \\
        \midrule
        \textit{Large worst-case} & $U(1,2)$ & $U(3, 5)$ & $U(50, 80)$ \\
        \textit{Mild} & $U(1,2)$ & $U(3, 5)$ & $U(5, 8)$ \\
        \bottomrule
    \end{tabular}
    }
    \label{tab:delay}
\end{table}
\vspace{-5pt}

\textbf{Mild delay.}
Under this setting, we simulate the wall-clock running time for clients by assuming that all clients can finish their local training within a comparable duration (see Table~\ref{tab:delay}). Tables \ref{tab:cifar10} and \ref{tab:cifar100} show the overall performance of training the ResNet-18 model on CIFAR-10 and CIFAR-100 under mild delay. 
The results highlight that both FADAS and its delay-adaptive variant achieve superior test accuracy than FedAsync and FedBuff. 
\begin{table}[t]
    \centering
    \caption{The test accuracy on training ResNet-18 model on CIFAR-10 dataset with two data heterogeneity levels in a \textit{large worst-case} delay scenario for 500 communication rounds. We report the average accuracy and standard derivation over the last 5 rounds, and we abbreviate delay-adaptive FADAS to FADAS$_{\text{da}}$ in this and subsequent tables.}
    \vskip 0.05in
    {\small
    \begin{tabular}{l|cc}
    \toprule 
    & Dir(0.1) & Dir (0.3)\\
    Method & Acc. \& std. & Acc. \& std. \\
    \midrule
    FedAsync & 50.92 $\pm$ 5.03 & 75.3 $\pm$ 6.18 \\
    FedBuff & 38.68 $\pm$ 8.16 & 51.32 $\pm$ 4.43 \\
    FADAS & 72.0 $\pm$ 0.94 & 73.27 $\pm$ 1.37 \\
    FADAS$_{\text{da}}$ & \textbf{73.96} $\pm$ 3.54 & \textbf{79.68} $\pm$ 2.14 \\
    \bottomrule
    \end{tabular}
    }
    \label{tab:large_delay_cifar10}
\end{table}

\begin{table}[t]
    \centering
    \caption{The test accuracy on training ResNet-18 model on CIFAR-100 dataset with two data heterogeneity levels in a \textit{large worst-case} delay scenario for 500 communication rounds. }
    \vskip 0.05in
    {\small
    \begin{tabular}{l|cc}
    \toprule 
    & Dir(0.1) & Dir (0.3)\\
    Method & Acc. \& std. & Acc. \& std. \\
    \midrule
    FedAsync & 46.51 $\pm$ 4.76 &  38.55 $\pm$ 7.36\\
    FedBuff & 13.04 $\pm$ 5.5 & 18.63 $\pm$ 5.13 \\
    FADAS & 47.84 $\pm$ 0.59 & 53.64 $\pm$ 0.52 \\
    FADAS$_{\text{da}}$ & \textbf{50.31} $\pm$ 1.0 & \textbf{57.18} $\pm$ 0.31 \\
    \bottomrule
    \end{tabular}
    }
    \label{tab:large_delay_cifar100}
\end{table}

\subsection{Results on Language Tasks}
The performance for fine-tuning the BERT base model on three GLUE benchmark datasets, RTE, MRPC, and SST-2, under mild delay conditions are shown in Table \ref{tab:bert}, which illustrates that FADAS and its delay-adaptive counterpart consistently outperform the results of FedAsync and FedBuff across the three datasets. FedAsync achieves good performance in SST-2 but is less satisfactory in RTE and MRPC, and FedBuff presents an overall lower accuracy with larger standard derivation compared with FADAS. The delay-adaptive FADAS shows parity with the standard FADAS algorithm under mild delays. Moreover, FADAS achieves significant accuracy improvements on RTE and MRPC datasets against the SGD-based asynchronous FL baselines, further demonstrating the intuition of developing the FADAS method.

\textbf{Running time speedup. }
Table \ref{tab:running_time} demonstrates the efficiency of FADAS and its delay-adaptive variant by comparing their performance with two synchronous FL methods in reaching the target validation accuracy across different dataset. Notably, FADAS consistently outperforms FedAvg and FedAMS in terms of wall-clock running time, requiring significantly fewer time units to reach the desired accuracy levels. In vision classification tasks such as CIFAR-10 and CIFAR-100, the standard FADAS shows a significant reduction in training time, achieving 8 $\times$ speedup than FedAvg and more than 2.5 $\times$ speedup than FedAMS. The delay-adaptive FADAS shows similar results as the standard version. For language classification tasks, FADAS also improves the training time compared with FedAMS and FedAvg. These results highlight the scalability and efficiency of FADAS, especially when considering the computational constraints in practical FL environments.

\begin{table}[t]
    \centering
    \caption{The test accuracy on training ResNet-18 model on CIFAR-10 dataset with two data heterogeneity levels under \textit{mild delay} scenario. }
    \vskip 0.05in
    {\small
    \begin{tabular}{l|cc}
    \toprule 
    & Dir(0.1) & Dir (0.3)\\
    Method & Acc. \& std. & Acc. \& std. \\
    \midrule
    FedAsync & 42.48 $\pm$ 4.93& 71.76 $\pm$ 3.85\\
    FedBuff & 72.15 $\pm$ 2.71 & 79.82 $\pm$ 3.25 \\
    FADAS & 77.68 $\pm$ 2.32 & 82.93 $\pm$ 0.81\\
    FADAS$_{\text{da}}$& \textbf{78.93} $\pm$ 0.83 & \textbf{83.91} $\pm$ 0.54\\
    \bottomrule
    \end{tabular}
    }
    \label{tab:cifar10}
\end{table}
\vspace{-0.2in}
\begin{table}[t]
    \centering
    \caption{The test accuracy on training ResNet-18 model on CIFAR-100 dataset with two data heterogeneity levels under \textit{mild delay} scenario. }
    \vskip 0.05in
    {\small
    \begin{tabular}{l|cc}
    \toprule 
    & Dir(0.1) & Dir (0.3)\\
    Method & Acc. \& std. & Acc. \& std. \\
    \midrule
    FedAsync & 45.26 $\pm$ 7.04  & 53.41 $\pm$ 8.94 \\
    FedBuff & 53.70 $\pm$ 1.13 & 56.26 $\pm$ 1.64 \\
    FADAS & \textbf{57.37} $\pm$ 0.47 & \textbf{61.22} $\pm$ 0.31\\
    FADAS$_{\text{da}}$ & 57.21 $\pm$ 0.45 & 60.34 $\pm$ 0.42\\
    \bottomrule
    \end{tabular}
    }
    \label{tab:cifar100}
\end{table}

\begin{table}[b]
    \centering
    \caption{The test accuracy on parameter-efficient fine-tuning BERT base model on three datasets from GLUE benchmark with heterogeneous data partitioned and \textit{mild delay}.}
    \vskip 0.05in
    {\small
    \begin{tabular}{l|ccc}
    \toprule 
    & RTE & MRPC & SST-2 \\
    Method & Acc. \& std. & Acc. \& std. & Acc. \& std. \\
    \midrule
    FedAsync &  49.46 $\pm$ 2.66 & 69.71 $\pm$ 1.02 & 90.02 $\pm$ 0.79\\
    FedBuff & 61.61 $\pm$ 4.90 & 76.80 $\pm$ 6.05 & 78.37 $\pm$ 4.86 \\
    FADAS & 64.26 $\pm$ 2.30 & \textbf{83.33} $\pm$ 1.20 & \textbf{90.76} $\pm$ 0.26 \\
    FADAS$_{\text{da}}$ & \textbf{65.10} $\pm$ 2.40 & 83.09 $\pm$ 1.71 & 90.05 $\pm$ 1.80 \\
    \bottomrule
    \end{tabular}
    }
    \label{tab:bert}
\end{table}

\begin{table*}[ht!]
    \centering
    \caption{Training/fine-tuning time simulation (in units of 10 seconds) to reach target test accuracy on the server under \textit{mild delay} scenarios. For each dataset, the concurrency $M_c$ is fixed for fair comparison.}
    \vskip 0.05in
    {\small
    \begin{tabular}{l|c|ccccc}
    \toprule
     & Acc. & FedAvg & FedAMS & FADAS & FADAS$_{\text{da}}$\\
    \midrule
    CIFAR-10 & 75\% & 2257.7 & 648.7 & \textbf{228.0} & 237.5 \\
    CIFAR-100 & 50\% & 1806.3 & 546.9 & \textbf{209.8 }& \textbf{209.8}\\
    RTE & 63\% & 921.9 & 412.4 & \textbf{376.2} & 436.9 \\
    MRPC & 80\% & 1018.1 & 424.0 & \textbf{368.3} & 370.1\\
    SST-2 & 90\% & - &  495.2 & 73.8 & \textbf{57.2} \\
    \bottomrule
    \end{tabular}
    }
    \label{tab:running_time}
\end{table*}

\vspace{10pt}
\subsection{Ablation studies}

\textbf{Sensitivity of delay adaptive learning rates. }
Figure~\ref{fig:ablation}~(a) exhibits the ablation study for different delay threshold $\tau_c$ for the delay-adaptive FADAS under the scenario of \textit{large worst-case delays}. Following Eq.~(\ref{eq:delay_adapt}), $\tau_c$ provides a threshold so that we reduce the learning rate if there exists a client with extremely large delay. The experiment compares the accuracy of three thresholds $\tau_c=1,4,8,10$, and $\tau_c = 4$ shows very similar test accuracy as $\tau_c = 10$. The result in Figure~\ref{fig:ablation}~(a) shows that using $\tau_c=8$ obtains a slightly better result than using $\tau_c =1 $, $\tau_c =4$, and $\tau_c =10$.
It is interesting that in this \textit{large worst-case delays} setting, we observe the average of the maximum delay $\tau_{\avg} = 10.89$, the median of the maximum delay $\tau_{\med} = 6.0$, and maximum delay during training is $\tau_{\max} = 127$, which shows $\tau_{\med} \approx \tau_{\avg} \ll \tau_{\max}$, confirming the practicality of our analysis  as discussed in Remark \ref{rm:fadas3_da}. Together with the theoretical and experimental results, we find that the optimal choice of $\tau_c$ may depend on the actual delay during training.

\begin{figure}[b]
    \centering
    \subfigure[Ablation on $\tau_c$]{\includegraphics[width=0.23\textwidth]{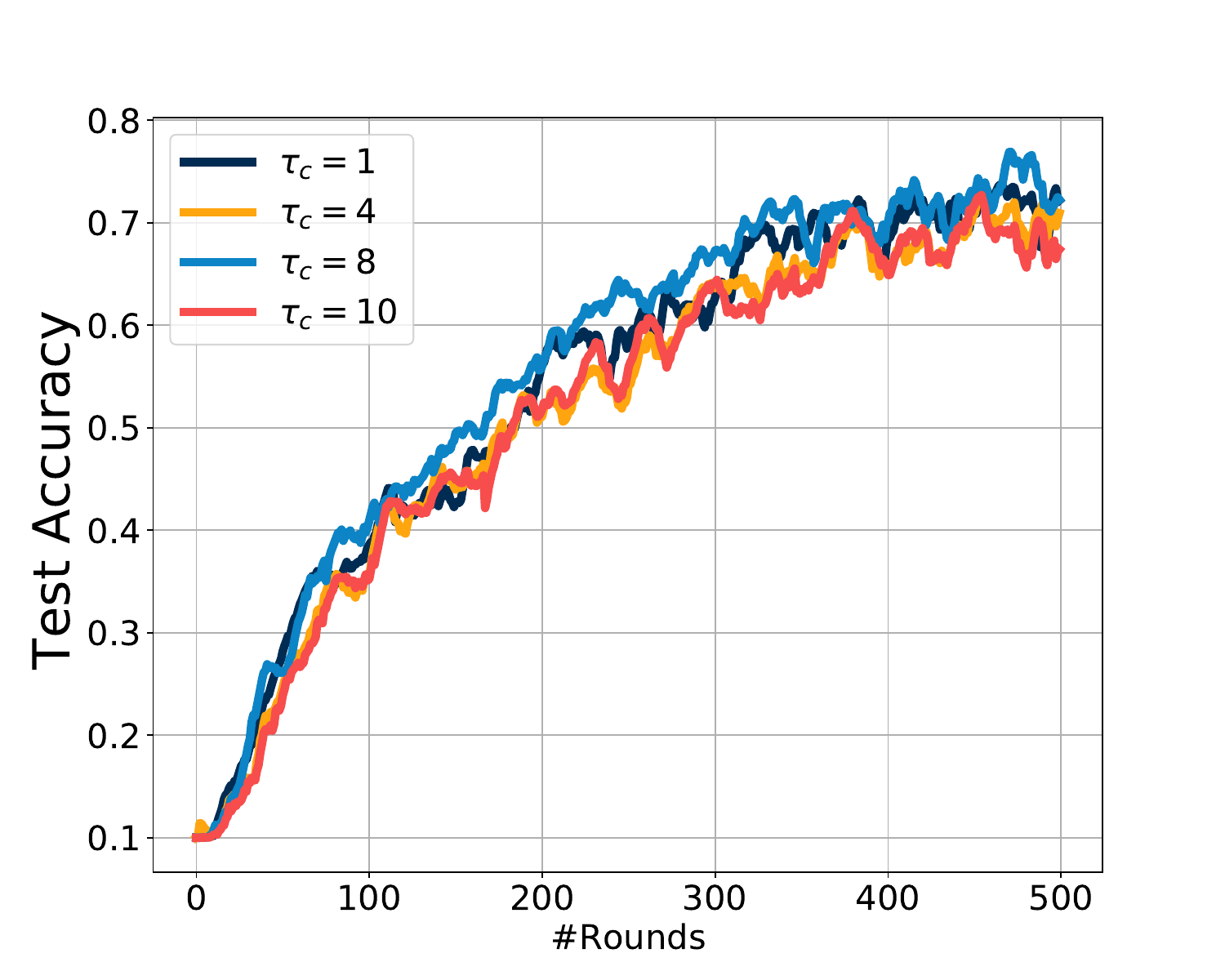}}
    \subfigure[Ablation on concurrency]{\includegraphics[width=0.23\textwidth]{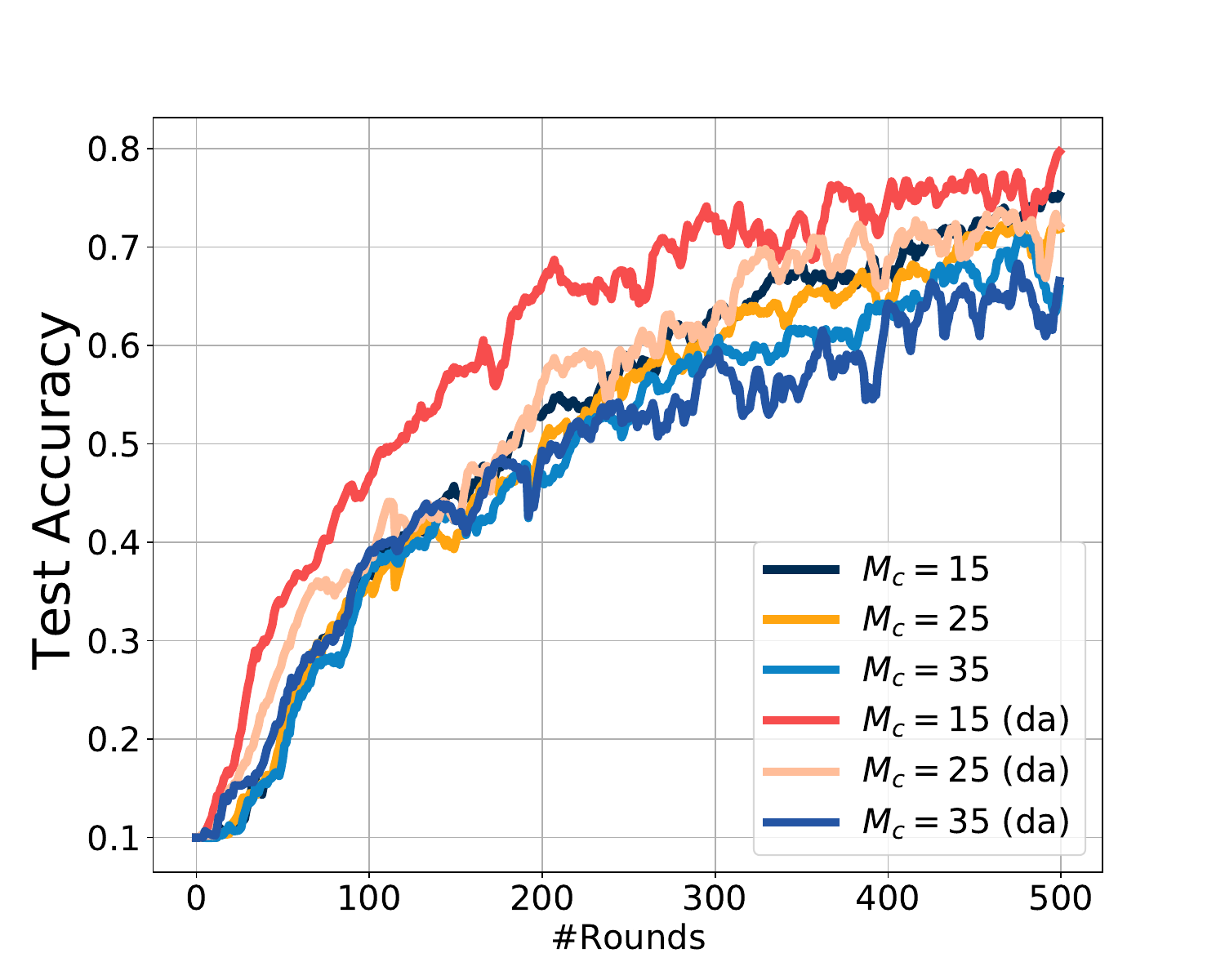}}
    \vspace{-1em}
    
    \subfigure[Ablation on buffer size]{\includegraphics[width=0.23\textwidth]{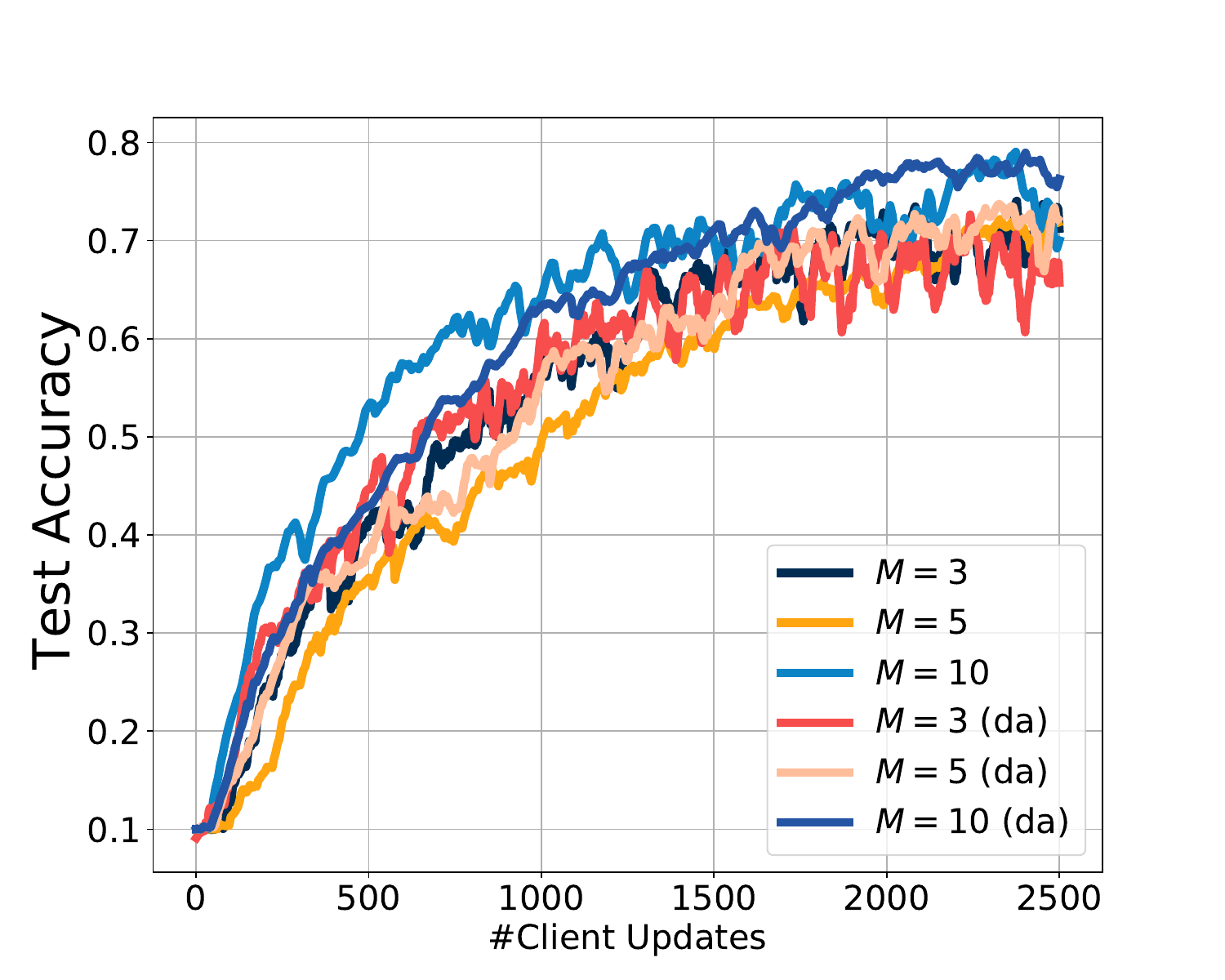}}
    \subfigure[Ablation on buffer size]{\includegraphics[width=0.23\textwidth]{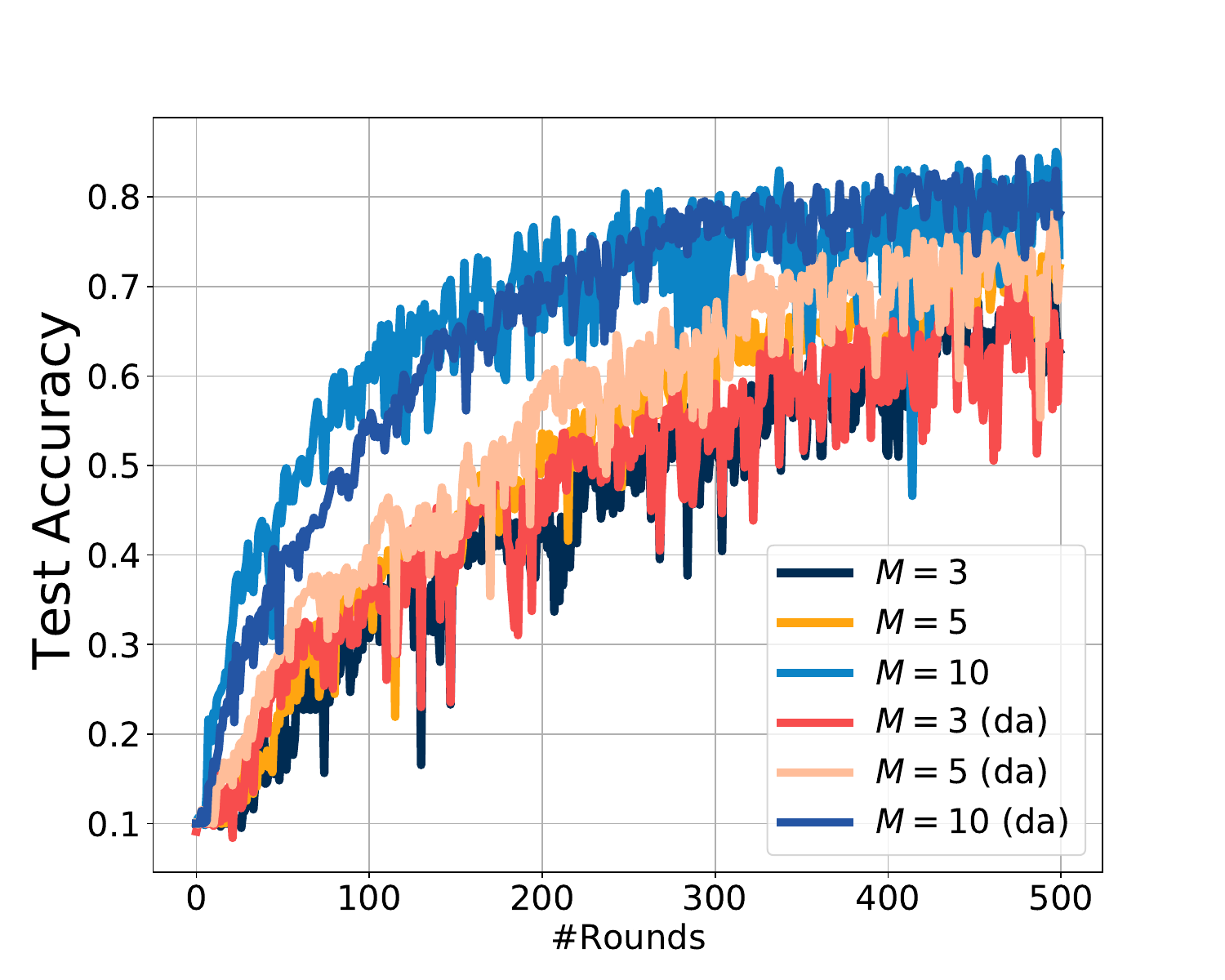}}
    \subfigure[Run time for FADAS]{\includegraphics[width=0.23\textwidth]{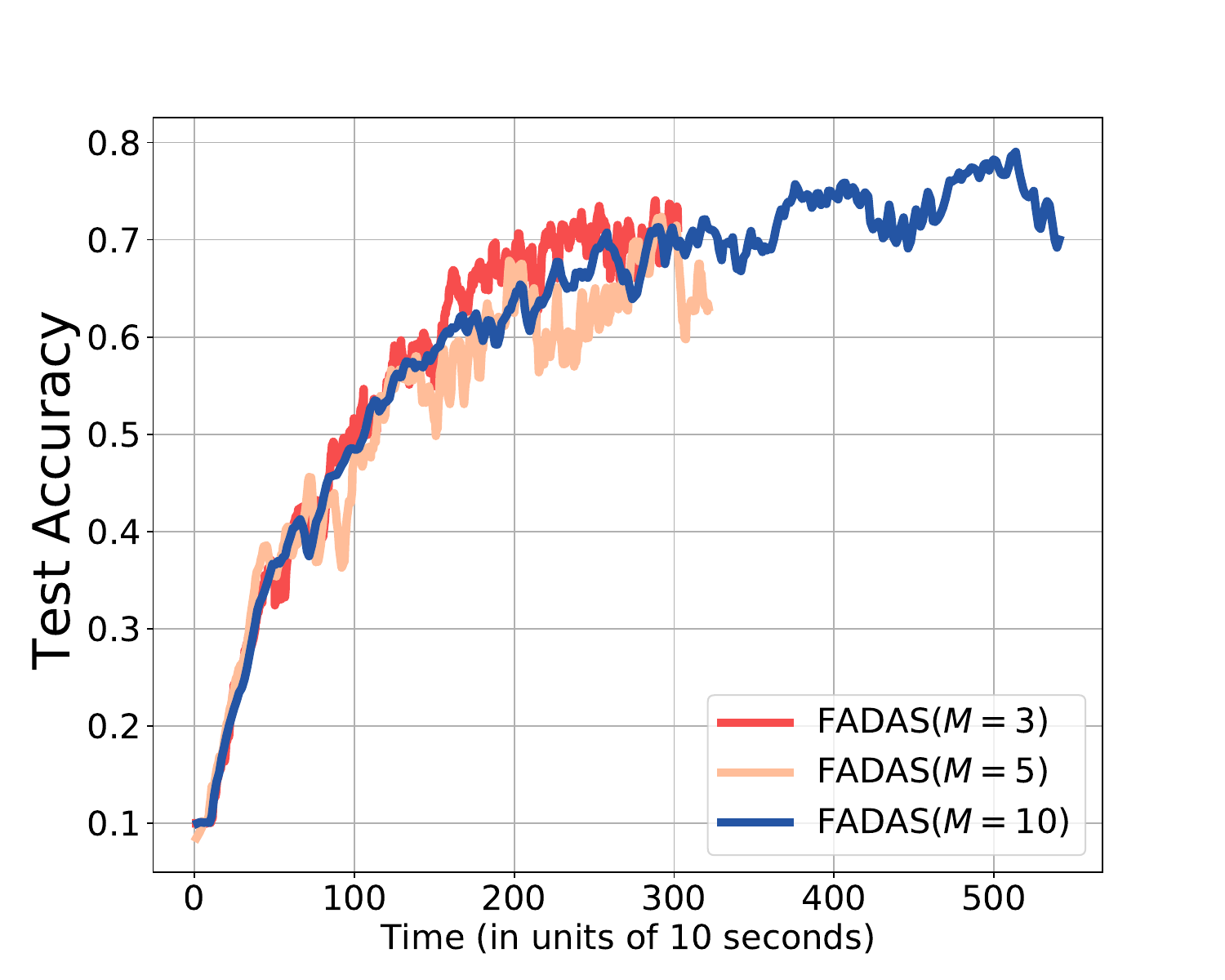}}
    \subfigure[Run time for FADAS$_{\text{da}}$]{\includegraphics[width=0.23\textwidth]{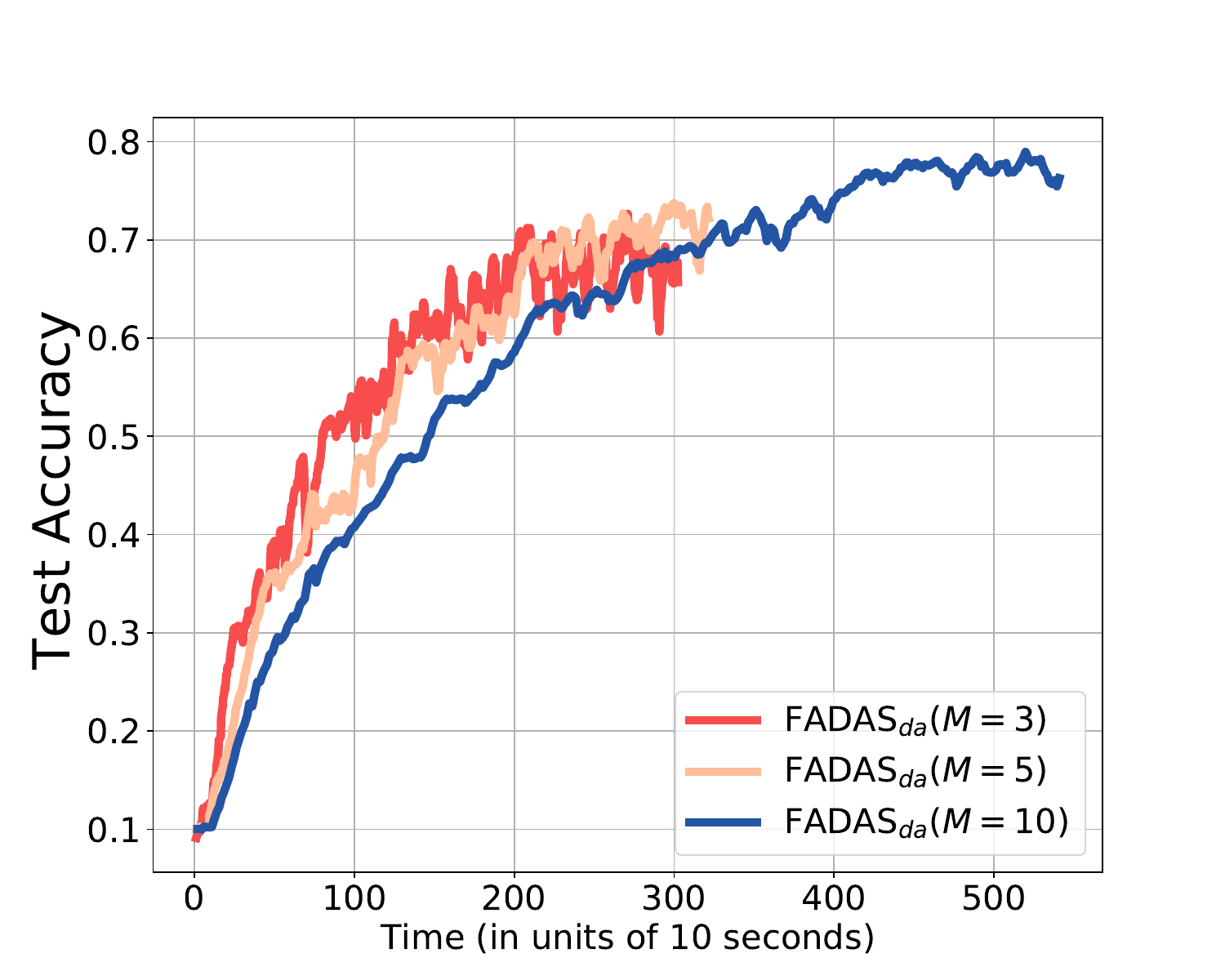}}
    \caption{Several ablation studies based on training ResNet-18 model on CIFAR-10 data under \textit{large worst-case} delay setting.}
    \label{fig:ablation}
\end{figure}

\textbf{Ablation for concurrency $M_c$ and buffer size $M$. }
Figure~\ref{fig:ablation}~(b) presents the test accuracy of both the standard and delay-adaptive FADAS for different concurrency levels $M_c$, given the same buffer size $M=5$. The delay-adaptive FADAS achieves higher accuracy than FADAS when concurrency $M_c = 15$ and $M_c = 25$, and the delay-adaptive version has worse accuracy at larger concurrency $M_c = 35$.

Figure \ref{fig:ablation} (c) presents an ablation study on buffer size for our proposed FADAS algorithm. It compares the performance of buffer sizes from $M = 3, 5, 10$ with their delay-adaptive counterparts over total client updates, i.e., the number of times the server receives updates from clients. It shows that with the same number of client trips, increasing the buffer size $M$ tends to achieve higher accuracy. This is also due to the design of the concurrency-buffer size framework, as increasing the buffer size moves closer to traditional synchronous FL algorithms, i.e., clients are more likely to get up-to-date with the server. We also provide the comparison w.r.t. global communication round in Figure~\ref{fig:ablation}~(d). Figure~\ref{fig:ablation}~(d) shows that as the buffer size $M$ increases, i.e., the number of clients contributing to one step of global update increases, the test accuracy also increases. 

Moreover, we simulate the running time (similar to the setting for Table \ref{tab:running_time}) for different buffer sizes $M$ to investigate the time efficiency for adopting different buffer sizes. Figure \ref{fig:ablation} (e) and (f) show the run time for FADAS and delay-adaptive FADAS. They reveal that a smaller buffer size ($M=3$) may have less training time to achieve a target accuracy, e.g., 70\%. These results demonstrate that using smaller buffer sizes may yield higher accuracy in the early stage of training. In conjunction with the results shown in Figure \ref{fig:ablation} (c) and (d), we think there is a trade-off between the time of reaching some initial target accuracy (that is slightly lower than the final accuracy) and the final accuracy with regard to the buffer size. A larger buffer size $M$ may yield improved final accuracy at convergence, but it also means that the server needs to wait for slower clients and there are less frequent updates of the global model, so the training speed at initial rounds can be slower.

\section{Conclusion}
In this paper, we propose FADAS, a novel asynchronous FL method that addresses the challenges of asynchronous updates in adaptive federated optimization. Based on the standard FADAS, we further integrate delay-adaptive learning rates to enhance the resiliency to stragglers with large delays. We theoretically establish the convergence rate for both standard and delay-adaptive FADAS under non-convex stochastic settings. Our theoretical analysis indicates that the delay-adaptive algorithm substantially reduces the impact of severe worst-case delays on the convergence rate. Empirical evaluations across multiple tasks affirm that FADAS outperforms existing asynchronous FL methods and offers improved training efficiency compared to synchronous adaptive FL methods.

\section*{Acknowledgments}
We thank the anonymous reviewers for their helpful comments. This work is partially supported by the National Science Foundation under Grant No. 2348541. The views and conclusions contained in this paper are those of the authors and should not be interpreted as representing any funding agencies.

\section*{Impact Statement}
This paper will make long-lasting contributions to the field of asynchronous federated learning and adaptive optimization. The focus of this work is on the technical advancement and optimization development of federated learning algorithms, and while there are numerous potential societal impacts of machine learning at large, this research does not necessitate a specific discussion on the societal consequences.



\bibliography{icml2024/0_main}

\begin{thebibliography}{57}
\providecommand{\natexlab}[1]{#1}
\providecommand{\url}[1]{\texttt{#1}}
\expandafter\ifx\csname urlstyle\endcsname\relax
  \providecommand{\doi}[1]{doi: #1}\else
  \providecommand{\doi}{doi: \begingroup \urlstyle{rm}\Url}\fi

\bibitem[Acar et~al.(2021)Acar, Zhao, Matas, Mattina, Whatmough, and Saligrama]{acar2021federated}
Acar, D. A.~E., Zhao, Y., Matas, R., Mattina, M., Whatmough, P., and Saligrama, V.
\newblock Federated learning based on dynamic regularization.
\newblock In \emph{International Conference on Learning Representations}, 2021.
\newblock URL \url{https://openreview.net/forum?id=B7v4QMR6Z9w}.

\bibitem[Avdiukhin \& Kasiviswanathan(2021)Avdiukhin and Kasiviswanathan]{pmlr-v139-avdiukhin21a}
Avdiukhin, D. and Kasiviswanathan, S.
\newblock Federated learning under arbitrary communication patterns.
\newblock In \emph{Proceedings of the 38th International Conference on Machine Learning}, pp.\  425--435, 2021.

\bibitem[Bornstein et~al.(2023)Bornstein, Rabbani, Wang, Bedi, and Huang]{bornstein2023swift}
Bornstein, M., Rabbani, T., Wang, E.~Z., Bedi, A., and Huang, F.
\newblock {SWIFT}: Rapid decentralized federated learning via wait-free model communication.
\newblock In \emph{The Eleventh International Conference on Learning Representations}, 2023.
\newblock URL \url{https://openreview.net/forum?id=jh1nCir1R3d}.

\bibitem[Chen et~al.(2020{\natexlab{a}})Chen, Zhou, Tang, Yang, Cao, and Gu]{chen2020closing}
Chen, J., Zhou, D., Tang, Y., Yang, Z., Cao, Y., and Gu, Q.
\newblock Closing the generalization gap of adaptive gradient methods in training deep neural networks.
\newblock In \emph{Proceedings of the International Joint Conference on Artificial Intelligence (IJCAI)}, 2020{\natexlab{a}}.

\bibitem[Chen et~al.(2020{\natexlab{b}})Chen, Jin, Sun, and Yin]{chen2020vafl}
Chen, T., Jin, X., Sun, Y., and Yin, W.
\newblock Vafl: a method of vertical asynchronous federated learning.
\newblock \emph{arXiv preprint arXiv:2007.06081}, 2020{\natexlab{b}}.

\bibitem[Chen et~al.(2018)Chen, Liu, Sun, and Hong]{chen2018convergence}
Chen, X., Liu, S., Sun, R., and Hong, M.
\newblock On the convergence of a class of adam-type algorithms for non-convex optimization.
\newblock \emph{arXiv preprint arXiv:1808.02941}, 2018.

\bibitem[Devlin et~al.(2018)Devlin, Chang, Lee, and Toutanova]{devlin2018bert}
Devlin, J., Chang, M.-W., Lee, K., and Toutanova, K.
\newblock Bert: Pre-training of deep bidirectional transformers for language understanding.
\newblock \emph{arXiv preprint arXiv:1810.04805}, 2018.

\bibitem[Dosovitskiy et~al.(2021)Dosovitskiy, Beyer, Kolesnikov, Weissenborn, Zhai, Unterthiner, Dehghani, Minderer, Heigold, Gelly, Uszkoreit, and Houlsby]{dosovitskiy2021an}
Dosovitskiy, A., Beyer, L., Kolesnikov, A., Weissenborn, D., Zhai, X., Unterthiner, T., Dehghani, M., Minderer, M., Heigold, G., Gelly, S., Uszkoreit, J., and Houlsby, N.
\newblock An image is worth 16x16 words: Transformers for image recognition at scale.
\newblock In \emph{International Conference on Learning Representations}, 2021.

\bibitem[Duchi et~al.(2011)Duchi, Hazan, and Singer]{duchi2011adaptive}
Duchi, J., Hazan, E., and Singer, Y.
\newblock Adaptive subgradient methods for online learning and stochastic optimization.
\newblock \emph{Journal of machine learning research}, 12\penalty0 (7), 2011.

\bibitem[Glasgow \& Wootters(2022)Glasgow and Wootters]{glasgow2022asynchronous}
Glasgow, M.~R. and Wootters, M.
\newblock Asynchronous distributed optimization with stochastic delays.
\newblock In \emph{International Conference on Artificial Intelligence and Statistics}, pp.\  9247--9279. PMLR, 2022.

\bibitem[He et~al.(2016)He, Zhang, Ren, and Sun]{he2016deep}
He, K., Zhang, X., Ren, S., and Sun, J.
\newblock Deep residual learning for image recognition.
\newblock In \emph{Proceedings of the IEEE conference on computer vision and pattern recognition}, pp.\  770--778, 2016.

\bibitem[Hu et~al.(2021)Hu, Shen, Wallis, Allen-Zhu, Li, Wang, Wang, and Chen]{hu2021lora}
Hu, E.~J., Shen, Y., Wallis, P., Allen-Zhu, Z., Li, Y., Wang, S., Wang, L., and Chen, W.
\newblock Lora: Low-rank adaptation of large language models.
\newblock \emph{arXiv preprint arXiv:2106.09685}, 2021.

\bibitem[Jhunjhunwala et~al.(2021)Jhunjhunwala, Gadhikar, Joshi, and Eldar]{jhunjhunwala2021adaptive}
Jhunjhunwala, D., Gadhikar, A., Joshi, G., and Eldar, Y.~C.
\newblock Adaptive quantization of model updates for communication-efficient federated learning.
\newblock In \emph{ICASSP 2021-2021 IEEE International Conference on Acoustics, Speech and Signal Processing (ICASSP)}, pp.\  3110--3114. IEEE, 2021.

\bibitem[Jhunjhunwala et~al.(2023)Jhunjhunwala, Wang, and Joshi]{jhunjhunwala2023fedexp}
Jhunjhunwala, D., Wang, S., and Joshi, G.
\newblock {FedExP}: Speeding up federated averaging via extrapolation.
\newblock \emph{arXiv preprint arXiv:2301.09604}, 2023.

\bibitem[Jin et~al.(2022)Jin, Ren, Zhou, Lyu, Liu, and Dou]{jin2022accelerated}
Jin, J., Ren, J., Zhou, Y., Lyu, L., Liu, J., and Dou, D.
\newblock Accelerated federated learning with decoupled adaptive optimization.
\newblock In \emph{International Conference on Machine Learning}, pp.\  10298--10322. PMLR, 2022.

\bibitem[Karimireddy et~al.(2020)Karimireddy, Kale, Mohri, Reddi, Stich, and Suresh]{karimireddy2020scaffold}
Karimireddy, S.~P., Kale, S., Mohri, M., Reddi, S., Stich, S., and Suresh, A.~T.
\newblock Scaffold: Stochastic controlled averaging for federated learning.
\newblock In \emph{International Conference on Machine Learning}, pp.\  5132--5143. PMLR, 2020.

\bibitem[Kingma \& Ba(2015)Kingma and Ba]{kingma2014adam}
Kingma, D.~P. and Ba, J.
\newblock Adam: A method for stochastic optimization.
\newblock \emph{International Conference on Learning Representations}, 2015.

\bibitem[Koloskova et~al.(2022)Koloskova, Stich, and Jaggi]{koloskova2022sharper}
Koloskova, A., Stich, S.~U., and Jaggi, M.
\newblock Sharper convergence guarantees for asynchronous {SGD} for distributed and federated learning.
\newblock In Oh, A.~H., Agarwal, A., Belgrave, D., and Cho, K. (eds.), \emph{Advances in Neural Information Processing Systems}, 2022.
\newblock URL \url{https://openreview.net/forum?id=4_oCZgBIVI}.

\bibitem[Krizhevsky et~al.(2009)Krizhevsky, Hinton, et~al.]{krizhevsky2009learning}
Krizhevsky, A., Hinton, G., et~al.
\newblock Learning multiple layers of features from tiny images.
\newblock 2009.

\bibitem[Leblond et~al.(2018)Leblond, Pedregosa, and Lacoste-Julien]{JMLRasync}
Leblond, R., Pedregosa, F., and Lacoste-Julien, S.
\newblock Improved asynchronous parallel optimization analysis for stochastic incremental methods.
\newblock \emph{Journal of Machine Learning Research}, 19\penalty0 (81):\penalty0 1--68, 2018.

\bibitem[Li et~al.(2023)Li, Huang, and Huang]{li2023fedda}
Li, J., Huang, F., and Huang, H.
\newblock Fedda: Faster framework of local adaptive gradient methods via restarted dual averaging.
\newblock \emph{arXiv preprint arXiv:2302.06103}, 2023.

\bibitem[Li et~al.(2019{\natexlab{a}})Li, Huang, Yang, Wang, and Zhang]{li2019convergence}
Li, X., Huang, K., Yang, W., Wang, S., and Zhang, Z.
\newblock On the convergence of {FedAvg} on non-iid data.
\newblock \emph{arXiv preprint arXiv:1907.02189}, 2019{\natexlab{a}}.

\bibitem[Li et~al.(2019{\natexlab{b}})Li, Yang, Wang, and Zhang]{li2019communication}
Li, X., Yang, W., Wang, S., and Zhang, Z.
\newblock Communication-efficient local decentralized {SGD} methods.
\newblock \emph{arXiv preprint arXiv:1910.09126}, 2019{\natexlab{b}}.

\bibitem[Lin et~al.(2018)Lin, Stich, Patel, and Jaggi]{lin2018don}
Lin, T., Stich, S.~U., Patel, K.~K., and Jaggi, M.
\newblock Don't use large mini-batches, use local sgd.
\newblock \emph{arXiv preprint arXiv:1808.07217}, 2018.

\bibitem[Loshchilov \& Hutter(2017)Loshchilov and Hutter]{loshchilov2017decoupled}
Loshchilov, I. and Hutter, F.
\newblock Decoupled weight decay regularization.
\newblock \emph{arXiv preprint arXiv:1711.05101}, 2017.

\bibitem[Mania et~al.(2017)Mania, Pan, Papailiopoulos, Recht, Ramchandran, and Jordan]{mania2017perturbed}
Mania, H., Pan, X., Papailiopoulos, D., Recht, B., Ramchandran, K., and Jordan, M.~I.
\newblock Perturbed iterate analysis for asynchronous stochastic optimization.
\newblock \emph{SIAM Journal on Optimization}, 27\penalty0 (4):\penalty0 2202--2229, 2017.

\bibitem[McMahan et~al.(2017)McMahan, Moore, Ramage, Hampson, and y~Arcas]{mcmahan2017communication}
McMahan, B., Moore, E., Ramage, D., Hampson, S., and y~Arcas, B.~A.
\newblock Communication-efficient learning of deep networks from decentralized data.
\newblock In \emph{Artificial intelligence and statistics}, pp.\  1273--1282. PMLR, 2017.

\bibitem[Mishchenko et~al.(2022)Mishchenko, Bach, Even, and Woodworth]{mishchenko2022asynchronous}
Mishchenko, K., Bach, F., Even, M., and Woodworth, B.~E.
\newblock Asynchronous {SGD} beats minibatch {SGD} under arbitrary delays.
\newblock \emph{Advances in Neural Information Processing Systems}, 35:\penalty0 420--433, 2022.

\bibitem[Nguyen et~al.(2022)Nguyen, Malik, Zhan, Yousefpour, Rabbat, Malek, and Huba]{nguyen2022federated}
Nguyen, J., Malik, K., Zhan, H., Yousefpour, A., Rabbat, M., Malek, M., and Huba, D.
\newblock Federated learning with buffered asynchronous aggregation.
\newblock In \emph{International Conference on Artificial Intelligence and Statistics}, pp.\  3581--3607. PMLR, 2022.

\bibitem[Nguyen et~al.(2018)Nguyen, Nguyen, van Dijk, Richtarik, Scheinberg, and Takac]{pmlr-v80-nguyen18c}
Nguyen, L., Nguyen, P.~H., van Dijk, M., Richtarik, P., Scheinberg, K., and Takac, M.
\newblock {SGD} and hogwild! {C}onvergence without the bounded gradients assumption.
\newblock In Dy, J. and Krause, A. (eds.), \emph{Proceedings of the 35th International Conference on Machine Learning}, volume~80 of \emph{Proceedings of Machine Learning Research}, pp.\  3750--3758. PMLR, 10--15 Jul 2018.
\newblock URL \url{https://proceedings.mlr.press/v80/nguyen18c.html}.

\bibitem[Niu et~al.(2011)Niu, Recht, Re, and Wright]{10.5555/2986459.2986537}
Niu, F., Recht, B., Re, C., and Wright, S.~J.
\newblock Hogwild! a lock-free approach to parallelizing stochastic gradient descent.
\newblock In \emph{Proceedings of the 24th International Conference on Neural Information Processing Systems}, NIPS'11, pp.\  693–701, Red Hook, NY, USA, 2011. Curran Associates Inc.
\newblock ISBN 9781618395993.

\bibitem[Ortega \& Jafarkhani(2023)Ortega and Jafarkhani]{ortega2023asynchronous}
Ortega, T. and Jafarkhani, H.
\newblock Asynchronous federated learning with bidirectional quantized communications and buffered aggregation.
\newblock \emph{arXiv preprint arXiv:2308.00263}, 2023.

\bibitem[Reddi et~al.(2018)Reddi, Kale, and Kumar]{j.2018on}
Reddi, S.~J., Kale, S., and Kumar, S.
\newblock On the convergence of {Adam} and beyond.
\newblock In \emph{International Conference on Learning Representations}, 2018.

\bibitem[Reddi et~al.(2021)Reddi, Charles, Zaheer, Garrett, Rush, Kone{\v{c}}n{\'y}, Kumar, and McMahan]{reddi2021adaptive}
Reddi, S.~J., Charles, Z., Zaheer, M., Garrett, Z., Rush, K., Kone{\v{c}}n{\'y}, J., Kumar, S., and McMahan, H.~B.
\newblock Adaptive federated optimization.
\newblock In \emph{International Conference on Learning Representations}, 2021.

\bibitem[Reisizadeh et~al.(2020)Reisizadeh, Mokhtari, Hassani, Jadbabaie, and Pedarsani]{reisizadeh2020fedpaq}
Reisizadeh, A., Mokhtari, A., Hassani, H., Jadbabaie, A., and Pedarsani, R.
\newblock Fedpaq: A communication-efficient federated learning method with periodic averaging and quantization.
\newblock In \emph{International Conference on Artificial Intelligence and Statistics}, pp.\  2021--2031. PMLR, 2020.

\bibitem[Stich et~al.(2021)Stich, Mohtashami, and Jaggi]{stich2021critical}
Stich, S., Mohtashami, A., and Jaggi, M.
\newblock Critical parameters for scalable distributed learning with large batches and asynchronous updates.
\newblock In \emph{International Conference on Artificial Intelligence and Statistics}, pp.\  4042--4050. PMLR, 2021.

\bibitem[Stich(2018)]{stich2018local}
Stich, S.~U.
\newblock Local {SGD} converges fast and communicates little.
\newblock \emph{arXiv preprint arXiv:1805.09767}, 2018.

\bibitem[Sun et~al.(2023{\natexlab{a}})Sun, Shen, Chen, Sun, Li, Sun, and Tao]{sun2023fedlalr}
Sun, H., Shen, L., Chen, S., Sun, J., Li, J., Sun, G., and Tao, D.
\newblock Fedlalr: Client-specific adaptive learning rates achieve linear speedup for non-iid data.
\newblock \emph{arXiv preprint arXiv:2309.09719}, 2023{\natexlab{a}}.

\bibitem[Sun et~al.(2023{\natexlab{b}})Sun, Shen, Sun, Ding, and Tao]{sun2023efficient}
Sun, Y., Shen, L., Sun, H., Ding, L., and Tao, D.
\newblock Efficient federated learning via local adaptive amended optimizer with linear speedup.
\newblock \emph{arXiv preprint arXiv:2308.00522}, 2023{\natexlab{b}}.

\bibitem[Tan et~al.(2022)Tan, Zhang, Zhou, Che, Hu, Chen, and Wu]{tan2022adafed}
Tan, L., Zhang, X., Zhou, Y., Che, X., Hu, M., Chen, X., and Wu, D.
\newblock Adafed: Optimizing participation-aware federated learning with adaptive aggregation weights.
\newblock \emph{IEEE Transactions on Network Science and Engineering}, 9\penalty0 (4):\penalty0 2708--2720, 2022.

\bibitem[Tieleman et~al.(2012)Tieleman, Hinton, et~al.]{tieleman2012lecture}
Tieleman, T., Hinton, G., et~al.
\newblock Lecture 6.5-rmsprop: Divide the gradient by a running average of its recent magnitude.
\newblock \emph{COURSERA: Neural networks for machine learning}, 4\penalty0 (2):\penalty0 26--31, 2012.

\bibitem[Toghani \& Uribe(2022)Toghani and Uribe]{toghani2022unbounded}
Toghani, M.~T. and Uribe, C.~A.
\newblock Unbounded gradients in federated learning with buffered asynchronous aggregation.
\newblock In \emph{2022 58th Annual Allerton Conference on Communication, Control, and Computing (Allerton)}, pp.\  1--8. IEEE, 2022.

\bibitem[Tong et~al.(2020)Tong, Liang, and Bi]{tong2020effective}
Tong, Q., Liang, G., and Bi, J.
\newblock Effective federated adaptive gradient methods with non-iid decentralized data.
\newblock \emph{arXiv preprint arXiv:2009.06557}, 2020.

\bibitem[Touvron et~al.(2023)Touvron, Martin, Stone, Albert, Almahairi, Babaei, Bashlykov, Batra, Bhargava, Bhosale, et~al.]{touvron2023llama}
Touvron, H., Martin, L., Stone, K., Albert, P., Almahairi, A., Babaei, Y., Bashlykov, N., Batra, S., Bhargava, P., Bhosale, S., et~al.
\newblock Llama 2: Open foundation and fine-tuned chat models.
\newblock \emph{arXiv preprint arXiv:2307.09288}, 2023.

\bibitem[Wang et~al.(2018)Wang, Singh, Michael, Hill, Levy, and Bowman]{wang2018glue}
Wang, A., Singh, A., Michael, J., Hill, F., Levy, O., and Bowman, S.~R.
\newblock Glue: A multi-task benchmark and analysis platform for natural language understanding.
\newblock \emph{arXiv preprint arXiv:1804.07461}, 2018.

\bibitem[Wang et~al.(2020{\natexlab{a}})Wang, Yurochkin, Sun, Papailiopoulos, and Khazaeni]{Wang2020Federated}
Wang, H., Yurochkin, M., Sun, Y., Papailiopoulos, D., and Khazaeni, Y.
\newblock Federated learning with matched averaging.
\newblock In \emph{International Conference on Learning Representations}, 2020{\natexlab{a}}.
\newblock URL \url{https://openreview.net/forum?id=BkluqlSFDS}.

\bibitem[Wang \& Joshi(2021)Wang and Joshi]{wang2021cooperative}
Wang, J. and Joshi, G.
\newblock Cooperative sgd: A unified framework for the design and analysis of local-update {SGD} algorithms.
\newblock \emph{The Journal of Machine Learning Research}, 22\penalty0 (1):\penalty0 9709--9758, 2021.

\bibitem[Wang et~al.(2020{\natexlab{b}})Wang, Liu, Liang, Joshi, and Poor]{wang2020tackling}
Wang, J., Liu, Q., Liang, H., Joshi, G., and Poor, H.~V.
\newblock Tackling the objective inconsistency problem in heterogeneous federated optimization.
\newblock \emph{arXiv preprint arXiv:2007.07481}, 2020{\natexlab{b}}.

\bibitem[Wang \& Ji(2023)Wang and Ji]{wang2023lightweight}
Wang, S. and Ji, M.
\newblock A lightweight method for tackling unknown participation probabilities in federated averaging.
\newblock \emph{arXiv preprint arXiv:2306.03401}, 2023.

\bibitem[Wang et~al.(2022{\natexlab{a}})Wang, Lin, and Chen]{wang2022cdadam}
Wang, Y., Lin, L., and Chen, J.
\newblock Communication-compressed adaptive gradient method for distributed nonconvex optimization.
\newblock In \emph{International Conference on Artificial Intelligence and Statistics}, pp.\  6292--6320. PMLR, 2022{\natexlab{a}}.

\bibitem[Wang et~al.(2022{\natexlab{b}})Wang, Lin, and Chen]{wang2022communication}
Wang, Y., Lin, L., and Chen, J.
\newblock Communication-efficient adaptive federated learning.
\newblock In \emph{Proceedings of the 39th International Conference on Machine Learning}, pp.\  22802--22838. PMLR, 2022{\natexlab{b}}.

\bibitem[Wang et~al.(2023)Wang, Cao, Wu, Chen, and Chen]{wang2023tackling}
Wang, Y., Cao, Y., Wu, J., Chen, R., and Chen, J.
\newblock Tackling the data heterogeneity in asynchronous federated learning with cached update calibration.
\newblock In \emph{Federated Learning and Analytics in Practice: Algorithms, Systems, Applications, and Opportunities}, 2023.

\bibitem[Wu et~al.(2022)Wu, Magnusson, Feyzmahdavian, and Johansson]{wu2022delay}
Wu, X., Magnusson, S., Feyzmahdavian, H.~R., and Johansson, M.
\newblock Delay-adaptive step-sizes for asynchronous learning.
\newblock In \emph{International Conference on Machine Learning}, pp.\  24093--24113. PMLR, 2022.

\bibitem[Wu et~al.(2023)Wu, Huang, Hu, and Huang]{wu2023faster}
Wu, X., Huang, F., Hu, Z., and Huang, H.
\newblock Faster adaptive federated learning.
\newblock In \emph{Proceedings of the AAAI Conference on Artificial Intelligence}, volume~37, pp.\  10379--10387, 2023.

\bibitem[Xie et~al.(2019)Xie, Koyejo, and Gupta]{xie2019asynchronous}
Xie, C., Koyejo, S., and Gupta, I.
\newblock Asynchronous federated optimization.
\newblock \emph{arXiv preprint arXiv:1903.03934}, 2019.

\bibitem[Yang et~al.(2021)Yang, Fang, and Liu]{yang2021achieving}
Yang, H., Fang, M., and Liu, J.
\newblock Achieving linear speedup with partial worker participation in non-{IID} federated learning.
\newblock In \emph{International Conference on Learning Representations}, 2021.

\bibitem[Yang et~al.(2022)Yang, Zhang, Khanduri, and Liu]{yang2022anarchic}
Yang, H., Zhang, X., Khanduri, P., and Liu, J.
\newblock Anarchic federated learning.
\newblock In \emph{International Conference on Machine Learning}, pp.\  25331--25363. PMLR, 2022.

\end{thebibliography}
\bibliographystyle{icml2024}

\newpage
\appendix
\onecolumn

\section{Convergence analysis for adaptive asynchronous FL}\label{sec:async-thm}
\begin{proof}[Proof of Theorem \ref{thm:fadas}]
Here we directly start with general $\beta_1 \geq 0$ cases. Following several previous works studied centralized and federated adaptive methods \cite{chen2018convergence,wang2022communication}, we adopt an auxiliary Lyapunov sequence $\bz_t$, and assume $\bx_0 = \bx_1$, then for each $t\geq 1$, we have 
\begin{align}
    \bz_t = \bx_t + \frac{\beta_1}{1-\beta_1} (\bx_t - \bx_{t-1}) = \frac{1}{1-\beta_1}\bx_t - \frac{\beta_1}{1-\beta_1}\bx_{t-1}.
\end{align}

For the difference between $\bz_{t+1}$ and $\bz_t$, we have 
\begin{align}
    \bz_{t+1} - \bz_t & = \frac{1}{1-\beta_1} (\bx_{t+1} - \bx_t) - \frac{\beta_1}{1-\beta_1} (\bx_t - \bx_{t-1}) \notag\\
    & = \frac{1}{1-\beta_1} \cdot \eta \frac{\bbm_t}{\sqrt{\hat\bv_t} + \epsilon} - \frac{\beta_1}{1-\beta_1} \cdot \eta \frac{\bbm_{t-1}}{\sqrt{\hat\bv_{t-1}}+ \epsilon} \notag\\
    & = \frac{1}{1-\beta_1} \cdot \eta \frac{1}{\sqrt{\hat\bv_t} + \epsilon} [\beta_1 \bbm_{t-1} + (1-\beta_1)  \bDelta_t] - \frac{\beta_1}{1-\beta_1} \cdot \eta \frac{\bbm_{t-1}}{\sqrt{\hat\bv_{t-1}}+ \epsilon} \notag\\
    & = \eta \frac{ \bDelta_t}{\sqrt{\hat\bv_t} + \epsilon}  - \frac{\beta_1}{1-\beta_1} \cdot \bigg(\frac{\eta}{\sqrt{\hat\bv_t}+ \epsilon} - \frac{\eta}{\sqrt{\hat\bv_{t-1}}+ \epsilon} \bigg)\bbm_{t-1},
\end{align}
where 
$ \bDelta_t = -\frac{\eta_l }{M} \sum_{i \in \cM_t} \sum_{k=0}^{K-1} \bg_{t-\tau_t^i, k}^i = -\frac{\eta_l }{M} \sum_{i \in \cM_t} \sum_{k=0}^{K-1} \nabla F_i(\bx_{t-\tau_t^i, k}^i; \xi)$ and $\cM_t$ be the set that include client send the local updates to the server at global round $t$.

From Assumption \ref{as:smooth}, $f$ is $L$-smooth, taking the total expectation over all previous round, $0,1,...,t-1$  on the auxiliary sequence $\bz_t$,
\begin{align}\label{eq:f-Lsmooth-noncomp}
& \EE[f(\bz_{t+1})-f(\bz_{t})] \notag\\
& = \EE[f(\bz_{t+1})]-f(\bz_{t})] \notag\\
& \leq \EE[\langle \nabla f(\bz_t), \bz_{t+1}-\bz_t \rangle] + \frac{L}{2} \EE[\|\bz_{t+1}-\bz_t\|^2] \notag\\
& = \underbrace{\EE\bigg[\bigg\langle \nabla f(\bx_{t}), \eta \frac{ \bDelta_t}{\sqrt{\hat\bv_t} + \epsilon} \bigg\rangle\bigg]}_{I_1} \underbrace{ - \EE\bigg[\bigg\langle \nabla f(\bz_{t}), \frac{\beta_1}{1-\beta_1} \cdot \eta \bigg(\frac{1}{\sqrt{\hat\bv_t}+ \epsilon} - \frac{1}{\sqrt{\hat\bv_{t-1}}+ \epsilon} \bigg)\bbm_{t-1} \bigg\rangle\bigg]}_{I_2} \notag\\
& \quad + \underbrace{\frac{\eta^2 L}{2}\EE\bigg[\bigg\| \frac{ \bDelta_t}{\sqrt{\hat\bv_t} + \epsilon}  - \frac{\beta_1}{1-\beta_1} \bigg(\frac{1}{\sqrt{\hat\bv_t}+ \epsilon} - \frac{1}{\sqrt{\hat\bv_{t-1}}+ \epsilon} \bigg)\bbm_{t-1}\bigg\|^2\bigg]}_{I_3} \notag\\
& \quad + \underbrace{\EE\bigg[\bigg\langle (\nabla f(\bz_{t}) - \nabla f(\bx_{t})), \eta \frac{ \bDelta_t}{\sqrt{\hat\bv_t} + \epsilon} \bigg\rangle\bigg]}_{I_4}.
\end{align}

\textbf{Bounding $I_1$}
Denote a sequence $ \bar\bDelta_t = -\frac{\eta_l }{N} \sum_{i \in [N]} \sum_{k=0}^{K-1} \bg_{t-\tau_t^i, k}^i =  -\frac{\eta_l }{N} \sum_{i \in [N]} \sum_{k=0}^{K-1} \nabla F_i(\bx_{t-\tau_t^i, k}^i; \xi)$, where $\xi \sim \cD_i$. For $I_1$, there is 
\begin{align}\label{eq:I1-1}
    I_1 & = \eta \EE\bigg[\bigg\langle \nabla f(\bx_{t}), \frac{ \bDelta_t}{\sqrt{\hat\bv_t} + \epsilon} \bigg\rangle\bigg] \notag\\
    & = \eta \EE\bigg[\bigg\langle \nabla f(\bx_{t}), \frac{ \bar\bDelta_t}{\sqrt{\hat\bv_t} + \epsilon} \bigg\rangle\bigg] \notag\\
    & = \eta \EE\bigg[\bigg\langle \frac{\nabla f(\bx_{t})}{\sqrt{\hat\bv_t} + \epsilon} , \bar\bDelta_t + \eta_l K \nabla f(\bx_t) - \eta_l K \nabla f(\bx_t) \bigg\rangle\bigg] \notag\\
    & = -\eta  \eta_l K \EE \bigg[\bigg\|\frac{\nabla f(\bx_t)}{{(\sqrt{\hat\bv_t}+\epsilon)^{1/2}}} \bigg\|^2\bigg] + \eta  \EE\bigg[\bigg\langle \frac{\nabla f(\bx_{t})}{\sqrt{\hat\bv_t} + \epsilon} , \bar\bDelta_t + \eta_l K \nabla f(\bx_t) \bigg\rangle\bigg] \notag\\
    & = -\eta  \eta_l K \EE \bigg[\bigg\|\frac{\nabla f(\bx_t)}{{(\sqrt{\hat\bv_t}+\epsilon)^{1/2}}} \bigg\|^2\bigg] + \eta  \EE\bigg[\bigg\langle \frac{\nabla f(\bx_{t})}{\sqrt{\hat\bv_t} + \epsilon} , -\frac{\eta_l }{N} \sum_{i \in [N]} \sum_{k=0}^{K-1} \nabla F_i(\bx_{t-\tau_t^i, k}^i; \xi_i)+ \frac{\eta_l K}{N} \sum_{i \in [N]} \nabla F_i(\bx_t) \bigg\rangle\bigg], 
\end{align}
where the second equality holds due to the characteristic of uniform arrivals (see Assumption~\ref{as:bound-g-delay}), thus $\EE (\bDelta_t) = \bar\bDelta_t $. The last inequality holds by the definition of $\bar\bDelta_t$ and the fact of the objective function $f(\bx) = \frac{1}{N} \sum_{i=1}^N F_i(\bx)$. By the fact of $\langle \ba,\bb\rangle = \frac{1}{2}[\|\ba\|^2 + \|\bb\|^2 - \|\ba-\bb\|^2]$, for second term in \eqref{eq:I1-1}, we have 
\begin{align}\label{eq:I1-2}
    & \eta  \EE \bigg[\bigg\langle \frac{\nabla f(\bx_{t})}{\sqrt{\hat\bv_t} + \epsilon} , -\frac{\eta_l }{N} \sum_{i \in [N]} \sum_{k=0}^{K-1} \bg_{t-\tau_t^i, k}^i+ \frac{\eta_l K}{N} \sum_{i \in [N]} \nabla F_i(\bx_t) \bigg\rangle\bigg] \notag\\
    & = \eta \EE \bigg[ \bigg\langle \frac{\sqrt{\eta_l K}}{(\sqrt{\hat\bv_t}+\epsilon)^{1/2}} \nabla f(\bx_{t}), -\frac{\sqrt{\eta_l K}}{(\sqrt{\hat\bv_t}+\epsilon)^{1/2}} \frac{1}{NK} \sum_{i \in [N]} \sum_{k=0}^{K-1} (\bg_{t-\tau_t^i, k}^i - \nabla F_i(\bx_t))\bigg\rangle \bigg]\notag\\
    & = \eta \EE \bigg[ \bigg\langle \frac{\sqrt{\eta_l K}}{(\sqrt{\hat\bv_t}+\epsilon)^{1/2}} \nabla f(\bx_{t}), -\frac{\sqrt{\eta_l K}}{(\sqrt{\hat\bv_t}+\epsilon)^{1/2}} \frac{1}{NK} \sum_{i \in [N]} \sum_{k=0}^{K-1} (\nabla F_i (\bx_{t-\tau_t^i, k}^i) - \nabla F_i(\bx_t))\bigg\rangle \bigg]\notag\\
    & = \frac{\eta  \eta_l K}{2} \EE  \bigg[\bigg\|\frac{\nabla f(\bx_t)}{{(\sqrt{\hat\bv_t}+\epsilon)^{1/2}}} \bigg\|^2\bigg] + \frac{\eta  \eta_l }{2N^2 K} \EE \bigg[\bigg\|\frac{1}{{(\sqrt{\hat\bv_t}+\epsilon)^{1/2}}} \sum_{i \in [N]} \sum_{k=0}^{K-1} (\nabla F_i (\bx_{t-\tau_t^i, k}^i) - \nabla F_i(\bx_t))\bigg\|^2 \bigg] \notag\\
    & \quad - \frac{\eta  \eta_l }{2N^2 K} \EE \bigg[\bigg\|\frac{1}{{(\sqrt{\hat\bv_t}+\epsilon)^{1/2}}} \sum_{i \in [N]} \sum_{k=0}^{K-1} \nabla F_i (\bx_{t-\tau_t^i, k}^i) \bigg\|^2 \bigg],
\end{align}
where the second equality holds by $\EE[\bg_{t-\tau_t^i, k}^i] = \EE[\nabla F_i(\bx_{t-\tau_t^i, k}^i)]$. Then for the second term in Eq. \eqref{eq:I1-2} , we have 
\begin{align} \label{eq:I1_3}
    & \frac{\eta  \eta_l }{2N^2 K} \EE \bigg[\bigg\|\frac{1}{{(\sqrt{\hat\bv_t}+\epsilon)^{1/2}}} \sum_{i \in [N]} \sum_{k=0}^{K-1} (\nabla F_i (\bx_{t-\tau_t^i, k}^i) - \nabla F_i(\bx_t))\bigg\|^2 \bigg] \notag\\
    & \leq \frac{\eta  \eta_l }{2N^2 K \epsilon} \EE \bigg[\bigg\|\sum_{i \in [N]} \sum_{k=0}^{K-1} (\nabla F_i (\bx_{t-\tau_t^i, k}^i) - \nabla F_i(\bx_t))\bigg\|^2 \bigg] \notag\\
    & \leq  \frac{\eta  \eta_l }{2N \epsilon} \sum_{i \in [N]} \sum_{k=0}^{K-1} \EE [\|\nabla F_i(\bx_t) - \nabla F_i(\bx_{t-\tau_t^i,k}^i) \|^2 ] \notag\\
    & \leq \frac{\eta  \eta_l}{N \epsilon} \sum_{i \in [N]} \sum_{k=0}^{K-1} \bigg[ \EE [\|\nabla F_i(\bx_t) - \nabla F_i(\bx_{t-\tau_t^i})\|^2] + \EE [ \|\nabla F_i(\bx_{t-\tau_t^i}) - \nabla F_i(\bx_{t-\tau_t^i,k}^i) \|^2 ] \bigg]\notag\\
    & \leq \frac{\eta  \eta_l}{N \epsilon} \sum_{i \in [N]} \sum_{k=0}^{K-1} \bigg[L^2\EE [\|\bx_t - \bx_{t-\tau_t^i}\|^2] + L^2\EE [ \|\bx_{t-\tau_t^i} - \bx_{t-\tau_t^i,k}^i \|^2 ] \bigg],
\end{align}
where the second inequality holds by $\forall \ba_i, \|\sum_{i=1}^{n} \ba_i\|^2 \leq n \sum_{i=1}^{n}\|\ba_i\|^2 $, and the last inequality holds by Assumption \ref{as:smooth}. For the second term in Eq. \eqref{eq:I1_3}, following by Lemma \ref{lm:xikt-xt}, there is 
\begin{align}\label{eq:I1_4}
    \EE [ \|\bx_{t-\tau_t^i} - \bx_{t-\tau_t^i,k}^i \|^2 ] 
    & = \EE \bigg[\bigg\| \sum_{m=0}^{k-1} \eta_l \bg_{t-\tau_t^i,m}^i \bigg\|^2 \bigg]\notag\\
    & \leq 5K \eta_l^2 (\sigma^2 + 6K \sigma_g^2) + 30K^2\eta_l^2 \EE [\|\nabla f(\bx_{t-\tau_t^i}) \|^2].
\end{align}
For the first term in Eq. \eqref{eq:I1_3}, 
since by $\forall \ba_i, \|\sum_{i=1}^{n} \ba_i\|^2 \leq n \sum_{i=1}^{n}\|\ba_i\|^2 $, there is 
\begin{align}
    & \EE [\|\bx_t - \bx_{t-\tau_t^i}\|^2] = \EE  \bigg[\bigg\| \sum_{s = t - \tau_t^i}^{t-1} (\bx_{s+1} - \bx_s) \bigg\|^2\bigg] \leq \tau_t^i \sum_{s = t - \tau_t^i}^{t-1} \EE [\|\bx_{s+1} - \bx_s\|^2] \leq \tau_t^i \sum_{s = t - \tau_t^i}^{t-1} \EE\bigg[\bigg\| \eta \frac{\bbm_s}{\sqrt{\hat\bv_s} + \epsilon}\bigg\|^2\bigg],
\end{align}
then by decomposing stochastic noise, 
\begin{align} \label{eq:x-x}
    & \EE[\|\bx_t - \bx_{t-\tau_t^i}\|^2] \notag\\
    & \leq \frac{\eta^2 \tau_t^i }{\epsilon^2} \sum_{s = t - \tau_t^i}^{t-1}  \EE[\EE_s\|\bbm_s\|^2 ]]\notag\\
    & = \frac{\eta^2 \tau_t^i }{\epsilon^2} \sum_{s = t - \tau_t^i}^{t-1}  \EE\bigg[\EE_s\bigg[\bigg\|(1-\beta_1) \sum_{u=1}^s \beta_1^{s-u} \frac{1}{M} \sum_{j \in \cM_u} \sum_{k=0}^{K-1} \eta_l [\bg_{u-\tau_u^j,k}^j - \nabla F_j(\bx_{u-\tau_u^j,k}^j) + \nabla F_j(\bx_{u-\tau_u^j,k}^j)]\bigg\|^2 \bigg]\notag\\
    & \leq \frac{2\eta^2 \tau_t^i }{\epsilon^2} \sum_{s = t - \tau_t^i}^{t-1} (1-\beta_1)\sum_{u=1}^{s} \beta_{1}^{s-u} \frac{K \eta_l^2 }{M}\sigma^2 + \frac{2\eta^2 \tau_t^i }{\epsilon^2} \sum_{s = t - \tau_t^i}^{t-1} (1-\beta_1) \sum_{u=1}^{s} \beta_{1}^{s-u} \frac{\eta_l^2}{M^2} \EE\bigg[ \bigg\| \sum_{j\in \cM_u} \sum_{k=0}^{K-1}\nabla F_j (\bx_{u-\tau_u^j,k}^j)\bigg\|^2\bigg]\notag\\
    & \leq \frac{2\eta^2 (\tau_t^i)^2 }{\epsilon^2} \frac{K \eta_l^2}{M}\sigma^2 + \frac{2\eta^2 \tau_t^i }{\epsilon^2} \sum_{s = t - \tau_t^i}^{t-1} (1-\beta_1) \sum_{u=1}^{s} \beta_{1}^{s-u} \frac{\eta_l^2}{M^2} \EE\bigg[\bigg\| \sum_{j\in \cM_u} \sum_{k=0}^{K-1}\nabla F_j (\bx_{u-\tau_u^j,k}^j)\bigg\|^2\bigg],
\end{align}
where the first inequality holds by decomposing the momentum $\bbm_s$, i.e., $\bbm_s = (1-\beta_1) \sum_{u=1}^{s} \beta_{1}^{s-u} \bDelta_u = (1-\beta_1) \sum_{u=1}^{s} \beta_{1}^{s-u} \frac{1}{M} \sum_{j \in \cM_u} \sum_{k=0}^{K-1} \eta_l \bg_{u-\tau_u^j,k}^j $. The second inequality holds by $\|\ba + \bb\|^2 \leq 2 \|\ba\|^2 + 2\|\bb\|^2$ and the fact of $\EE[\cdot]] = \EE[\cdot]$, and the third inequality holds by $ (1-\beta_1) \sum_{u=1}^{s} \beta_{1}^{s-u} \leq 1$.

Following Lemma \ref{lm:gmv-bound}, $\frac{1}{C_G} \|\bx\|\leq \big\|\frac{\bx}{\sqrt{\hat\bv_t}+\epsilon} \big\| \leq \frac{1}{\epsilon} \|\bx\|$ and $C_G = \eta_l KG + \epsilon$, plugging Eq. \eqref{eq:I1-2}, Eq. \eqref{eq:I1_3} and Eq. \eqref{eq:I1_4} to \eqref{eq:I1-1}, we have 
\begin{align}
    \EE[I_1] \leq & - \frac{\eta  \eta_l K}{2 C_G} \EE[\|\nabla f(\bx_t)\|^2] - \frac{\eta  \eta_l}{2K C_G} \EE\bigg[\bigg\|\frac{1}{N} \sum_{i=1}^N \sum_{k=0}^{K-1} \nabla F_i(\bx_{t-\tau_t^i,k}^i) \bigg\|^2\bigg] \notag\\
    & + \frac{\eta  \eta_l K L^2 }{\epsilon} \bigg[5K \eta_l^2 (\sigma^2 + 6K \sigma_g^2) + 30K^2\eta_l^2 \frac{1}{N} \sum_{i=1}^N \EE[\|\nabla f(\bx_{t-\tau_t^i}) \|^2]\bigg]
    + \frac{2\eta_l^3 \eta^3 K^2 L^2  }{M \epsilon^3}\sigma^2 \frac{1}{N} \sum_{i=1}^N (\tau_t^i)^2 \notag\\
    & + \frac{2\eta_l^3 \eta^3 K L^2 }{M^2 \epsilon^3}\frac{1}{N} \sum_{i=1}^N \tau_t^i\sum_{s = t - \tau_t^i}^{t-1} (1-\beta_1) \sum_{u=1}^{s} \beta_{1}^{s-u} \EE\bigg[ \bigg\| \sum_{j\in \cM_u} \sum_{k=0}^{K-1}\nabla F_j (\bx_{u-\tau_u^j,k}^j)\bigg\|^2\bigg].
\end{align}

\textbf{Bounding $I_2$}
\begin{align}
    I_2 & = - \EE\bigg[\bigg\langle \nabla f(\bz_{t}), \frac{\beta_1}{1-\beta_1} \cdot \eta \bigg(\frac{1}{\sqrt{\hat\bv_t}+ \epsilon} - \frac{1}{\sqrt{\hat\bv_{t-1}}+ \epsilon} \bigg)\bbm_{t-1} \bigg\rangle\bigg] \notag\\
    & = -\eta \EE\bigg[\bigg\langle \nabla f(\bz_{t}) - \nabla f(\bx_{t}) + \nabla f(\bx_{t}), \frac{\beta_1}{1-\beta_1} \bigg(\frac{1}{\sqrt{\hat\bv_t}+ \epsilon} - \frac{1}{\sqrt{\hat\bv_{t-1}}+ \epsilon} \bigg)\bbm_{t-1} \bigg\rangle\bigg] \notag\\
    & \leq \eta \EE \bigg[\|\nabla f(\bx_t)\| \bigg\|\frac{\beta_1}{1-\beta_1} \bigg(\frac{1}{\sqrt{\hat\bv_t}+ \epsilon} - \frac{1}{\sqrt{\hat\bv_{t-1}}+ \epsilon} \bigg)\bbm_{t-1}\bigg\|\bigg] \notag\\
    & \quad + \eta^2 L \EE\bigg[\bigg\|\frac{\beta_1}{1-\beta_1} \frac{\bbm_{t-1}}{\sqrt{\hat\bv_{t-1}}+ \epsilon} \bigg\| \cdot \bigg\|\frac{\beta_1}{1-\beta_1} \bigg(\frac{1}{\sqrt{\hat\bv_t}+ \epsilon} - \frac{1}{\sqrt{\hat\bv_{t-1}}+ \epsilon} \bigg)\bbm_{t-1}\bigg\|\bigg] \notag\\
    & \leq \frac{\beta_1}{1-\beta_1} \eta_l \eta K G^2 \EE\bigg[\bigg\|\frac{1}{\sqrt{\hat\bv_t}+ \epsilon} - \frac{1}{\sqrt{\hat\bv_{t-1}}+ \epsilon} \bigg\|_1\bigg] + \frac{\beta_1^2}{(1-\beta_1)^2 \epsilon}\eta_l^2 \eta^2 K^2 G^2 L\EE\bigg[\bigg\|\frac{1}{\sqrt{\hat\bv_t}+ \epsilon} - \frac{1}{\sqrt{\hat\bv_{t-1}}+ \epsilon} \bigg\|_1\bigg],
\end{align}
where the first inequality holds by $\langle \ba, \bb \rangle \leq \|\ba\| \|\bb\|$ and $L$-smoothness of $f$, i.e., $\|\nabla f(\bz_t) - \nabla f(\bx_t)\| \leq L \|\bz_t - \bx_t\| $, and by the definition of $\bz_t$, there is $\bz_t - \bx_t = \frac{\beta_1}{1-\beta_1} \frac{\bbm_{t-1}}{\sqrt{\hat\bv_{t-1}}+ \epsilon}$. The second inequality holds by Lemma~\ref{lm:gmv-bound}.

\textbf{Bounding $I_3$}
\begin{align}
    I_3 & = \frac{\eta^2 L}{2}\EE\bigg[\bigg\|\frac{ \bDelta_t}{\sqrt{\hat\bv_t} + \epsilon} - \frac{\beta_1}{1-\beta_1} \bigg(\frac{1}{\sqrt{\hat\bv_t}+ \epsilon} - \frac{1}{\sqrt{\hat\bv_{t-1}}+ \epsilon} \bigg)\bbm_{t-1}\bigg\|^2\bigg] \notag\\
    & \leq \eta^2 L \EE\bigg[\bigg\| \frac{ \bDelta_t}{\sqrt{\hat\bv_t} + \epsilon} \bigg\|^2 \bigg] + \eta^2 L \EE\bigg[\bigg\|\frac{\beta_1}{1-\beta_1}\bigg(\frac{1}{\sqrt{\hat\bv_t}+ \epsilon} - \frac{1}{\sqrt{\hat\bv_{t-1}}+ \epsilon} \bigg)\bbm_{t-1}\bigg\|^2\bigg] \notag\\
    & \leq \eta^2 L \EE\bigg[\bigg\| \frac{ \bDelta_t}{\sqrt{\hat\bv_t} + \epsilon} \bigg\|^2 \bigg] + \eta^2 L \frac{\beta_1^2}{(1-\beta_1)^2} \eta_l^2 K^2 G^2 \EE\bigg[\bigg\|\frac{1}{\sqrt{\hat\bv_t}+ \epsilon} - \frac{1}{\sqrt{\hat\bv_{t-1}}+ \epsilon} \bigg\|^2\bigg] \notag\\
    & \leq \frac{\eta^2 L}{\epsilon^2} \EE[\| \bDelta_t\|^2 ] + \eta^2 L \frac{\beta_1^2}{(1-\beta_1)^2} \eta_l^2 K^2 G^2 \EE\bigg[\bigg\|\frac{1}{\sqrt{\hat\bv_t}+ \epsilon} - \frac{1}{\sqrt{\hat\bv_{t-1}}+ \epsilon} \bigg\|^2\bigg],
\end{align}
where the first inequality follows by Cauchy-Schwarz inequality, i.e., $\forall \ba_i, \|\sum_{i=1}^{n} \ba_i\|^2 \leq n \sum_{i=1}^{n}\|\ba_i\|^2 $, and the second one holds by Lemma \ref{lm:gmv-bound}.

\textbf{Bounding $I_4$}
\begin{align}
    I_4 & = \EE\bigg[\bigg\langle (\nabla f(\bz_{t}) - \nabla f(\bx_{t})), \eta \frac{ \bDelta_t}{\sqrt{\hat\bv_t} + \epsilon} \bigg\rangle\bigg] \notag\\
    & \leq \EE\bigg[\|\nabla f(\bz_{t}) - \nabla f(\bx_{t})\| \bigg\|\eta \frac{ \bDelta_t}{\sqrt{\hat\bv_t} + \epsilon} \bigg\|\bigg] \notag\\
    & \leq L \EE\bigg[\|\bz_{t} - \bx_{t}\| \bigg\|\eta \frac{ \bDelta_t}{\sqrt{\hat\bv_t} + \epsilon} \bigg\|\bigg] \notag\\
    & \leq \frac{\eta^2 L}{2} \EE\bigg[\bigg\|\frac{\beta_1}{1-\beta_1}\frac{\bbm_t}{\sqrt{\hat\bv_t} + \epsilon} \bigg\|^2 \bigg] + \frac{\eta^2 L}{2} \EE\bigg[\bigg\|\frac{ \bDelta_t}{\sqrt{\hat\bv_t} + \epsilon} \bigg\|^2 \bigg] \notag\\
    & \leq \frac{\eta^2 L}{2\epsilon^2 } \frac{\beta_1^2}{(1-\beta_1)^2} \EE[\|\bbm_t\|^2] +  \frac{\eta^2 L}{2\epsilon^2 } \EE[\|\bDelta_t\|^2],
\end{align}
where the second inequality holds by Assumption \ref{as:smooth} (the $L$-smoothness of $f$), and the third inequality holds by the definition of $\bz_t$ and the inequality $\|\ba \| \|\bb \| \leq \frac{1}{2} \|\ba\|^2 + \frac{1}{2} \|\bb\|^2$.

\textbf{Merging pieces. }
Therefore, by merging pieces together, we have 
\begin{align}
    & \EE[f(\zb_{t+1}) - f(\zb_t)] = \EE[I_1 + I_2 + I_3 + I_4 ]\notag\\
    \leq & - \frac{\eta  \eta_l K}{2 C_G} \EE[\|\nabla f(\bx_t)\|^2] - \frac{\eta  \eta_l}{2K C_G} \EE\bigg[\bigg\|\frac{1}{N} \sum_{i=1}^N \sum_{k=0}^{K-1} \nabla F_i(\bx_{t-\tau_t^i,k}^i) \bigg\|^2\bigg] \notag\\
    & + \frac{\eta  \eta_l K L^2 }{\epsilon} \bigg[5K \eta_l^2 (\sigma^2 + 6K \sigma_g^2) + 30K^2\eta_l^2 \frac{1}{N} \sum_{i=1}^N \EE[\|\nabla f(\bx_{t-\tau_t^i}) \|^2]\bigg] + \frac{2\eta^3 \eta_l^3 K^2 L^2 }{M \epsilon^3}\sigma^2 \frac{1}{N} \sum_{i=1}^N (\tau_t^i)^2 \notag\\
    & + \frac{2\eta^3 \eta_l^3 K L^2 }{M^2 \epsilon^3}\frac{1}{N} \sum_{i=1}^N\tau_t^i \sum_{s = t - \tau_t^i}^{t-1} (1-\beta_1) \sum_{u=1}^{s} \beta_{1}^{s-u} \EE\bigg[ \bigg\| \sum_{j\in \cM_u} \sum_{k=0}^{K-1}\nabla F_j (\bx_{u-\tau_u^j,k}^j)\bigg\|^2\bigg] \notag\\
    & + \frac{\beta_1}{1-\beta_1} \eta \eta_l K G^2 \EE\bigg[\bigg\|\frac{1}{\sqrt{\hat\bv_t}+ \epsilon} - \frac{1}{\sqrt{\hat\bv_{t-1}}+ \epsilon} \bigg\|_1\bigg] + \frac{\beta_1^2}{(1-\beta_1)^2 \epsilon}\eta^2 \eta_l^2 K^2 G^2 L\EE\bigg[\bigg\|\frac{1}{\sqrt{\hat\bv_t}+ \epsilon} - \frac{1}{\sqrt{\hat\bv_{t-1}}+ \epsilon} \bigg\|_1\bigg] \notag\\
    & + \frac{\eta^2 L}{\epsilon^2} \EE[\| \bDelta_t\|^2 ] + \frac{\beta_1^2}{(1-\beta_1)^2} \eta^2 \eta_l^2 K^2 G^2 L \EE\bigg[\bigg\|\frac{1}{\sqrt{\hat\bv_t}+ \epsilon} - \frac{1}{\sqrt{\hat\bv_{t-1}}+ \epsilon} \bigg\|^2\bigg] \notag\\
    & + \frac{\eta^2 L}{2\epsilon^2 } \frac{\beta_1^2}{(1-\beta_1)^2} \EE[\|\bbm_t\|^2] +  \frac{\eta^2 L}{2\epsilon^2 } \EE[\|\bDelta_t\|^2].
\end{align}
Denote a few sequences: $\bG_t = \sum_{j\in \cM_t} \sum_{k=0}^{K-1}\nabla F_j (\bx_{t-\tau_t^j,k}^j)$ and $\bV_t = \frac{1}{\sqrt{\hat\bv_t}+ \epsilon} - \frac{1}{\sqrt{\hat\bv_{t-1}}+ \epsilon}$, then re-write and organize the above inequality, we have 
\begin{align}
    & \EE[f(\bz_{t+1}) - f(\bz_t)] \notag\\
    \leq & - \frac{\eta \eta_l K}{2 C_G} \EE[\|\nabla f(\bx_t)\|^2] - \frac{\eta \eta_l}{2K C_G} \EE\bigg[\bigg\|\frac{1}{N} \sum_{i=1}^N \sum_{k=0}^{K-1} \nabla F_i(\bx_{t-\tau_t^i,k}^i) \bigg\|^2\bigg] \notag\\
    & + \frac{\eta \eta_l K L^2 }{\epsilon} \bigg[5K \eta_l^2 (\sigma^2 + 6K \sigma_g^2) + 30K^2\eta_l^2 \frac{1}{N} \sum_{i=1}^N \EE[\|\nabla f(\bx_{t-\tau_t^i}) \|^2]\bigg] + \frac{2\eta^3 \eta_l^3 K^2 L^2 }{M \epsilon^3}\sigma^2 \frac{1}{N} \sum_{i=1}^N (\tau_t^i)^2 \notag\\
    & + \frac{2\eta^3 \eta_l^3 K L^2 }{M^2 \epsilon^3}\frac{1}{N} \sum_{i=1}^N \tau_t^i \sum_{s = t - \tau_t^i}^{t-1} (1-\beta_1) \sum_{u=1}^{s} \beta_{1}^{s-u} \EE[\|\bG_u\|^2] \notag\\
    & + \frac{\beta_1}{1-\beta_1} \eta \eta_l K G^2 \EE[\|\bV_t\|_1] +  \frac{\beta_1^2}{(1-\beta_1)^2 \epsilon}\eta^2 \eta_l^2 K^2 G^2 L \EE[\|\bV_t\|_1] + \frac{\beta_1^2}{(1-\beta_1)^2} \eta^2 \eta_l^2 K^2 G^2 L \EE[\|\bV_t\|^2] \notag\\
    & + \frac{3\eta^2 L}{2\epsilon^2} \EE[\| \bDelta_t\|^2 ] + \frac{\eta^2 L}{2\epsilon^2 } \frac{\beta_1^2}{(1-\beta_1)^2} \EE[\|\bbm_t\|^2].
\end{align}

Summing over $t=1$ to $T$, we have 
\begin{align} \label{eq:sum_1}
    & \EE[f(\bz_{T+1}) - f(\bz_1)] \notag\\
    \leq & - \frac{\eta \eta_l K}{2C_G} \sum_{t=1}^T   \EE[\|\nabla f(\bx_t)\|^2] + \frac{\eta \eta_l K  L^2}{\epsilon} \bigg[5K \eta_l^2 (\sigma^2 + 6K \sigma_g^2) + 30K^2 \eta_l^2 \frac{1}{N}\sum_{t=1}^T \sum_{i=1}^N  \EE[\|\nabla f(\bx_{t-\tau_t^i}) \|^2]\bigg]\notag\\
    & + \underbrace{\frac{2\eta^3 \eta_l^3 K^2 L^2}{M \epsilon^3}\sigma^2 \frac{1}{N} \sum_{i=1}^N \sum_{t=1}^T  (\tau_t^i)^2}_{A_0} + \underbrace{\frac{2\eta^3 \eta_l^3 K L^2 }{M^2 \epsilon^3}\frac{1}{N} \sum_{i=1}^N \sum_{t=1}^T \tau_t^i\sum_{s = t - \tau_t^i}^{t-1} (1-\beta_1) \sum_{u=1}^s \beta_1^{s-u}\EE[\|\bG_u\|^2]}_{A_1} \notag\\
    & + \bigg(\frac{\beta_1}{1-\beta_1} \eta \eta_l K G^2 + \frac{\beta_1^2}{(1-\beta_1)^2 \epsilon} \eta^2 \eta_l^2 K^2 G^2 L\bigg)  \sum_{t=1}^T \EE[\|\bV_t\|_1]+ \frac{\beta_1^2}{(1-\beta_1)^2}\eta^2 \eta_l^2 K^2 G^2 L\sum_{t=1}^T \EE[\|\bV_t\|^2] \notag\\
    & + \frac{3\eta^2 L}{2\epsilon^2} \sum_{t=1}^T \EE[\| \bDelta_t\|^2 ]+ \frac{\eta^2 L}{2\epsilon^2 } \frac{\beta_1^2}{(1-\beta_1)^2} \sum_{t=1}^T \EE[\|\bbm_t\|^2] \notag\\
    & - \frac{\eta_l}{2K C_G} \sum_{t=1}^T  \EE\bigg[\bigg\|\frac{1}{N} \sum_{i=1}^N \sum_{k=0}^{K-1} \nabla F_i(\bx_{t-\tau_t^i,k}^i) \bigg\|^2\bigg],
\end{align}
we have the following for term $A_0$,
\begin{align}
    A_0 = \frac{2\eta^3 \eta_l^3 K^2 L^2}{M \epsilon^3}\sigma^2 \frac{1}{N} \sum_{i=1}^N \sum_{t=1}^T  (\tau_t^i)^2 \leq \frac{2\eta^3 \eta_l^3 K^2 L^2}{M \epsilon^3}\sigma^2 \tau_{\avg}\tau_{\max} T.
\end{align}
By Lemma \ref{lm:Delta_2}, we have the following for term $A_1$,
\begin{align}
    A_1 = & \frac{2\eta^3 \eta_l^3 K L^2 }{M^2 \epsilon^3}\frac{1}{N} \sum_{i=1}^N \sum_{t=1}^T \tau_t^i \sum_{s = t - \tau_t^i}^{t-1}  (1-\beta_1) \sum_{u=1}^s \beta_1^{s-u}\EE[\|\bG_u\|^2] \notag\\
    & = \frac{2\eta^3 \eta_l^3 K L^2 }{M^2 \epsilon^3}\frac{1}{N} \sum_{i=1}^N \sum_{t=1}^T \tau_t^i \sum_{s = t - \tau_t^i}^{t-1}  (1-\beta_1) \sum_{u=1}^s \beta_1^{s-u} \bigg\{\frac{3M(N-M)}{N-1} \notag\\
    & \cdot \bigg[5K^3 L^2 \eta_l^2(\sigma^2 + 6K\sigma_g^2) + (30K^4L^2\eta_l^2 + K^2) \frac{1}{N} \sum_{j=1}^N\EE[\|\nabla f(\bx_{u-\tau_u^j})\|^2] + K^2\sigma_g^2 \bigg]  \notag\\
    & + \frac{M(M-1)}{N(N-1)} \EE\bigg[\bigg\| \sum_{j=1}^N \sum_{k=0}^{K-1} \nabla F_j(\bx_{u-\tau_u^j,k}^j) \bigg\|^2\bigg]\bigg\} \notag\\
    & = \underbrace{\frac{2 \eta^3\eta_l^3 K L^2 }{M^2 \epsilon^3}\frac{1}{N} \sum_{i=1}^N \sum_{t=1}^T \tau_t^i \sum_{s = t - \tau_t^i}^{t-1} (1-\beta_1) \sum_{u=1}^s \beta_1^{s-u} \bigg[\frac{3M(N-M)}{N-1} [5K^3 L^2 \eta_l^2(\sigma^2 + 6K\sigma_g^2) + K^2 \sigma_g^2 ] \bigg]}_{A_2}\notag\\ 
    & + \underbrace{\frac{2\eta^3 \eta_l^3 K L^2}{M^2 \epsilon^3}\frac{1}{N} \sum_{i=1}^N \sum_{t=1}^T \tau_t^i \sum_{s = t - \tau_t^i}^{t-1} (1-\beta_1) \sum_{u=1}^s \beta_1^{s-u} \bigg[\frac{3M(N-M)}{N-1} (30K^4L^2\eta_l^2 + K^2) \frac{1}{N} \sum_{j=1}^N\EE[\|\nabla f(\bx_{u-\tau_u^j})\|^2] \bigg]}_{A_3} \notag\\
    & + \underbrace{\frac{2\eta^3 \eta_l^3 K L^2}{M^2 \epsilon^3}\frac{1}{N} \sum_{i=1}^N \sum_{t=1}^T \tau_t^i \sum_{s = t - \tau_t^i}^{t-1} (1-\beta_1) \sum_{u=1}^s \beta_1^{s-u} \bigg\{ \frac{M(M-1)}{N(N-1)} \EE\bigg[\bigg\| \sum_{j=1}^N \sum_{k=0}^{K-1} \nabla F_j(\bx_{u-\tau_u^j,k}^j) \bigg\|^2\bigg] \bigg\}}_{A_4}\notag\\
    & = A_2 + A_3 + A_4.
\end{align}
For term $A_2$, then re-organizing it we have 
\begin{align}
    A_2 & \leq \frac{2\eta^3 \eta_l^3 K L^2 }{M^2 \epsilon^3}\frac{1}{N} \sum_{i=1}^N \sum_{t=1}^T (\tau_t^i)^2 \bigg[\frac{3M(N-M)}{N-1} [5K^3 L^2 \eta_l^2(\sigma^2 + 6K\sigma_g^2) + K^2\sigma_g^2 ] \bigg] \notag\\
    & \leq \frac{\eta^3 \eta_l^3 K L^2 }{M^2 \epsilon^3}\bigg[\frac{6M(N-M)}{N-1} [5K^3 L^2 \eta_l^2(\sigma^2 + 6K\sigma_g^2) + K^2\sigma_g^2]\bigg] T \tau_{\avg} \tau_{\max}.
\end{align}
For term $A_3$, we have 
\begin{align}\label{eq:A3}
    A_3 & \leq \frac{2\eta^3 \eta_l^3 K L^2 }{M^2 \epsilon^3}\frac{1}{N} \sum_{i=1}^N \sum_{t=1}^T \tau_t^i \sum_{s = t - \tau_t^i}^{t-1} (1-\beta_1) \sum_{u=1}^s \beta_1^{s-u} \bigg\{\frac{3M(N-M)}{N-1} \cdot (30K^4L^2\eta_l^2 + K^2) \frac{1}{N} \sum_{j=1}^N\EE[\|\nabla f(\bx_{u-\tau_u^j})\|^2] \bigg\} \notag\\
    & \leq \frac{\eta^3 \eta_l^3 K L^2}{M^2 \epsilon^3}\tau_{\max}^2 \sum_{t=1}^T \bigg\{\frac{6M(N-M)}{N-1} (30K^4L^2\eta_l^2 + K^2) \frac{1}{N} \sum_{j=1}^N\EE[\|\nabla f(\bx_{t-\tau_t^j})\|^2] \bigg\} \notag\\
    & \leq \frac{\eta^3 \eta_l^3 K L^2}{M^2 \epsilon^3}\tau_{\max}^3 \frac{6M(N-M)}{N-1}(30K^4L^2\eta_l^2 + K^2) \sum_{t=1}^T \EE[\|\nabla f(\bx_t)\|^2],
\end{align}
where the first inequality in Eq. \eqref{eq:A3} holds due to: 1) $\tau_t^i \leq \tau_{max}$ and 2) for a positive sequence $\ba_t, \sum_{t=1}^T \sum_{s = t - \tau_t^i}^{t-1}(1-\beta_1) \sum_{u=1}^{s} \beta_{1}^{s-u} \ba_u \leq \tau_{\max} (1-\beta_1) \sum_{t=1}^T \sum_{u=1}^{t} \beta_{1}^{t-u} \ba_u \leq \tau_{\max} \sum_{t= 1}^{T} \ba_t$. 
In details, 
\begin{align}
    & \sum_{t=1}^T \sum_{s = t - \tau_t^i}^{t-1}(1-\beta_1) \sum_{u=1}^{s} \beta_{1}^{s-u} \ba_u \notag\\
    & = \sum_{t=1}^T \sum_{s = t - \tau_t^i}^{t-1}(1-\beta_1) (\beta_1^{s-1} \ba_1 + \beta_1^{s-2} \ba_2 + \cdots + \beta_1^0 \ba_s ) \notag\\
    & = \sum_{t=1}^T (1-\beta_1) \bigg[ \sum_{s = t - \tau_t^i}^{t-1} \beta_1^{s-1} \ba_1 + \sum_{s = t - \tau_t^i}^{t-1} \beta_1^{s-2} \ba_2 + \cdots + \sum_{s = t - \tau_t^i}^{t-1} \beta_1^0 \ba_s \bigg] \notag\\
    & \leq \tau_{\max} \sum_{t=1}^T (1-\beta_1) \sum_{u=1}^{t} \beta_{1}^{t-u} \ba_u \notag\\
    & \leq \tau_{\max} \sum_{t=1}^T \ba_t. 
\end{align}
The second inequality in Eq. \eqref{eq:A3} hold by the fact of $\sum_{t=1}^T \frac{1}{N} \sum_{j=1}^N \EE[\|\nabla f(\bx_{t-\tau_t^j})\|^2] \leq \tau_{\max} \sum_{t=1}^T \EE[\|\nabla f(\bx_t)\|^2]$.
Similar, for term $A_4$, we have 
\begin{align}
    A_4 & = \frac{2\eta^3 \eta_l^3 K L^2 }{M^2 \epsilon^3}\frac{1}{N} \sum_{i=1}^N \sum_{t=1}^T \tau_t^i \sum_{s = t - \tau_t^i}^{t-1} (1-\beta_1) \sum_{u=1}^s \beta_1^{s-u} \frac{M(M-1)}{N(N-1)} \EE\bigg[\bigg\| \sum_{j=1}^N \sum_{k=0}^{K-1} \nabla F_j(\bx_{t-\tau_t^j,k}^j) \bigg\|^2\bigg] \bigg\}\notag\\
    & \quad \leq \frac{\eta^3 \eta_l^3 K L^2 }{M^2 \epsilon^3} \tau_{\max}^2  \frac{2M(M-1)}{N(N-1)} \sum_{t=1}^T \EE\bigg[\bigg\|\sum_{j=1}^N \sum_{k=0}^{K-1} \nabla F_j(\bx_{t-\tau_t^j,k}^j) \bigg\|^2\bigg] \bigg\},
\end{align}
With the term of $A_0$ to $A_4$, by Lemma \ref{lm:Vt-difference} and Lemma \ref{lm:Delta}, we have the following for Eq. \eqref{eq:sum_1},
\begin{align} \label{eq:sumT}
    & \EE[f(\bz_{T+1}) - f(\bz_1)] \notag\\
    \leq & - \frac{\eta \eta_l K}{2C_G} \sum_{t=1}^T  \EE[\|\nabla f(\bx_t)\|^2] + \frac{\eta \eta_l K  L^2}{\epsilon} \bigg[5K \eta_l^2 T(\sigma^2 + 6K \sigma_g^2) + 30K^2\eta_l^2 \frac{1}{N}\sum_{t=1}^T \sum_{i=1}^N  \EE[\|\nabla f(\bx_{t-\tau_t^i}) \|^2]\bigg]\notag\\
    & + \frac{2\eta^3\eta_l^3 K^2L^2 \tau_{\avg} \tau_{\max} T }{M \epsilon^3}\sigma^2 + \frac{\eta^3 \eta_l^3 K L^2 }{M^2 \epsilon^3} \tau_{\max}^3 \frac{6M(N-M)}{N-1} (30K^4L^2\eta_l^2 + K^2) \sum_{t=1}^T \EE[\|\nabla f(\bx_t)\|^2] \notag\\
    & + \frac{\eta^3 \eta_l^3 K L^2 }{M^2 \epsilon^3}\cdot \frac{6M(N-M)}{N-1} [5K^3 L^2 \eta_l^2(\sigma^2 + 6K\sigma_g^2) + K^2\sigma_g^2 ] 
    \cdot T \tau_{\avg}\tau_{\max} \notag\\
    & + \bigg(\frac{\beta_1}{1-\beta_1} \eta \eta_l K G^2 + \frac{\beta_1^2}{(1-\beta_1)^2 \epsilon} \eta^2 \eta_l^2 K^2 G^2 L\bigg) \frac{d}{\epsilon} +  \frac{\beta_1^2}{(1-\beta_1)^2} \eta^2 \eta_l^2 K^2 G^2 L\frac{d}{\epsilon^2} \notag\\
    & + \bigg(\frac{3\eta^2 L}{2\epsilon^2}+ \frac{\eta^2 L}{2\epsilon^2 } \frac{\beta_1^2}{(1-\beta_1)^2} \bigg)\sum_{t=1}^T \bigg\{\frac{2\eta_l^2 K}{M} \sigma^2 + \frac{2\eta_l^2(N-M)}{NM(N-1)}\bigg[15NK^3 L^2 \eta_l^2(\sigma^2 + 6K\sigma_g^2) + (90K^4L^2\eta_l^2 +3K^2 )\notag\\
    & \cdot\sum_{i=1}^N \EE[\|\nabla f(\bx_{t-\tau_t^i})\|^2] + 3NK^2\sigma_g^2 \bigg]+ \frac{2\eta_l^2 (M-1)}{NM(N-1)} \EE\bigg[ \bigg\|\sum_{i=1}^N \sum_{k=0}^{K-1}\nabla F_i (\bx_{t-\tau_t^i,k}^i)\bigg\|^2\bigg]\bigg\} \notag\\
    & + \frac{2\eta^3 \eta_l^3 K L^2}{M^2 \epsilon^3}\tau_{\max}^2 \frac{M(M-1)}{N(N-1)} \sum_{t=1}^T \EE\bigg[\bigg\|\sum_{j=1}^N \sum_{k=0}^{K-1} \nabla F_j(\bx_{t-\tau_t^j,k}^j) \bigg\|^2\bigg] \bigg\}\notag\\
    & - \frac{\eta \eta_l}{2K N^2 C_G} \sum_{t=1}^T \EE\bigg[\bigg\| \sum_{i=1}^N \sum_{k=0}^{K-1} \nabla F_i(\bx_{t-\tau_t^i,k}^i) \bigg\|^2\bigg],
\end{align}
thus
\begin{align} \label{eq:sumT_2}
    & \EE[f(\bz_{T+1}) - f(\bz_1)] \notag\\
    \leq & - \frac{\eta \eta_l K}{2C_G} \sum_{t=1}^T  \EE[\|\nabla f(\bx_t)\|^2] + \frac{\eta \eta_l K  L^2}{\epsilon} \bigg[5K \eta_l^2 T(\sigma^2 + 6K \sigma_g^2) + 30K^2\eta_l^2 \tau_{\max}\sum_{t=1}^T  \EE[\|\nabla f(\bx_t) \|^2]\bigg]\notag\\
    & + \frac{2\eta^3\eta_l^3 K^2L^2 \tau_{\avg}\tau_{\max} T }{M \epsilon^3}\sigma^2 + \frac{\eta^3 \eta_l^3 K L^2 }{M^2 \epsilon^3} \tau_{\max}^3 \frac{6M(N-M)}{N-1} (30K^4L^2\eta_l^2 + K^2) \sum_{t=1}^T \EE[\|\nabla f(\bx_t)\|^2] \notag\\
    & + \frac{\eta^3 \eta_l^3 K L^2 }{M^2 \epsilon^3} \frac{6M(N-M)}{N-1}[5K^3 L^2 \eta_l^2(\sigma^2 + 6K\sigma_g^2) +  K^2\sigma_g^2]\cdot T \tau_{\avg}\tau_{\max} \notag\\
    & + \bigg(\frac{\beta_1}{1-\beta_1} \eta \eta_l K G^2 + \frac{\beta_1^2}{(1-\beta_1)^2 \epsilon} \eta^2 \eta_l^2 K^2 G^2 L\bigg) \frac{d}{\epsilon} + \frac{\beta_1^2}{(1-\beta_1)^2} \eta^2 \eta_l^2 K^2 G^2 L\frac{d}{\epsilon^2} \notag\\
    & + \bigg(\frac{3\eta^2 L}{\epsilon^2}+ \frac{\eta^2 L}{\epsilon^2 } \frac{\beta_1^2}{(1-\beta_1)^2} \bigg)\sum_{t=1}^T  \bigg\{\frac{K \eta_l^2}{M} \sigma^2 + \frac{\eta_l^2(N-M)}{NM(N-1)}\bigg[15NK^3 L^2 \eta_l^2(\sigma^2 + 6K\sigma_g^2) + 3NK^2\sigma_g^2 \bigg]\bigg\} \notag\\
    & + \bigg(\frac{3\eta^2 L}{\epsilon^2}+ \frac{\eta^2 L}{\epsilon^2 } \frac{\beta_1^2}{(1-\beta_1)^2} \bigg) \frac{\eta_l^2(N-M)}{M(N-1)} (90K^4L^2\eta_l^2 +3K^2 ) N\tau_{\max} \sum_{t=1}^T  \EE[\|\nabla f(\bx_{t})\|^2] \notag\\
    & + \bigg[\bigg(\frac{3\eta^2 L}{\epsilon^2}+ \frac{\eta^2 L}{\epsilon^2 } \frac{\beta_1^2}{(1-\beta_1)^2} \bigg) \frac{\eta_l^2 (M-1)}{NM(N-1)} + \frac{2\eta^3 \eta_l^3 KL^2}{M^2 \epsilon^3} \tau_{\max}^2 \frac{M(M-1)}{N(N-1)} - \frac{\eta \eta_l}{2K N^2 C_G}\bigg] \notag\\
    & \quad \cdot \sum_{t=1}^T \EE\bigg[\bigg\| \sum_{i=1}^N \sum_{k=0}^{K-1} \nabla F_i(\bx_{t-\tau_t^i,k}^i) \bigg\|^2\bigg].
\end{align}
If the learning rates satisfy $\eta_l \leq \frac{1}{8KL}$ and
\begin{align}
    & \eta \eta_l \leq \frac{\epsilon^2 M(N-1)}{4 C_G N(M-1)KL}\bigg( 3+ \frac{\beta_1^2}{(1-\beta_1)^2}\bigg)^{-1}, \eta \eta_l \leq \frac{\sqrt{\epsilon^3 M(N-1)}}{\sqrt{8 C_G N(M-1)}} \frac{1}{L\tau_{\max}}, \notag\\
    & \eta_l \leq \frac{\sqrt{\epsilon}}{\sqrt{360 C_G \tau_{\max}}KL}, \eta \eta_l \leq \frac{\epsilon^2 M(N-1)}{60 C_G N(N-M) KL \tau_{\max} }\bigg( 3+ \frac{\beta_1^2}{(1-\beta_1)^2}\bigg)^{-1}, \notag\\
    & \eta \eta_l \leq \frac{\sqrt{\epsilon^3 M(N-1)} }{12\sqrt{C_G N(M-1) \tau_{\max}^3 } KL },
\end{align}
then we have
\begin{align}
    & \bigg(\frac{3\eta^2 L}{\epsilon^2}+ \frac{\eta^2 L}{\epsilon^2 } \frac{\beta_1^2}{(1-\beta_1)^2} \bigg) \frac{\eta_l^2 (M-1)}{NM(N-1)} + \frac{2\eta^3 \eta_l^3 KL^2}{M^2 \epsilon^3} \tau_{\max}^2 \frac{M(M-1)}{N(N-1)}  - \frac{\eta \eta_l}{2K N^2 C_G} \leq 0 \notag\\
    & \frac{\eta \eta_l K L^2}{\epsilon} 30 K^2 \eta_l^2 \tau_{\max} +  \bigg(\frac{3\eta^2 L}{\epsilon^2}+ \frac{\eta^2 L}{\epsilon^2 } \frac{\beta_1^2}{(1-\beta_1)^2} \bigg) \frac{\eta_l^2(N-M)}{M(N-1)} (90K^4L^2\eta_l^2 +3K^2 ) N\tau_{\max} \notag\\
    & \quad + \frac{\eta^3 \eta_l^3 K L^2 }{M^2 \epsilon^3} \tau_{\max}^3 \frac{6M(N-M)}{N-1} (30K^4L^2\eta_l^2 + K^2) \leq \frac{\eta \eta_l K}{4C_G}.
\end{align}
Thus Eq. \eqref{eq:sumT_2}  becomes
\begin{align}\label{eq:sumT_3}
    \frac{\sum_{t=1}^T  \EE[\|\nabla f(\bx_t)\|^2]}{T} & \leq \frac{4C_G}{\eta \eta_l K T}[f(\bz_1) - \EE[f(\bz_{T+1})]] + 20 C_G \epsilon^{-1} L^2 K \eta_l^2 (\sigma^2 + 6K \sigma_g^2) + \frac{8C_G \eta^2 \eta_l^2 K L^2 \tau_{\avg}\tau_{\max} }{M \epsilon^3}  \sigma^2 \notag\\
    & \quad  + \frac{24C_G \eta^2 \eta_l^2 L^2 \tau_{\avg}\tau_{\max}}{M \epsilon^3} \frac{N-M}{N-1}\cdot[5K^3 L^2 \eta_l^2(\sigma^2 + 6K\sigma_g^2) + K^2 \sigma_g^2] \notag\\
    & \quad + \bigg(\frac{\beta_1}{1-\beta_1} G^2 + \frac{\beta_1^2}{(1-\beta_1)^2 \epsilon}\eta \eta_l K G^2 L \bigg) \frac{4C_Gd}{T \epsilon} +  \frac{\beta_1^2}{(1-\beta_1)^2} \eta \eta _l K G^2 L\frac{4C_Gd}{T \epsilon^2} \notag\\
    & \quad + 4C_G\bigg(\frac{3\eta L}{\epsilon^2}+ \frac{\eta L}{\epsilon^2 } \frac{\beta_1^2}{(1-\beta_1)^2} \bigg) \bigg\{ \frac{\eta_l}{M} \sigma^2 + \frac{\eta_l(N-M)}{M(N-1)}[15K^2 L^2 \eta_l^2(\sigma^2 + 6K\sigma_g^2) + 3K\sigma_g^2] \bigg\}.
\end{align}
With $C_G = \eta_l K G + \epsilon$, Eq. \eqref{eq:sumT_3} becomes 
\begin{align}\label{eq:sumT_5}
    & \frac{\sum_{t=1}^T  \EE[\|\nabla f(\bx_t)\|^2]}{T} \notag\\
    & \leq \frac{4(\eta_l K G + \epsilon)}{\eta \eta_l K T}[f(\bz_1) - \EE[f(\bz_{t+1})]] + 20 (\eta_l K G + \epsilon) \epsilon^{-1} L^2 K \eta_l^2 (\sigma^2 + 6K \sigma_g^2) \notag\\
    & \quad + \frac{8(\eta_l K G + \epsilon) \eta^2 \eta_l^2 K L^2 \tau_{\avg}\tau_{\max}}{M \epsilon^3}\sigma^2 + \frac{24(\eta_l K G + \epsilon) \eta^2 \eta_l^2 L^2 \tau_{\avg}\tau_{\max}}{M \epsilon^3} \frac{N-M}{N-1} \notag\\
    & \quad\cdot[5K^3 L^2 \eta_l^2(\sigma^2 + 6K\sigma_g^2) + K^2 \sigma_g^2] \notag\\
    & \quad + \bigg(\frac{\beta_1}{1-\beta_1} G^2 + \frac{\beta_1^2}{(1-\beta_1)^2 \epsilon}\eta \eta_l K G^2 L \bigg) \frac{4(\eta_l K G + \epsilon)d}{T \epsilon} +  \frac{\beta_1^2}{(1-\beta_1)^2} \eta \eta _l K G^2 L\frac{4(\eta_l K G + \epsilon)d}{T \epsilon^2} \notag\\
    & \quad + 4(\eta_l K G + \epsilon)\bigg(\frac{3\eta L}{\epsilon^2}+ \frac{\eta L}{\epsilon^2 } \frac{\beta_1^2}{(1-\beta_1)^2} \bigg) \bigg\{ \frac{\eta_l}{M} \sigma^2 + \frac{\eta_l(N-M)}{M(N-1)}[15K^2 L^2 \eta_l^2(\sigma^2 + 6K\sigma_g^2) + 3K\sigma_g^2] \bigg\}.
\end{align}
For $\beta_ 1 = 0$, with the definition of $\cF = f(\bx_1) - \min_{\bx} f(\bx)$, we have the following bound
\begin{align}\label{eq:sumT_6}
    & \frac{\sum_{t=1}^T  \EE[\|\nabla f(\bx_t)\|^2]}{T} \notag\\
    & \leq \frac{4(\eta_l K G + \epsilon)}{\eta \eta_l K T}[f(\bz_1) - \EE[f(\bz_{t+1})]] + 20 (\eta_l K G + \epsilon) \epsilon^{-1} L^2 K \eta_l^2 (\sigma^2 + 6K \sigma_g^2) \notag\\
    & \quad + \frac{8(\eta_l K G + \epsilon) \eta^2 \eta_l^2 K L^2 \tau_{\avg}\tau_{\max}}{M \epsilon^3}\sigma^2 + \frac{24(\eta_l K G + \epsilon) \eta^2 \eta_l^2 K L^2 \tau_{\avg}\tau_{\max}}{M \epsilon^3} \frac{N-M}{N-1} \notag\\
    & \quad\cdot[5\eta_l^2 K^2 L^2 (\sigma^2 + 6K\sigma_g^2) + K \sigma_g^2] \notag\\
    & \quad + 12(\eta_l K G + \epsilon)\frac{\eta L}{\epsilon^2} \bigg\{ \frac{\eta_l}{M} \sigma^2 + \frac{\eta_l(N-M)}{M(N-1)}[15\eta_l^2 K^2 L^2 (\sigma^2 + 6K\sigma_g^2) + 3K\sigma_g^2] \bigg\}.
\end{align}
This concludes the proof.
\end{proof}

\begin{proof}[Proof of Corollary \ref{cor:fadas}]
From Eq. \eqref{eq:sumT_6}, we have the following bound
\begin{align}\label{eq:sumT_5.1}
    & \frac{\sum_{t=1}^T  \EE[\|\nabla f(\bx_t)\|^2]}{T} \notag\\
    & = \cO\bigg( \frac{(\eta_l K G + \epsilon)}{\eta \eta_l K T} \cF + (\eta_l K G + \epsilon) \frac{\eta_l^2 K L^2 (\sigma^2 + K \sigma_g^2)}{\epsilon} \notag\\
    & \quad + \frac{(\eta_l K G + \epsilon) \eta^2 \eta_l^2 K L^2 \tau_{\avg}\tau_{\max}}{M \epsilon^3}\sigma^2 + \frac{(\eta_l K G + \epsilon) \eta^2 \eta_l^2 L^2 \tau_{\avg}\tau_{\max}}{M \epsilon^3} \frac{N-M}{N-1} [K^3 L^2 \eta_l^2(\sigma^2 + K\sigma_g^2) + K^2 \sigma_g^2] \notag\\
    & \quad + (\eta_l K G + \epsilon) \frac{\eta \eta_l L}{M \epsilon^2} \bigg[\sigma^2 + \frac{N-M}{N-1} K\sigma_g^2 \bigg] + (\eta_l K G + \epsilon) \frac{\eta \eta_l K L}{M \epsilon^2} \frac{N-M}{N-1}[\eta_l^2 K L^2 (\sigma^2 + K\sigma_g^2)] \bigg).
\end{align}
Reorganizing Eq. \eqref{eq:sumT_5.1}, particularly merging the stochastic variance and the global variance, we get 
\begin{align}\label{eq:sumT_5.2}
    & \frac{\sum_{t=1}^T  \EE[\|\nabla f(\bx_t)\|^2]}{T} \notag\\
    & = \cO\bigg(\frac{(\eta_l K G + \epsilon)}{\eta \eta_l K T} \cF + (\eta_l K G + \epsilon) \frac{\eta_l^2 K L^2 }{\epsilon} (\sigma^2 + K \sigma_g^2)\notag\\
    & \quad + (\eta_l K G + \epsilon) \frac{ \eta^2 \eta_l^2 K L^2}{M \epsilon^3} \bigg(\tau_{\avg}\tau_{\max}\sigma^2 + \frac{N-M}{N-1} \tau_{\avg}\tau_{\max} K\sigma_g^2 \bigg)\notag\\
    & \quad + \frac{(\eta_l K G + \epsilon) \eta^2 \eta_l^4 K^3 L^4 \tau_{\avg}\tau_{\max}}{M \epsilon^3} \frac{N-M}{N-1} (\sigma^2 + K\sigma_g^2) \notag\\
    & \quad + (\eta_l K G + \epsilon) \frac{\eta \eta_l L}{M \epsilon^2} \bigg[\sigma^2 + \frac{N-M}{N-1} K\sigma_g^2 \bigg] + (\eta_l K G + \epsilon) \frac{\eta \eta_l^3 K^2 L^3}{M \epsilon^2} \frac{N-M}{N-1}(\sigma^2 + K\sigma_g^2)\bigg).
\end{align}
By choosing $\eta = \Theta (\sqrt{M})$ and $\eta_l = \Theta\big( \frac{\sqrt{\cF}}{\sqrt{TK(\sigma^2 + K\sigma_g^2) L}} \big)$, which implies $\eta \eta_l = \Theta \big( \frac{\sqrt{\cF M}}{\sqrt{TK(\sigma^2 + K\sigma_g^2) L}} \big)$, and $\eta_l K G = \Theta \big( \frac{\sqrt{\cF K} G}{\sqrt{T(\sigma^2 + K\sigma_g^2) L}} \big)$,
\begin{align}\label{eq:sumT_5.3}
    & \frac{\sum_{t=1}^T  \EE[\|\nabla f(\bx_t)\|^2]}{T} \notag\\
    & = \cO\bigg\{ \frac{\cF G}{T \sqrt{M}} + \frac{\epsilon \sqrt{\cF (\sigma^2 + K \sigma_g^2) L }}{\sqrt{TKM}}+ \bigg( \frac{\sqrt{\cF K} G}{\sqrt{T(\sigma^2 + K\sigma_g^2) L}} + \epsilon \bigg)  \frac{\cF L}{T \epsilon}\notag\\
    & \quad + \bigg( \frac{\sqrt{\cF K} G}{\sqrt{T(\sigma^2 + K\sigma_g^2) L}} + \epsilon \bigg) \bigg(\frac{\cF L \tau_{\avg}\tau_{\max}}{T \epsilon^3} + \frac{N-M}{N-1}\frac{\cF L \tau_{\avg}\tau_{\max}}{T \epsilon^3} \bigg) \notag\\
    & \quad + \bigg( \frac{\sqrt{\cF K} G}{\sqrt{T(\sigma^2 + K\sigma_g^2) L}} + \epsilon \bigg) \bigg(\frac{\sqrt{\cF L} \sigma}{\sqrt{TKM}} + \frac{N-M}{N-1} \frac{\sqrt{\cF L} \sigma_g}{\sqrt{TM}} \bigg) + \frac{C_1}{T^{3/2}} + \frac{C_2}{T^2} \bigg\}.
\end{align}
We again generalize terms with smaller $T$ dependency orders, then we have 
\begin{align}\label{eq:sumT_5.4}
    & \frac{\sum_{t=1}^T  \EE[\|\nabla f(\bx_t)\|^2]}{T} \notag\\
    & = \cO\bigg( \frac{\cF G}{T \sqrt{M}} + \frac{\epsilon \sqrt{\cF L} \sigma}{\sqrt{TKM}} + \frac{\epsilon \sqrt{\cF L} \sigma_g}{\sqrt{TM}} + \frac{\cF L}{T} + \frac{\cF L \tau_{\avg}\tau_{\max}}{T \epsilon^2} + \frac{N-M}{N-1}\frac{\cF L \tau_{\avg}\tau_{\max}}{T \epsilon^2}\notag\\
    & \quad + \frac{\cF G}{T\sqrt{M}} + \frac{\epsilon \sqrt{\cF L} \sigma}{\sqrt{TKM}} + \frac{N-M}{N-1} \frac{\cF G}{T\sqrt{M}} + \frac{N-M}{N-1} \frac{\epsilon \sqrt{\cF L} \sigma_g}{\sqrt{TM}}+ \frac{C_1}{T^{3/2}} + \frac{C_2}{T^2} \bigg) \notag\\
    & = \cO\bigg( \frac{\sqrt{\cF} \sigma}{\sqrt{TKM}} + \frac{\sqrt{\cF} \sigma_g}{\sqrt{TM}} + \frac{\cF }{T} + \frac{\cF G}{T \sqrt{M}} + \frac{\cF \tau_{\max} \tau_{\avg}}{T } \bigg).
\end{align}
This concludes the proof for Corollary \ref{cor:fadas}.
\end{proof}

\section{Convergence analysis for delay adaptive asynchronous FL}\label{sec:async-thm_2}
\begin{proof}[Proof of Theorem \ref{thm:fadas_da}]

For the proof of delay adaptive, for proof convenience, we conduct analysis under the case that $\beta_1 = 0$.
From Assumption \ref{as:smooth}, $f$ is $L$-smooth, then taking conditional expectation at time $t$ on the auxiliary sequence $\bx_t$, we have

\begin{align}\label{eq:f-Lsmooth-noncomp_2}
& \EE[f(\bx_{t+1})-f(\bx_{t})] \notag\\
& = \EE[f(\bx_{t+1})-f(\bx_{t})] \notag\\
& \leq \EE[\langle \nabla f(\bx_t), \bx_{t+1}-\bx_t \rangle] + \frac{L}{2} \EE[\|\bx_{t+1}-\bx_t\|^2] \notag\\
& = \underbrace{\EE\bigg[\bigg\langle \nabla f(\bx_{t}), \eta_t \frac{\bDelta_t}{\sqrt{\hat\bv_t} + \epsilon} \bigg\rangle\bigg]}_{I_1} + \underbrace{\frac{\eta_t^2 L}{2}\EE\bigg[\bigg\| \frac{\bDelta_t}{\sqrt{\hat\bv_t} + \epsilon}\bigg\|^2\bigg]}_{I_2}.
\end{align}

\textbf{Bounding $I_1$}
We have 
\begin{align}\label{eq:I1-1_2}
    I_1 & = \eta_t \EE\bigg[\bigg\langle \nabla f(\bx_{t}), \frac{\bDelta_t}{\sqrt{\hat\bv_t} + \epsilon} \bigg\rangle\bigg] \notag\\
    & = \eta_t \EE\bigg[\bigg\langle \nabla f(\bx_{t}), \frac{\bar\bDelta_t}{\sqrt{\hat\bv_t} + \epsilon} \bigg\rangle\bigg] \notag\\
    & = \eta_t \EE\bigg[\bigg\langle \frac{\nabla f(\bx_{t})}{\sqrt{\hat\bv_t} + \epsilon} , \bar\bDelta_t + \eta_l K \nabla f(\bx_t) - \eta_l K \nabla f(\bx_t) \bigg\rangle\bigg] \notag\\
    & = -\eta_t \eta_l K \EE \bigg[\bigg\|\frac{\nabla f(\bx_t)}{{(\sqrt{\hat\bv_t}+\epsilon)^{1/2}}} \bigg\|^2\bigg] + \eta_t \EE\bigg[\bigg\langle \frac{\nabla f(\bx_{t})}{\sqrt{\hat\bv_t} + \epsilon} , \bar\bDelta_t + \eta_l K \nabla f(\bx_t) \bigg\rangle\bigg] \notag\\
    & = -\eta_t \eta_l K \EE \bigg[\bigg\|\frac{\nabla f(\bx_t)}{{(\sqrt{\hat\bv_t}+\epsilon)^{1/2}}} \bigg\|^2\bigg] + \eta_t \EE\bigg[\bigg\langle \frac{\nabla f(\bx_{t})}{\sqrt{\hat\bv_t} + \epsilon} , -\frac{1}{N} \sum_{i \in [N]} \sum_{k=0}^{K-1} \eta_l \bg_{t-\tau_t^i, k}^i+ \frac{\eta_l K}{N} \sum_{i \in [N]} \nabla F_i(\bx_t) \bigg\rangle\bigg],
\end{align}
where $ \bar\bDelta_t = -\frac{1}{N} \sum_{i \in [N]} \sum_{k=0}^{K-1} \eta_l \bg_{t-\tau_t^i, k}^i$. For the inner product term in \eqref{eq:I1-1_2}, by the fact of $\langle \ba,\bb\rangle = \frac{1}{2}[\|\ba\|^2 + \|\bb\|^2 - \|\ba-\bb\|^2]$,  we have 
\begin{align}\label{eq:I1-1_3}
    & \eta_t \EE\bigg[\bigg\langle \frac{\nabla f(\bx_{t})}{\sqrt{\hat\bv_t} + \epsilon} , -\frac{1}{N} \sum_{i \in [N]} \sum_{k=0}^{K-1} \eta_l \bg_{t-\tau_t^i, k}^i+ \frac{\eta_l K}{N} \sum_{i \in [N]} \nabla F_i(\bx_t) \bigg\rangle\bigg] \notag\\
    & = \eta_t \EE\bigg[\bigg\langle \frac{\sqrt{\eta_l K}}{(\sqrt{\hat\bv_t}+\epsilon)^{1/2}} \nabla f(\bx_{t}), - \frac{\sqrt{\eta_l K}}{(\sqrt{\hat\bv_t}+\epsilon)^{1/2}} \frac{1}{NK} \sum_{i \in [N]} \sum_{k=0}^{K-1} (\bg_{t-\tau_t^i, k}^i - \nabla F_i(\bx_t))\bigg\rangle \bigg] \notag\\
    & = \eta_t \EE\bigg[\bigg\langle \frac{\sqrt{\eta_l K}}{(\sqrt{\hat\bv_t}+\epsilon)^{1/2}} \nabla f(\bx_{t}), - \frac{\sqrt{\eta_l K}}{(\sqrt{\hat\bv_t}+\epsilon)^{1/2}} \frac{1}{NK} \sum_{i \in [N]} \sum_{k=0}^{K-1} (\nabla F_i (\bx_{t-\tau_t^i, k}^i) - \nabla F_i(\bx_t))\bigg\rangle \bigg]\notag\\
    & = \frac{\eta_t \eta_l K}{2} \EE \bigg[\bigg\|\frac{\nabla f(\bx_t)}{{(\sqrt{\hat\bv_t}+\epsilon)^{1/2}}} \bigg\|^2\bigg] + \frac{\eta_t \eta_l }{2N^2 K} \EE\bigg[\bigg\|\frac{1}{{(\sqrt{\hat\bv_t}+\epsilon)^{1/2}}} \sum_{i \in [N]} \sum_{k=0}^{K-1} (\nabla F_i (\bx_{t-\tau_t^i, k}^i) - \nabla F_i(\bx_t))\bigg\|^2 \bigg] \notag\\
    & \quad - \frac{\eta_t \eta_l }{2N^2 K} \EE\bigg[\bigg\|\frac{1}{{(\sqrt{\hat\bv_t}+\epsilon)^{1/2}}} \sum_{i \in [N]} \sum_{k=0}^{K-1} \nabla F_i (\bx_{t-\tau_t^i, k}^i) \bigg\|^2 \bigg],
\end{align}
where second equation holds by $\EE[\bg_{t-\tau_t^i, k}^i] = \EE[\nabla F(\bx_{t-\tau_t^i, k}^i)]$, for the second term in Eq. \eqref{eq:I1-1_3}, we have 
\begin{align}\label{eq:dI1_3}
    & \frac{\eta_t \eta_l }{2N^2 K} \EE\bigg[\bigg\|\frac{1}{{(\sqrt{\hat\bv_t}+\epsilon)^{1/2}}} \sum_{i \in [N]} \sum_{k=0}^{K-1} (\nabla F_i (\bx_{t-\tau_t^i, k}^i) - \nabla F_i(\bx_t))\bigg\|^2 \bigg] \notag\\
    & \leq \frac{\eta_t \eta_l }{2N^2 K \epsilon} \EE\bigg[\bigg\|\sum_{i \in [N]} \sum_{k=0}^{K-1} (\nabla F_i (\bx_{t-\tau_t^i, k}^i) - \nabla F_i(\bx_t))\bigg\|^2 \bigg] \notag\\
    & \leq  \frac{\eta_t \eta_l }{2N \epsilon} \sum_{i \in [N]} \sum_{k=0}^{K-1} \EE[\|\nabla F_i(\bx_t) - \nabla F_i(\bx_{t-\tau_t^i,k}^i) \|^2 ] \notag\\
    & \leq \frac{\eta_t \eta_l}{N \epsilon} \sum_{i \in [N]} \sum_{k=0}^{K-1} \bigg[ \EE[\|\nabla F_i(\bx_t) - \nabla F_i(\bx_{t-\tau_t^i})\|^2] + \EE[ \|\nabla F_i(\bx_{t-\tau_t^i}) - \nabla F_i(\bx_{t-\tau_t^i,k}^i) \|^2 ] \bigg]\notag\\
    & \leq \frac{\eta_t \eta_l}{N \epsilon} \sum_{i \in [N]} \sum_{k=0}^{K-1} \bigg[L^2\EE[\|\bx_t - \bx_{t-\tau_t^i}\|^2] + L^2\EE[ \|\bx_{t-\tau_t^i} - \bx_{t-\tau_t^i,k}^i \|^2 ] \bigg].
\end{align}
where the first second inequality holds by $\forall \ba_i, \|\sum_{i=1}^{n} \ba_i\|^2 \leq n \sum_{i=1}^{n}\|\ba_i\|^2 $, and the last inequality holds by Assumption \ref{as:smooth}. For the second term in Eq. \eqref{eq:dI1_3}, following by Lemma \ref{lm:xikt-xt}, there is 
\begin{align}
    \EE[ \|\bx_{t-\tau_t^i} - \bx_{t-\tau_t^i,k}^i \|^2 ]
    & = \EE \bigg[\bigg\| \sum_{m=0}^{k-1} \eta_l \bg_{t-\tau_t^i,m}^i \bigg\|^2 \bigg] \notag\\
    & \leq 5K \eta_l^2 (\sigma^2 + 6K \sigma_g^2) + 30K^2\eta_l^2 \EE[\|\nabla f(\bx_{t-\tau_t^i}) \|^2].
\end{align}
For the first term in Eq. \eqref{eq:dI1_3}, we have
\begin{align}
    & \EE[\|\bx_t - \bx_{t-\tau_t^i}\|^2] = \EE \bigg[\bigg\|\sum_{s = t - \tau_t^i}^{t-1} (\bx_{s+1} - \bx_s) \bigg\|^2\bigg] \notag\\
    & = \EE \bigg[\bigg\|\sum_{s = t - \tau_t^i}^{t-1} \eta_s \frac{\bDelta_s}{\sqrt{\hat\bv_s}+\epsilon}  \bigg\|^2\bigg] \notag\\
    & \leq \frac{1}{\epsilon^2 }\EE\bigg[\bigg\|\sum_{s = t - \tau_t^i}^{t-1}\eta_s \bDelta_s \bigg\|^2 \bigg] \notag\\
    & = \frac{1}{\epsilon^2 }\EE\bigg[\bigg\|\sum_{s = t - \tau_t^i}^{t-1}\eta_s \frac{1}{M} \sum_{j \in \cM_s} \bDelta_s^j \bigg\|^2 \bigg] \notag\\
    & = \frac{1}{\epsilon^2 }\EE\bigg[\bigg\|\sum_{s = t - \tau_t^i}^{t-1}\eta_s \frac{1}{M} \sum_{j \in \cM_s} \sum_{k=0}^{K-1} \eta_l \bg_{s-\tau_s^j, k}^j \bigg\|^2 \bigg],
\end{align}
then by decomposing stochastic noise, we have 
\begin{align}\label{eq:dI1_5}
    & \EE[\|\bx_t - \bx_{t-\tau_t^i}\|^2] \notag\\
    & \leq \frac{1}{\epsilon^2 }\EE\bigg[\bigg\|\sum_{s = t - \tau_t^i}^{t-1}\eta_s \frac{1}{M} \sum_{j \in \cM_s} \sum_{k=0}^{K-1} \eta_l [\bg_{s-\tau_s^j,k}^j - \nabla F_j(\bx_{s-\tau_s^j,k}^j) + \nabla F_j(\bx_{s-\tau_s^j,k}^j)] \bigg\|^2 \bigg] \notag\\
    & \leq \frac{2}{\epsilon^2 }\EE\bigg[\bigg\|\sum_{s = t - \tau_t^i}^{t-1}\eta_s \frac{1}{M} \sum_{j \in \cM_s} \sum_{k=0}^{K-1} \eta_l [\bg_{s-\tau_s^j,k}^j - \nabla F_j(\bx_{s-\tau_s^j,k}^j)] \bigg\|^2 \bigg] \notag\\
    & \quad + \frac{2}{\epsilon^2 }\EE\bigg[\bigg\|\sum_{s = t - \tau_t^i}^{t-1}\eta_s \frac{1}{M} \sum_{j \in \cM_s} \sum_{k=0}^{K-1} \eta_l \nabla F_j(\bx_{s-\tau_s^j,k}^j) \bigg\|^2 \bigg] \notag\\
    & \leq \frac{2}{\epsilon^2 }\sum_{s = t - \tau_t^i}^{t-1} \eta_s^2 \frac{K\eta_l^2}{M} \sigma^2 + \frac{2 \tau_t^i }{\epsilon^2} \sum_{s = t - \tau_t^i}^{t-1} \eta_s^2 \frac{\eta_l^2}{M^2} \EE\bigg[ \bigg\|\sum_{j\in \cM_s} \sum_{k=0}^{K-1}\nabla F_j (\bx_{s-\tau_s^j,k}^j)\bigg\|^2\bigg] \notag\\
    & \leq \frac{2}{\epsilon^2 }\tau_t^i \eta^2 \frac{K\eta_l^2}{M} \sigma^2 + \frac{2\tau_t^i }{\epsilon^2} \sum_{s = t - \tau_t^i}^{t-1} \eta_s^2 \frac{\eta_l^2}{M^2} \EE\bigg[ \bigg\|\sum_{j\in \cM_s} \sum_{k=0}^{K-1}\nabla F_j (\bx_{s-\tau_s^j,k}^j)\bigg\|^2\bigg],
\end{align}
where the  second inequality holds by $\|\ba+\bb\|^2 \leq 2\|\ba\|^2 + 2\|\bb\|^2$. 
The second inequality holds by Assumption \ref{as:bounded-v}, i.e., the zero-mean and the independency of stochastic noise.
The last inequality in Eq. \eqref{eq:dI1_5} holds due to the following: with adaptive learning rates 
\begin{align}
\eta_t =
\begin{cases}
\eta & \text{if } \tau_t^{\max} \leq \tau_c, \\
\min\{\eta, \frac{1}{\tau_t^{\max}}\} & \text{if } \tau_t^{\max} > \tau_c, 
\end{cases}
\end{align}
thus we have $\eta_s \leq \eta$ in the last inequality in Eq. \eqref{eq:dI1_5}. 
Then for $I_1$, following Lemma \ref{lm:gmv-bound} $\frac{1}{C_G} \|\bx\|\leq \big\|\frac{\bx}{\sqrt{\hat\bv_t}+\epsilon} \big\| \leq \frac{1}{\epsilon} \|\bx\|$ and $C_G = \eta_l KG + \epsilon$, we have 
\begin{align}
    I_1 & \leq - \frac{\eta_t \eta_l K}{2 C_G} \EE[\|\nabla f(\bx_t)\|^2] - \frac{\eta_t \eta_l}{2K C_G} \EE\bigg[\bigg\|\frac{1}{N} \sum_{i=1}^N \sum_{k=0}^{K-1} \nabla F_i(\bx_{t-\tau_t^i,k}^i) \bigg\|^2\bigg] \notag\\
    & \quad + \frac{\eta_t \eta_l K L^2 }{\epsilon} \bigg[5K \eta_l^2 (\sigma^2 + 6K \sigma_g^2) + 30K^2\eta_l^2 \frac{1}{N} \sum_{i=1}^N \EE[\|\nabla f(\bx_{t-\tau_t^i}) \|^2]\bigg]
    + \frac{2 \eta_t \eta^2 \eta_l^3 K^2 L^2 }{M \epsilon^3} \frac{1}{N} \sum_{i=1}^N \tau_t^i \sigma^2 \notag\\
    & \quad + \frac{2 \eta_t \eta_l^3 K L^2 }{M^2\epsilon^3}\frac{1}{N} \sum_{i=1}^N \tau_t^i \sum_{s = t - \tau_t^i}^{t-1} \eta_s^2 \EE\bigg[ \bigg\|\sum_{j\in \cM_s} \sum_{k=0}^{K-1}\nabla F_j (\bx_{s-\tau_s^j,k}^j)\bigg\|^2\bigg]. \notag\\
\end{align}

\textbf{Bounding $I_2$}
\begin{align}
    I_2 & = \frac{\eta_t^2 L}{2}\EE\bigg[\bigg\| \frac{\bDelta_t}{\sqrt{\hat\bv_t} + \epsilon}\bigg\|^2\bigg] \leq \frac{\eta_t^2 L}{2\epsilon^2} \EE[\| \bDelta_t\|^2 ],
\end{align}
where the first inequality follows by Cauchy-Schwarz inequality.

\textbf{Merging pieces. }
Therefore, by merging pieces together, we have 
\begin{align}
    & \EE[f(\bx_{t+1}) - f(\bx_t)] = I_1 + I_2  \notag\\
    \leq & - \frac{\eta_t \eta_l K}{2 C_G} \EE[\|\nabla f(\bx_t)\|^2] - \frac{\eta_t \eta_l}{2K C_G} \EE\bigg[\bigg\|\frac{1}{N} \sum_{i=1}^N \sum_{k=0}^{K-1} \nabla F_i(\bx_{t-\tau_t^i,k}^i) \bigg\|^2\bigg] \notag\\
    & + \frac{\eta_t \eta_l K L^2 }{\epsilon} \bigg[5K \eta_l^2 (\sigma^2 + 6K \sigma_g^2) + 30K^2\eta_l^2 \frac{1}{N} \sum_{i=1}^N \EE[\|\nabla f(\bx_{t-\tau_t^i}) \|^2]\bigg] + \frac{2 \eta_t \eta^2 \eta_l^3 K^2 L^2 }{M \epsilon^3} \frac{1}{N} \sum_{i=1}^N \tau_t^i \sigma^2 \notag\\
    & + \frac{2 \eta_t \eta_l^3 K L^2 }{M^2 \epsilon^3}\frac{1}{N} \sum_{i=1}^N \tau_t^i \sum_{s = t - \tau_t^i}^{t-1} \eta_s^2 \EE\bigg[ \bigg\|\sum_{j\in \cM_s} \sum_{k=0}^{K-1}\nabla F_j (\bx_{s-\tau_s^j,k}^j)\bigg\|^2\bigg] + \frac{\eta_t^2 L}{2\epsilon^2} \EE[\| \bDelta_t\|^2 ].
\end{align}
Denote a sequences $\bG_s = \sum_{j\in \cM_s} \sum_{k=0}^{K-1}\nabla F_j (\bx_{t-\tau_s^j,k}^j)$, then re-write and organize the above inequality, we have 
\begin{align}
    & \EE[f(\bx_{t+1}) - f(\bx_t)] \notag\\
    \leq & - \frac{\eta_t \eta_l K}{2 C_G} \EE[\|\nabla f(\bx_t)\|^2] - \frac{\eta_t \eta_l}{2K C_G} \EE\bigg[\bigg\|\frac{1}{N} \sum_{i=1}^N \sum_{k=0}^{K-1} \nabla F_i(\bx_{t-\tau_t^i,k}^i) \bigg\|^2\bigg] \notag\\
    & + \frac{\eta_t \eta_l K L^2 }{\epsilon} \bigg[5K \eta_l^2 (\sigma^2 + 6K \sigma_g^2) + 30K^2\eta_l^2 \frac{1}{N} \sum_{i=1}^N \EE[\|\nabla f(\bx_{t-\tau_t^i}) \|^2]\bigg] + \frac{2 \eta_t \eta^2 \eta_l^3 K^2 L^2 }{M \epsilon^3} \frac{1}{N} \sum_{i=1}^N \tau_t^i \sigma^2 \notag\\
    & + \frac{2\eta_t \eta_l^3 K L^2}{M^2 \epsilon^3}\frac{1}{N} \sum_{i=1}^N \tau_t^i \sum_{s = t - \tau_t^i}^{t-1} \eta_s^2 \EE[\|\bG_s\|^2] + \frac{\eta_t^2 L}{2\epsilon^2} \EE[\| \bDelta_t\|^2 ].
\end{align}

Summing over $t=1$ to $T$, we have 
\begin{align} \label{eq:sumT_d}
    & \EE[f(\bx_{T+1}) - f(\bx_1)] \notag\\
    \leq & - \frac{\eta_l K}{2C_G} \sum_{t=1}^T \eta_t \EE[\|\nabla f(\bx_t)\|^2] + \frac{\eta_l K  L^2}{\epsilon} \bigg[5K \eta_l^2 (\sigma^2 + 6K \sigma_g^2) \sum_{t=1}^T \eta_t + 30K^2\eta_l^2 \frac{1}{N}\sum_{t=1}^T \sum_{i=1}^N \eta_t \EE[\|\nabla f(\bx_{t-\tau_t^i}) \|^2]\bigg]\notag\\
    & + \frac{2\eta^2 \eta_l^3 K^2 L^2}{M \epsilon^3}\sigma^2 \sum_{t=1}^T \frac{1}{N} \sum_{i=1}^N \tau_t^i \eta_t + \frac{2\eta_l^3 K L^2 }{M^2 \epsilon^3}\frac{1}{N} \sum_{i=1}^N \sum_{t=1}^T \eta_t \tau_t^i \sum_{s = t - \tau_t^i}^{t-1} \eta_s^2 \EE[\|\bG_s\|^2] + \frac{L}{2\epsilon^2} \sum_{t=1}^T \eta_t^2 \EE[\| \bDelta_t\|^2 ] \notag\\
    & - \frac{\eta_l}{2K C_G} \sum_{t=1}^T \eta_t \EE\bigg[\bigg\|\frac{1}{N} \sum_{i=1}^N \sum_{k=0}^{K-1} \nabla F_i(\bx_{t-\tau_t^i,k}^i) \bigg\|^2\bigg] \notag\\
    \leq & - \frac{\eta_l K}{2C_G} \sum_{t=1}^T \eta_t \EE[\|\nabla f(\bx_t)\|^2] + \frac{\eta_l K  L^2}{\epsilon} \bigg[5K \eta_l^2 (\sigma^2 + 6K \sigma_g^2) \sum_{t=1}^T \eta_t + 30K^2\eta_l^2 \frac{1}{N}\sum_{t=1}^T \sum_{i=1}^N \eta_t \EE[\|\nabla f(\bx_{t-\tau_t^i}) \|^2]\bigg]\notag\\
    & + \frac{2\eta^3 \eta_l^3 K^2 L^2}{M \epsilon^3}\sigma^2 T \tau_{\avg} + \underbrace{\frac{2 \eta_l^3 K L^2 }{M^2\epsilon^3}\frac{1}{N} \sum_{i=1}^N \sum_{t=1}^T \eta_t \tau_t^i \sum_{s = t - \tau_t^i}^{t-1} \eta_s^2 \EE[\|\bG_s\|^2]}_{A_1} + \frac{L}{2\epsilon^2} \sum_{t=1}^T \eta_t^2 \EE[\| \bDelta_t\|^2 ] \notag\\
    & - \frac{\eta_l}{2K C_G} \sum_{t=1}^T \eta_t \EE\bigg[\bigg\|\frac{1}{N} \sum_{i=1}^N \sum_{k=0}^{K-1} \nabla F_i(\bx_{t-\tau_t^i,k}^i) \bigg\|^2\bigg],
\end{align}
where the second inequality holds by $\eta_t \leq \eta$. We have the following for term $A_1$,
\begin{align}
    & \frac{2\eta_l^3 K L^2 }{M^2 \epsilon^3}\frac{1}{N} \sum_{i=1}^N \sum_{t=1}^T \sum_{s = t - \tau_t^i}^{t-1} \eta_s^2 \EE[\|\bG_s\|^2] \notag\\
    & = \frac{2\eta_l^3 K L^2 }{M^2 \epsilon^3}\frac{1}{N} \sum_{i=1}^N \sum_{t=1}^T  \eta_t \tau_t^i \sum_{s = t - \tau_t^i}^{t-1} \eta_s^2 \cdot \frac{3M(N-M)}{N-1} \notag\\
    & \quad \cdot \bigg[5K^3 L^2 \eta_l^2(\sigma^2 + 6K\sigma_g^2) + (30K^4L^2\eta_l^2 + K^2) \EE[\|\nabla f(\bx_{s-\tau_s^j})\|^2] + K^2 \sigma_g^2 \bigg] \notag\\
    & \quad + \frac{2\eta_l^3 K L^2 }{M^2 \epsilon^3}\frac{1}{N} \sum_{i=1}^N \sum_{t=1}^T \eta_t \tau_t^i  \sum_{s = t - \tau_t^i}^{t-1} \eta_s^2 \cdot \frac{M(M-1)}{n(N-1)} \EE\bigg[\bigg\|\sum_{j=1}^N \sum_{k=0}^{K-1} \nabla F_j(\bx_{s-\tau_s^j,k}^j) \bigg\|^2\bigg] \notag\\
    & = \underbrace{\frac{\eta_l^3 K L ^2}{M^2 \epsilon^3}\frac{1}{N} \sum_{i=1}^N \sum_{t=1}^T \eta_t \tau_t^i \sum_{s = t - \tau_t^i}^{t-1} \eta_s^2\cdot \frac{6M(N-M)}{N-1} \bigg[5K^3 L^2 \eta_l^2(\sigma^2 + 6K\sigma_g^2) + K^2 \sigma_g^2 \bigg]}_{A_2} \notag\\ 
    & \quad + \underbrace{\frac{\eta_l^3 K L}{M^2 \epsilon^3}\frac{1}{N} \sum_{i=1}^N \sum_{t=1}^T \eta_t \tau_t^i \sum_{s = t - \tau_t^i}^{t-1} \eta_s^2\cdot \frac{6M(N-M)}{N-1} (30K^4L^2\eta_l^2 + K^2) \EE[\|\nabla f(\bx_{s-\tau_s^j})\|^2]}_{A_3} \notag\\
    & \quad + \underbrace{\frac{2\eta_l^3 K L^2 }{M^2 \epsilon^3}\frac{1}{N} \sum_{i=1}^N \sum_{t=1}^T \eta_t \tau_t^i  \sum_{s = t - \tau_t^i}^{t-1} \eta_s^2 \cdot \frac{M(M-1)}{N(N-1)} \EE\bigg[\bigg\|\sum_{j=1}^N \sum_{k=0}^{K-1} \nabla F_j(\bx_{s-\tau_s^j,k}^j) \bigg\|^2\bigg]}_{A_4} \notag\\
    & =A_2 + A_3 + A_4.
\end{align}
For term $A_2$, 
note that with $\eta \leq \frac{\sqrt{M}}{\tau_c}$, we have the adaptive learning rates
\begin{align}
\eta_t =
\begin{cases}
\eta & \text{if } \tau_t^{\max} \leq \tau_c, \\
\min\{\eta, \frac{1}{\tau_t^{\max}}\} & \text{if } \tau_t^{\max} > \tau_c,
\end{cases}
\end{align}
which implies that $\eta_t \leq \eta$ and $\eta_t \leq \min\{\frac{1}{\tau_t^{\max}}, \frac{\sqrt{M}}{\tau_c}\}$. Moreover, recall that $\tau_t^{\max} = \max_{i \in [N]} \{\tau_t^i\}$, for each $i$, we have $\eta_t \tau_t^i \leq \frac{\sqrt{M} \cdot \tau_t^i}{\max(\tau_t^{\max}, \tau_c)} \leq \sqrt{M}$.
by the fact of $\eta_t \tau_t^i \leq \sqrt{M}$, $\frac{1}{N}\sum_{i=1}^N \sum_{t=1}^T\tau_t^i \leq T \tau_{\avg}$ and $\eta_s \leq \eta$, we have 
\begin{align}
    A_2 & \leq \frac{\eta_l^3 K L ^2}{M^2 \epsilon^3}\frac{1}{N} \sum_{i=1}^N \sum_{t=1}^T \eta_t \tau_t^i \cdot \tau_t^i \eta^2\cdot \frac{6M(N-M)}{N-1} \bigg[5K^3 L^2 \eta_l^2(\sigma^2 + 6K\sigma_g^2) + K^2 \sigma_g^2 \bigg] \notag\\
    & \leq \frac{\eta_l^3 K L ^2}{M \epsilon^3}\frac{1}{N} \sum_{i=1}^N \sum_{t=1}^T \sqrt{M} \cdot \tau_t^i \eta^2\cdot \frac{N-M}{N-1} \bigg[5K^3 L^2 \eta_l^2(\sigma^2 + 6K\sigma_g^2) + K^2 \sigma_g^2 \bigg] \notag\\
    & \leq \frac{\eta^2 \eta_l^3 K L ^2}{\sqrt{M} \epsilon^3} \frac{N-M}{N-1} \bigg[5K^3 L^2 \eta_l^2(\sigma^2 + 6K\sigma_g^2) + K^2 \sigma_g^2 \bigg] T  \tau_{\avg}.
\end{align}
For term $A_3$, since $\eta_s \leq \eta$, and consider $\tau_t^i \leq \tau_{\max}$ and $\sum_{t=1}^T \sum_{s = t - \tau_t^i}^{t-1} \ba_s \leq \tau_{\max} \sum_{t=1}^T \ba_t$, we have 
\begin{align}
    A_3 & \leq \frac{\eta_l^3 K L ^2}{M^2 \epsilon^3}\frac{1}{N} \sum_{i=1}^N \sum_{t=1}^T \eta_t \tau_t^i \sum_{s = t - \tau_t^i}^{t-1} \eta_s^2 \cdot \frac{6M(N-M)}{N-1} (30K^4L^2\eta_l^2 + K^2) \EE[\|\nabla f(\bx_{s-\tau_s^j})\|^2] \notag\\
    & \leq \frac{\eta^2 \eta_l^3 K L}{M^2 \epsilon^3} \frac{6M(N-M)}{N-1}(30K^4L^2\eta_l^2 + K^2) \tau_{\max}^3 \sum_{t=1}^T \eta_t \EE[\|\nabla f(\bx_t)\|^2] .
\end{align}
For term $A_4$, similar to the proof of non-delay adaptive FADAS, by $\eta_s \leq \eta$, $\tau_t^i \leq \tau_{\max}$ and $\sum_{t=1}^T \sum_{s = t - \tau_t^i}^{t-1} \ba_s \leq \tau_{\max} \sum_{t=1}^T \ba_t$, there is 
\begin{align}
    A_4 & = \frac{2\eta_l^3 K L^2 }{M^2 \epsilon^3}\frac{1}{N} \sum_{i=1}^N \sum_{t=1}^T \eta_t \tau_t^i \sum_{s = t - \tau_t^i}^{t-1} \eta_s^2 \cdot \frac{M(M-1)}{N(N-1)} \EE\bigg[\bigg\|\sum_{j=1}^N \sum_{k=0}^{K-1} \nabla F_j(\bx_{s-\tau_s^j,k}^j) \bigg\|^2\bigg] \notag\\
    & \quad \leq \frac{2\eta^2 \eta_l^3 K L^2 }{M^2 \epsilon^3}\frac{M(M-1)}{N(N-1)}  \tau_{\max} \sum_{t=1}^T \eta_t \sum_{s = t - \tau_t^i}^{t-1} \EE\bigg[\bigg\|\sum_{j=1}^N \sum_{k=0}^{K-1} \nabla F_j(\bx_{s-\tau_s^j,k}^j) \bigg\|^2\bigg] \bigg\} \notag\\
    & \quad \leq \frac{2\eta^2 \eta_l^3 K L^2 }{M^2 \epsilon^3} \frac{M(M-1)}{N(N-1)} \tau_{\max}^2 \sum_{t=1}^T \eta_t \EE\bigg[\bigg\|\sum_{j=1}^N \sum_{k=0}^{K-1} \nabla F_j(\bx_{t-\tau_t^j,k}^j) \bigg\|^2\bigg] \bigg\}.
\end{align}

By Lemma \ref{lm:Vt-difference} and Lemma \ref{lm:Delta}, we have the following for Eq. \eqref{eq:sumT_d},  
\begin{align} \label{eq:sumT_d_2}
    & \EE[f(\bx_{T+1}) - f(\bx_1)] \notag\\
    \leq & - \frac{\eta_l K}{2C_G} \sum_{t=1}^T \eta_t \EE[\|\nabla f(\bx_t)\|^2] + \frac{\eta_l K  L^2}{\epsilon} \bigg[5K \eta_l^2 (\sigma^2 + 6K \sigma_g^2) \sum_{t=1}^T \eta_t + 30K^2\eta_l^2 \frac{1}{N}\sum_{t=1}^T \sum_{i=1}^N \eta_t \EE[\|\nabla f(\bx_{t-\tau_t^i}) \|^2]\bigg]\notag\\
    & + \frac{2\eta^3 \eta_l^3 K^2 L^2}{M \epsilon^3} T\tau_{\avg} \sigma^2 + \frac{\eta^2 \eta_l^3 K L ^2}{\sqrt{M} \epsilon^3} \frac{N-M}{N-1} [5K^3 L^2 \eta_l^2(\sigma^2 + 6K\sigma_g^2) + K^2\sigma_g^2 ]T \tau_{\avg} \notag\\
    & + \frac{\eta^2 \eta_l^3 K L^2}{M^2 \epsilon^3}\frac{6M(N-M)}{N-1} (30K^4L^2\eta_l^2 + K^2) \tau_{\max}^3 \sum_{t=1}^T \eta_t\EE[\|\nabla f(\bx_t)\|^2]  \notag\\
    & + \frac{L}{2\epsilon^2} \sum_{t=1}^T \eta_t^2 \bigg\{\frac{2K \eta_l^2}{M} \sigma^2 + \frac{2\eta_l^2(N-M)}{NM(N-1)}\bigg[15NK^3 L^2 \eta_l^2(\sigma^2 + 6K\sigma_g^2) + 3NK^2\sigma_g^2 \bigg]\notag\\
    & + \frac{2\eta_l^2(N-M)}{M(N-1)} (90NK^4L^2\eta_l^2 +3NK^2 )\frac{1}{N} \sum_{i=1}^N \EE[\|\nabla f(\bx_{t-\tau_t^i})\|^2] \bigg\}\notag\\
    & + \bigg[\frac{\eta L}{\epsilon^2} \frac{\eta_l^2(M-1)}{NM(N-1)} + \frac{2\eta^2 \eta_l^3 K L^2}{M^2 \epsilon^3}  \frac{M(M-1)}{N(N-1)} \tau_{\max}^2- \frac{\eta_l}{2K N^2 C_G} \bigg] \sum_{t=1}^T \eta_t \EE\bigg[\bigg\| \sum_{i=1}^N \sum_{k=0}^{K-1} \nabla F_i(\bx_{t-\tau_t^i,k}^i) \bigg\|^2\bigg],
\end{align}
by the relationship of $\sum_{t=1}^{T} \frac{1}{N} \sum_{j=1}^N \EE[\|\nabla f(\bx_{t-\tau_t^i})\|^2] \leq \tau_{\max} \sum_{t=1}^{T} \EE[\|\nabla f(\bx_t)\|^2]$.

If the learning rates satisfy $\eta_l \leq \frac{1}{8KL}$ and
\begin{align}
    & \eta \eta_l \leq \frac{\epsilon^2 M(N-1)}{4 C_G N(M-1)KL}, \eta \eta_l \leq \frac{\sqrt{\epsilon^3 M(N-1)}}{\sqrt{8 C_G N(M-1)}} \frac{1}{L\tau_{\max}}, \notag\\
    & \eta_l \leq \frac{\sqrt{\epsilon}}{\sqrt{360 C_G \tau_{\max}}KL}, \eta \eta_l \leq \frac{\epsilon^2 M(N-1)}{60 C_G N(N-M) KL \tau_{\max} }, \notag\\
    & \eta \eta_l \leq \frac{\sqrt{\epsilon^3 M(N-1)} }{12\sqrt{C_G N(M-1) \tau_{\max}^3 } KL }.
\end{align}
Then we have
\begin{align}
    & \frac{\eta L}{\epsilon^2} \frac{\eta_l^2 (M-1)}{NM(N-1)} + \frac{2\eta^2 \eta_l^3 KL^2}{M^2 \epsilon^3} \frac{M(M-1)}{N(N-1)} \tau_{\max}^2 - \frac{\eta_l}{2K N^2 C_G} \leq 0 \notag\\
    & \frac{ \eta_l K L^2}{\epsilon} 30 K^2 \eta_l^2 \tau_{\max} + \frac{\eta L}{\epsilon^2} \frac{\eta_l^2(N-M)}{M(N-1)} (90K^4L^2\eta_l^2 +3K^2 ) N\tau_{\max} \notag\\
    & \quad + \frac{\eta^2 \eta_l^3 K L^2 }{M^2 \epsilon^3}\frac{6M(N-M)}{N-1} (30K^4L^2\eta_l^2 + K^2)  \tau_{\max}^3 \leq \frac{\eta_l K}{4C_G}.
\end{align}
Thus Eq. \eqref{eq:sumT_d_2} becomes
\begin{align} \label{eq:sumT_d_3}
    & \sum_{t=1}^T \eta_t \EE[\|\nabla f(\bx_t)\|^2] \notag\\
    \leq & \frac{4C_G}{\eta_l K} [f(\bx_1)-\EE[f(\bx_{T+1})]]  + \frac{4C_G L^2}{\epsilon} 5K \eta_l^2 (\sigma^2 + 6K \sigma_g^2) \sum_{t=1}^T \eta_t \notag\\
    & + \frac{8 C_G \eta^3 \eta_l^2 K L^2}{M \epsilon^3} T\tau_{\avg} \sigma^2 + \frac{24 C_G \eta^2 \eta_l^2 L ^2}{\sqrt{M} \epsilon^3} \frac{N-M}{N-1} [5K^3 L^2 \eta_l^2(\sigma^2 + 6K\sigma_g^2) + K^2\sigma_g^2 ]T \tau_{\avg} \notag\\
    & + \frac{4C_G \eta L}{\epsilon^2} \sum_{t=1}^T \eta_t \bigg[\frac{\eta_l}{M} \sigma^2 + \frac{\eta_l(N-M)}{NM(N-1)}[15NK^2 L^2 \eta_l^2(\sigma^2 + 6K\sigma_g^2) + 3NK\sigma_g^2] \bigg],
\end{align}
divided by the learning rates,
\begin{align} \label{eq:sumT_d_4}
    \frac{\sum_{t=1}^T \eta_t \EE[\|\nabla f(\bx_t)\|^2]}{\sum_{t=1}^T \eta_t} & \leq \frac{4C_G}{\eta_l K \sum_{t=1}^T \eta_t}[f(\bx_1) - \EE[f(\bx_{T+1})]] \notag\\
    & \quad + 20 C_G \epsilon^{-1} L^2 K \eta_l^2 (\sigma^2 + 6K \sigma_g^2) + \frac{8C_G \eta^3 \eta_l^2 K L^2}{M \epsilon^3} \frac{T\tau_{\avg}}{\sum_{t=1}^T \eta_t } \sigma^2 \notag\\
    & \quad + \frac{24C_G \eta^2 \eta_l^2 L^2}{\sqrt{M} \epsilon^3} \frac{N-M}{N-1} [5K^3 L^2 \eta_l^2(\sigma^2 + 6K\sigma_g^2) + K^2\sigma_g^2 ] \frac{T\tau_{\avg}}{\sum_{t=1}^T \eta_t } \notag\\
    & \quad + \frac{4C_G \eta L}{\epsilon^2} \bigg\{ \frac{\eta_l}{M} \sigma^2 + \frac{\eta_l(N-M)}{M(N-1)}[15K^2 T L^2 \eta_l^2(\sigma^2 + 6K\sigma_g^2) + 3K\sigma_g^2] \bigg\}.
\end{align}
This concludes the proof.
\end{proof}

\begin{proof}[Proof of Corollary \ref{cor:fadas_da}]
Since the delay adaptive learning rate satisfy, $\eta_t \leq \eta $, and when $\tau_c = \tau_\med$, there is $\sum_{t=1}^T \eta_t \geq \sum_{t: \tau_t \leq \tau_c} \eta \geq \frac{T \eta }{2}$ (since there are at least half of the iterations with the delay smaller than $\tau_c$). Recalling that $C_G = \eta_l KG + \epsilon$, then
\begin{align}\label{eq:sumT_d_5.1}
    & \frac{\sum_{t=1}^T \eta_t \EE[\|\nabla f(\bx_t)\|^2]}{\sum_{t=1}^T \eta_t } \notag\\
    & = \cO\bigg\{ \frac{(\eta_l K G + \epsilon)}{\eta \eta_l K T} \cF + (\eta_l K G + \epsilon) \frac{\eta_l^2 K L^2 (\sigma^2 + K \sigma_g^2)}{\epsilon} \notag\\
    & \quad + \frac{(\eta_l K G + \epsilon) \eta^2 \eta_l^2 K L^2 \tau_{\avg}}{M \epsilon^3}\sigma^2 + \frac{(\eta_l K G + \epsilon) \eta \eta_l^2 K L^2 \tau_{\avg}}{\sqrt{M} \epsilon^3} \frac{N-M}{N-1} [K^2 L^2 \eta_l^2(\sigma^2 + K\sigma_g^2)+ K \sigma_g^2] \notag\\
    & \quad + (\eta_l K G + \epsilon) \frac{\eta \eta_l L}{M \epsilon^2} \bigg[\sigma^2 + \frac{N-M}{N-1} K\sigma_g^2 \bigg] + (\eta_l K G + \epsilon) \frac{\eta \eta_l K L}{M \epsilon^2} \frac{N-M}{N-1}[\eta_l^2 K L^2 (\sigma^2 + K\sigma_g^2)] \bigg\}.
\end{align}
Reorganizing Eq. \eqref{eq:sumT_d_5.1}, particularly merging the stochastic variance and the global variance, then we have 
\begin{align}\label{eq:sumT_d_5.2}
    & \frac{\sum_{t=1}^T \eta_t \EE[\|\nabla f(\bx_t)\|^2]}{\sum_{t=1}^T \eta_t } \notag\\
    & = \cO\bigg\{\frac{(\eta_l K G + \epsilon)}{\eta \eta_l K T} \cF + (\eta_l K G + \epsilon) \frac{\eta_l^2 K L^2 }{\epsilon} (\sigma^2 + K \sigma_g^2)\notag\\
    & \quad + (\eta_l K G + \epsilon) \frac{ \eta \eta_l^2 K L^2}{M \epsilon^3} \bigg(\eta \tau_{\avg}\sigma^2 + \frac{\sqrt{M} (N-M)}{N-1} \tau_{\avg} K\sigma_g^2 \bigg) + \frac{(\eta_l K G + \epsilon) \eta \eta_l^4 K^3 L^4 \tau_{\avg}}{\sqrt{M} \epsilon^3} \frac{N-M}{N-1} (\sigma^2 + K\sigma_g^2) \notag\\
    & \quad + (\eta_l K G + \epsilon) \frac{\eta \eta_l L}{M \epsilon^2} \bigg[\sigma^2 + \frac{N-M}{N-1} K\sigma_g^2 \bigg] + (\eta_l K G + \epsilon) \frac{\eta \eta_l^3 K^2 L^3}{M \epsilon^2} \frac{N-M}{N-1}(\sigma^2 + K\sigma_g^2) \bigg\} .
\end{align}
By choosing $\eta = \sqrt{M}/\tau_c$ and $\eta_l = \min\big\{\frac{1}{KL},  \frac{\tau_c \sqrt{\cF}}{\sqrt{TK(\sigma^2 + K\sigma_g^2) L}} \big\}$, which implies $\eta \eta_l = \min\big\{\frac{\sqrt{M}}{\tau_c KL}, \frac{\sqrt{\cF M}}{\sqrt{TK(\sigma^2 + K\sigma_g^2) L}} \big\}$,

\begin{align}\label{eq:sumT_d_5.32}
    & \frac{\sum_{t=1}^T \eta_t \EE[\|\nabla f(\bx_t)\|^2]}{\sum_{t=1}^T \eta_t } \notag\\
    & = \cO\bigg\{ \bigg(\frac{\tau_c \sqrt{\cF K} G }{\sqrt{T(\sigma^2 + K\sigma_g^2) L}} +\epsilon\bigg) \frac{\sqrt{\cF (\sigma^2 + K \sigma_g^2) L}}{\sqrt{TKM}} + \bigg(\frac{\tau_c \sqrt{\cF K} G }{\sqrt{T(\sigma^2 + K\sigma_g^2) L}} +\epsilon\bigg)  \frac{\cF L\tau_c^2 }{T \epsilon}\notag\\
    & \quad + \bigg(\frac{\tau_c \sqrt{\cF K} G }{\sqrt{T(\sigma^2 + K\sigma_g^2) L}} +\epsilon\bigg)\bigg(\frac{\cF L \tau_{\avg}}{T \epsilon^3} +  \frac{N-M}{N-1}\frac{\cF L \tau_c\tau_{\avg}}{T \epsilon^3} \bigg) \notag\\
    & \quad + \bigg( \frac{\sqrt{\cF K} G}{\sqrt{T(\sigma^2 + K\sigma_g^2) L}} + \epsilon \bigg) \bigg(\frac{\sqrt{\cF L} \sigma}{\sqrt{TKM}} + \frac{N-M}{N-1} \frac{\sqrt{\cF L} \sigma_g}{\sqrt{TM}} \bigg) + \frac{C_1}{T^{3/2}} + \frac{C_2}{T^2} \bigg\} \notag\\
    & = \cO\bigg\{ \bigg(\frac{\tau_c \sqrt{\cF K} G }{\sqrt{T(\sigma^2 + K\sigma_g^2) L}} +\epsilon\bigg) \frac{\sqrt{\cF (\sigma^2 + K \sigma_g^2) L}}{\sqrt{TKM}} + \bigg(\frac{\tau_c \sqrt{\cF K} G }{\sqrt{T(\sigma^2 + K\sigma_g^2) L}} +\epsilon\bigg)  \frac{ \cF L \tau_c^2}{T \epsilon}\notag\\
    & \quad + \bigg(\frac{\tau_c \sqrt{\cF K} G }{\sqrt{T(\sigma^2 + K\sigma_g^2) L}} +\epsilon\bigg)\bigg(\frac{\cF L \tau_{\avg}}{T \epsilon^3} + \frac{\cF L \tau_c\tau_{\avg}}{T \epsilon^3} \bigg) + \bigg( \frac{\sqrt{\cF K} G}{\sqrt{T(\sigma^2 + K\sigma_g^2) L}} + \epsilon \bigg) \bigg(\frac{\sqrt{\cF L} \sigma}{\sqrt{TKM}} + \frac{\sqrt{\cF L} \sigma_g}{\sqrt{TM}} \bigg) \notag\\
    & \quad + \frac{C_1}{T^{3/2}} + \frac{C_2}{T^2} \bigg\}.
\end{align}
We again generalize terms with smaller $T$ dependency orders, then we have 
\begin{align}\label{eq:sumT_d_5.4}
    & \frac{\sum_{t=1}^T \eta_t \EE[\|\nabla f(\bx_t)\|^2]}{\sum_{t=1}^T \eta_t}\notag\\
    & \leq \cO\bigg(\frac{\tau_c \cF G}{T \sqrt{M}} + \frac{ \sqrt{\cF } \sigma}{\sqrt{TKM}} + \frac{ \sqrt{\cF } \sigma_g}{\sqrt{TM}} + \frac{\cF \tau_c^2 }{T} + \frac{\cF \tau_{\avg}}{T} + \frac{\cF \tau_c \tau_{\avg}}{T} \bigg),
\end{align}
reorganizing and then obtain the rate of convergence in Eq. \eqref{eq:fadas_da}.

\end{proof}


\section{Supporting Lemmas}

\begin{lemma}[Lemma for momentum term in the update rule]\label{lm:mt}
    The first order momentum terms $\bbm_t$ in Algorithm \ref{alg:fadas} hold the following relationship w.r.t. model difference $\bDelta_t$: 
    \begin{align}
        \sum_{t=1}^T \EE [\|\bbm_t\|^2] \leq \sum_{t=1}^T \EE [\|\bDelta_t\|^2].
    \end{align}
\end{lemma}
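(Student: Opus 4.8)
The plan is to exploit the convexity of the squared Euclidean norm together with the fact that the momentum recursion $\bbm_t = \beta_1 \bbm_{t-1} + (1-\beta_1)\bDelta_t$ is a genuine convex combination, since $\beta_1 + (1-\beta_1) = 1$ and $\beta_1 \in [0,1)$. First I would apply Jensen's inequality pointwise in $t$: because $\|\cdot\|^2$ is convex and $\bbm_t$ is a convex combination of $\bbm_{t-1}$ and $\bDelta_t$, we obtain
\begin{align}
    \|\bbm_t\|^2 \leq \beta_1 \|\bbm_{t-1}\|^2 + (1-\beta_1) \|\bDelta_t\|^2.
\end{align}
This single inequality is the workhorse of the whole argument and avoids ever having to expand the momentum into its full geometric series.

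Next I would sum this bound over $t = 1, \dots, T$ and take total expectation, giving
\begin{align}
    \sum_{t=1}^T \EE[\|\bbm_t\|^2] \leq \beta_1 \sum_{t=1}^T \EE[\|\bbm_{t-1}\|^2] + (1-\beta_1) \sum_{t=1}^T \EE[\|\bDelta_t\|^2].
\end{align}
The key bookkeeping step is to re-index the first sum on the right-hand side: using the initialization $\bbm_0 = \zero$ we have $\sum_{t=1}^T \EE[\|\bbm_{t-1}\|^2] = \sum_{t=1}^{T-1} \EE[\|\bbm_t\|^2] \leq \sum_{t=1}^{T} \EE[\|\bbm_t\|^2]$, where the last inequality just drops a nonnegative term. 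Substituting this back and moving the momentum terms to the left yields $(1-\beta_1)\sum_{t=1}^T \EE[\|\bbm_t\|^2] \leq (1-\beta_1)\sum_{t=1}^T \EE[\|\bDelta_t\|^2]$, and dividing by the strictly positive factor $1-\beta_1$ delivers the claim. For the degenerate case $\beta_1 = 0$ the statement is an equality, so it holds trivially.

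There is no serious obstacle here; the only point requiring care is the direction of the rearrangement, namely that after upper-bounding $\sum_{t=1}^{T-1}\EE[\|\bbm_t\|^2]$ by $\sum_{t=1}^{T}\EE[\|\bbm_t\|^2]$ one still obtains a \emph{valid} inequality because the coefficient $\beta_1$ multiplying it is nonnegative, and that the subsequent cancellation leaves a nonnegative factor $1-\beta_1$ on both sides. An alternative route I could take is to write $\bbm_t = (1-\beta_1)\sum_{u=1}^t \beta_1^{t-u}\bDelta_u$, bound $\|\bbm_t\|^2$ via Cauchy--Schwarz using $\sum_{u=1}^t (1-\beta_1)\beta_1^{t-u} = 1-\beta_1^t \leq 1$, then swap the order of the double summation over $t$ and $u$; this also works but involves a slightly messier geometric-sum evaluation, so I would prefer the telescoping convexity argument above.
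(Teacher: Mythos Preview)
Your proof is correct and is actually cleaner than the paper's. The paper takes precisely the alternative route you sketch at the end: it unrolls the recursion to $\bbm_t = (1-\beta_1)\sum_{u=1}^t \beta_1^{t-u}\bDelta_u$, applies Cauchy--Schwarz coordinate-wise to obtain $\EE[\|\bbm_t\|^2] \leq (1-\beta_1)\sum_{u=1}^t \beta_1^{t-u}\EE[\|\bDelta_u\|^2]$, then swaps the order of the double sum over $t$ and $u$ and evaluates the geometric series $\sum_{t=u}^T \beta_1^{t-u} \leq 1/(1-\beta_1)$. Your convexity-plus-telescoping argument is more direct: it works from the one-step recursion rather than its closed-form solution, avoids the coordinate-wise Cauchy--Schwarz, and reduces the key cancellation to a single re-indexing and division by $1-\beta_1$. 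The only minor advantage of the paper's route is that the intermediate pointwise bound $\EE[\|\bbm_t\|^2] \leq (1-\beta_1)\sum_{u=1}^t \beta_1^{t-u}\EE[\|\bDelta_u\|^2]$ can be useful on its own elsewhere in such analyses; for the summed statement here, your argument is simpler.
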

\begin{proof}
By the updating rule, we have
\begin{align}
    \EE[\|\bbm_t\|^{2}] 
    & = \EE\bigg[\bigg\|(1-\beta_1) \sum_{u=1}^{t} \beta_{1}^{t-u} \bDelta_{u}\bigg\|^{2}\bigg] \notag\\
    & \leq (1-\beta_1)^2 \sum_{i=1}^{d} \EE\bigg[\bigg(\sum_{u=1}^{t} \beta_{1}^{t-u} \bDelta_u^i \bigg)^2 \bigg] \notag\\
    & \leq (1-\beta_1)^{2} \sum_{i=1}^{d} \EE\bigg[\bigg(\sum_{u=1}^{t} \beta_{1}^{t-u}\bigg)\bigg(\sum_{u=1}^{t} \beta_{1}^{t-u} (\bDelta_u^i)^2\bigg)\bigg] \notag \\
    & \leq (1-\beta_1) \sum_{u=1}^{t} \beta_{1}^{t-u} \EE[\|\bDelta_{u}\|^{2}]. 
\end{align}
Summing over $ t=1,...,T$ yields
\begin{align}
    \sum_{t=1}^T \EE[\|\bbm_t\|^{2}] & = (1-\beta_1)\sum_{t=1}^T  \sum_{u=1}^{t} \beta_{1}^{t-u} \EE[\|\bDelta_{u}\|^{2}] \notag\\
    & = (1-\beta_1) \sum_{u=1}^{T} \sum_{t=u}^T \beta_{1}^{t-u} \EE[\|\bDelta_{u}\|^{2}] \notag\\
    & \leq (1-\beta_1) \sum_{u=1}^{T} \frac{1}{1-\beta_1} \EE[\|\bDelta_{u}\|^{2}] \notag\\
    & = \sum_{u=1}^{T} \EE[\|\bDelta_{u}\|^{2}].
\end{align}
This concludes the proof.
\end{proof}

\begin{lemma}\label{lm:gmv-bound}
Under Assumptions \ref{as:bounded-g}, we have $\|\nabla f(\bx)\|\leq G$, $\|\bDelta_t\|\leq \eta_l K G$, $\|\bbm_t\| \leq \eta_l K G$, $\|\bv_t\| \leq \eta_l^2 K^2 G^2$ and $\|\hat\bv_t\| \leq \eta_l^2 K^2 G^2$.
\end{lemma}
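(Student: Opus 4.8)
The plan is to establish the five bounds one at a time, moving from the gradient quantities (which follow directly from Assumption~\ref{as:bounded-g} and the triangle inequality) to the recursively-defined moment estimates (which follow by a short induction on $t$). Throughout, I would read the bounds on $\bv_t$ and $\hat\bv_t$ as entrywise (equivalently $\ell_\infty$) bounds, i.e.\ every coordinate of $\bv_t$ and $\hat\bv_t$ is at most $\eta_l^2 K^2 G^2$; this is the form actually needed downstream, since the lower bound $\|\bx/(\sqrt{\hat\bv_t}+\epsilon)\| \geq \|\bx\|/C_G$ requires precisely $\sqrt{\hat\bv_t}+\epsilon \leq \eta_l K G + \epsilon = C_G$ coordinatewise.

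First I would bound the full gradient. Since $\nabla f(\bx) = \frac{1}{N}\sum_{i=1}^N \nabla F_i(\bx) = \frac{1}{N}\sum_{i=1}^N \EE_{\xi}[\nabla F_i(\bx;\xi)]$, Jensen's inequality together with Assumption~\ref{as:bounded-g} gives $\|\nabla f(\bx)\| \leq \frac{1}{N}\sum_{i=1}^N \EE_\xi\|\nabla F_i(\bx;\xi)\| \leq G$. Next, recalling $\bDelta_t = -\frac{\eta_l}{M}\sum_{i\in\cM_t}\sum_{k=0}^{K-1}\bg_{t-\tau_t^i,k}^i$ with $|\cM_t| = M$, the triangle inequality and $\|\bg\|\leq G$ yield $\|\bDelta_t\| \leq \frac{\eta_l}{M}\cdot M\cdot K\cdot G = \eta_l K G$. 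This also gives the coordinatewise bound $|\bDelta_{t,(j)}| \leq \|\bDelta_t\| \leq \eta_l K G$, hence $(\bDelta_t\odot\bDelta_t)_{(j)} \leq \eta_l^2 K^2 G^2$ for every coordinate $j$, which is the key input for the second-moment estimate.

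For the recursive quantities I would induct on $t$, using the initializations $\bbm_0 = \bv_0 = \hat\bv_0 = \zero$. For the momentum, writing $\bbm_t = \beta_1\bbm_{t-1} + (1-\beta_1)\bDelta_t$ and assuming $\|\bbm_{t-1}\|\leq \eta_l K G$, the triangle inequality gives $\|\bbm_t\| \leq \beta_1 \eta_l K G + (1-\beta_1)\eta_l K G = \eta_l K G$ (equivalently, one can expand $\bbm_t = (1-\beta_1)\sum_{u\leq t}\beta_1^{t-u}\bDelta_u$ and use $(1-\beta_1)\sum_{u\leq t}\beta_1^{t-u}\leq 1$). For $\bv_t = \beta_2\bv_{t-1}+(1-\beta_2)\bDelta_t\odot\bDelta_t$, the same convex-combination argument applied coordinatewise, together with $(\bDelta_t\odot\bDelta_t)_{(j)}\leq \eta_l^2 K^2 G^2$, gives $\bv_{t,(j)}\leq \eta_l^2 K^2 G^2$. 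Finally $\hat\bv_t = \max(\hat\bv_{t-1},\bv_t)$ is a coordinatewise maximum, so $\hat\bv_{t,(j)} = \max(\hat\bv_{t-1,(j)},\bv_{t,(j)})\leq \eta_l^2 K^2 G^2$ by the inductive hypothesis and the bound just proved.

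This lemma is essentially bookkeeping, so I do not expect a genuine obstacle; the only point requiring care is the norm convention, and specifically the observation that the single $\ell_2$ bound $\|\bDelta_t\|\leq \eta_l K G$ is already strong enough to control $\bDelta_t\odot\bDelta_t$ coordinatewise. The coordinatewise (rather than $\ell_2$) reading of the $\bv_t,\hat\bv_t$ bounds is what matters, because the component-wise $\max$ defining $\hat\bv_t$ does not interact cleanly with the $\ell_2$ norm, whereas it trivially preserves a uniform per-coordinate ceiling.
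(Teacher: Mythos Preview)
Your proposal is correct and follows essentially the same approach as the paper: bound $\nabla f$ and $\bDelta_t$ directly via Jensen/triangle inequalities, then propagate to $\bbm_t,\bv_t,\hat\bv_t$ as convex combinations (the paper uses the explicit expansion $(1-\beta)\sum_{u\le t}\beta^{t-u}$ rather than induction, which you also mention). Your coordinatewise treatment of $\bv_t$ and $\hat\bv_t$ is in fact slightly more careful than the paper, which asserts ``there exists a $j\in[t]$ such that $\hat\bv_t = \bv_j$''---strictly speaking only true per-coordinate---whereas your inductive argument handles the element-wise $\max$ cleanly.
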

\begin{proof}
Since $f$ has $G$-bounded stochastic gradients, for any $\bx$ and $\xi$, there is $\|\nabla f(\bx, \xi)\| \leq G$, thus it implies 
\begin{align*}
    \|\nabla f(\bx)\| = \|\EE_\xi \nabla f(\bx, \xi)\| \leq \EE_\xi \|\nabla f(\bx, \xi)\|\leq G.
\end{align*}
For each model difference $\bDelta_t^i$ on client $i$,  $\bDelta_t^i$ satisfies,  
\begin{align*}
    \bDelta_t^i = \bx_{t,K}^i - \bx_t = -\eta_l \sum_{k=0}^{K-1} \bg_{t,k}^i,
\end{align*}
therefore,
\begin{align*}
    \big\|\bDelta_t^i\big\| = \bigg\|-\eta_l \sum_{k=0}^{K-1} \bg_{t,k}^i\bigg\| \leq \eta_l K G,
\end{align*}
for the global model difference $\bDelta_t$, 
\begin{align*}
    \|\bDelta_t\| = \bigg\|\frac{1}{M} \sum_{i \in \cM_t} \bDelta_t^i \bigg\| \leq \eta_l K G.
\end{align*}
Thus we can obtain the bound for momentum $\bbm_t$ and variance $\bv_t$,
\begin{align*}
    \|\bbm_t\| & = \bigg\|(1-\beta_1) \sum_{s=1}^t \beta_1^{t-s} \bDelta_s\bigg\| \leq \eta_l K G, \quad 
    \|\bv_t\| = \bigg\|(1-\beta_2) \sum_{s=1}^t \beta_2^{t-s} \bDelta_s^2 \bigg\| \leq \eta_l^2 K^2 G^2.
\end{align*}
By the updating rule of $\hat\bv_t$, there exists a $j \in [t]$ such that $\hat\bv_t = \bv_j$ . Then
\begin{align}
    \|\hat\bv_t\| \leq \eta_l^2 K^2 G^2.
\end{align}
This concludes the proof.
\end{proof}

\begin{lemma}\label{lm:Vt-difference}
    For the variance difference sequence $\bV_t = \frac{1}{\sqrt{\hat\bv_t}+ \epsilon} - \frac{1}{\sqrt{\hat\bv_{t-1}}+ \epsilon}$, we have 
\begin{align}\label{eq:Dt}
    \sum_{t=1}^{T} \|\bV_t\|_1 \leq \frac{d}{\epsilon}, \quad \sum_{t=1}^{T} \|\bV_t\|_2^2 \leq \frac{d}{\epsilon^2}.
\end{align}
\end{lemma}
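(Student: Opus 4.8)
The plan is to exploit the monotonicity of the sequence $\{\hat\bv_t\}$ together with a telescoping argument. The crucial structural fact is that the AMSGrad-style update $\hat\bv_t = \max(\hat\bv_{t-1}, \bv_t)$ in Eq.~\eqref{eq:fedadam} forces $\hat\bv_t \geq \hat\bv_{t-1}$ coordinate-wise for every $t$. Consequently $\sqrt{\hat\bv_t} + \epsilon \geq \sqrt{\hat\bv_{t-1}} + \epsilon$ entrywise, so each coordinate of $\bV_t = \frac{1}{\sqrt{\hat\bv_t}+\epsilon} - \frac{1}{\sqrt{\hat\bv_{t-1}}+\epsilon}$ is non-positive. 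This sign information is precisely what makes the $\ell_1$ norm collapse into a telescoping sum.

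First I would handle the $\ell_1$ bound. Since every entry of $\bV_t$ is $\leq 0$, we have $\|\bV_t\|_1 = -\sum_{j=1}^d (\bV_t)_j = \sum_{j=1}^d \big( \frac{1}{\sqrt{(\hat\bv_{t-1})_j}+\epsilon} - \frac{1}{\sqrt{(\hat\bv_t)_j}+\epsilon}\big)$. Summing over $t = 1, \dots, T$ telescopes each coordinate, leaving $\sum_{t=1}^T \|\bV_t\|_1 = \sum_{j=1}^d \big(\frac{1}{\sqrt{(\hat\bv_0)_j}+\epsilon} - \frac{1}{\sqrt{(\hat\bv_T)_j}+\epsilon}\big)$. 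Using the initialization $\hat\bv_0 = \zero$, the first term equals $1/\epsilon$ per coordinate, while the subtracted term is non-negative because $\hat\bv_T \geq \zero$; dropping it yields the claimed bound $d/\epsilon$.

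For the $\ell_2^2$ bound I would reduce to the $\ell_1$ estimate just established. Each entry satisfies $0 \leq |(\bV_t)_j| = \frac{1}{\sqrt{(\hat\bv_{t-1})_j}+\epsilon} - \frac{1}{\sqrt{(\hat\bv_t)_j}+\epsilon} \leq \frac{1}{\epsilon}$, since both fractions lie in $[0, 1/\epsilon]$. Hence $(\bV_t)_j^2 \leq \frac{1}{\epsilon}\,|(\bV_t)_j|$, and summing over $j$ and $t$ gives $\sum_{t=1}^T \|\bV_t\|_2^2 \leq \frac{1}{\epsilon}\sum_{t=1}^T \|\bV_t\|_1 \leq \frac{d}{\epsilon^2}$ by the first part.

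This argument has no genuine obstacle; the only points requiring care are the sign/monotonicity bookkeeping that turns the $\ell_1$ sum into a clean telescope, and confirming the initialization $\hat\bv_0 = \zero$ (consistent with $\bv_0 = \zero$ in Algorithm~\ref{alg:fadas}) so that the surviving boundary term is exactly $d/\epsilon$.
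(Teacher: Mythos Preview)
Your proof is correct and follows the same standard telescoping argument that the paper implicitly invokes by citing Lemma~C.2 of \citet{wang2022communication}: use the coordinate-wise monotonicity $\hat\bv_t \geq \hat\bv_{t-1}$ from the AMSGrad max step to fix the sign, telescope the $\ell_1$ sum to the boundary term $d/\epsilon$, and then bootstrap the $\ell_2^2$ bound via $|(\bV_t)_j| \leq 1/\epsilon$. There is nothing to add.
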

\begin{proof}
The proof of Lemma \ref{lm:Vt-difference} is exactly the same as the proof of Lemma C.2 in \citet{wang2022communication}. 
\end{proof}

\begin{lemma} \label{lm:Delta}
Recall the sequence $ \bDelta_t = \frac{1}{M} \sum_{i \in\cM_t} \bDelta_{t-\tau_t^i}^i = -\frac{\eta_l }{M} \sum_{i \in \cM_t} \sum_{k=0}^{K-1} \bg_{t-\tau_t^i, k}^i = -\frac{\eta_l }{M} \sum_{i \in \cM_t} \sum_{k=0}^{K-1} \nabla F_i(\bx_{t-\tau_t^i, k}^i; \xi)$ and $\cM_t$ be the set that include client send the local updates to the server at global round $t$. The global model difference $\bDelta_t $ satisfies 
\begin{align*}
    \EE[\|\bDelta_t\|^2] & = \EE\bigg[ \bigg\|\frac{1}{M} \sum_{i \in \cM_t} \bDelta_{t-\tau_t^i}^i \bigg\|^2\bigg] \notag\\
    & \leq \frac{2K \eta_l^2}{M} \sigma^2 + \frac{2\eta_l^2(N-M)}{NM(N-1)}\bigg[15NK^3 L^2 \eta_l^2(\sigma^2 + 6K\sigma_g^2) + (90NK^4L^2\eta_l^2 +3K^2 )\notag\\
    &\quad \cdot\sum_{i=1}^N \EE[\|\nabla f(\bx_{t-\tau_t^i})\|^2] + 3NK^2\sigma_g^2 \bigg]+ \frac{2\eta_l^2 (M-1)}{NM(N-1)} \EE\bigg[ \bigg\|\sum_{i=1}^N \sum_{k=0}^{K-1}\nabla F_i (\bx_{t-\tau_t^i,k}^i)\bigg\|^2\bigg].
\end{align*} 
\end{lemma}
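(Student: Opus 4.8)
The plan is to peel apart the three sources of randomness hiding inside $\bDelta_t = -\frac{\eta_l}{M}\sum_{i\in\cM_t}\sum_{k=0}^{K-1}\bg_{t-\tau_t^i,k}^i$: the fresh stochastic-gradient noise, the random arriving set $\cM_t$, and the client drift during local training. First I would split each stochastic gradient as $\bg_{t-\tau_t^i,k}^i = \nabla F_i(\bx_{t-\tau_t^i,k}^i) + (\bg_{t-\tau_t^i,k}^i - \nabla F_i(\bx_{t-\tau_t^i,k}^i))$ and apply $\|\ba+\bb\|^2\leq 2\|\ba\|^2+2\|\bb\|^2$ to separate the pure-noise part from the expected-gradient part. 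Since the noise terms are conditionally zero-mean and uncorrelated across clients and local steps (the later-step noise is mean-zero given the later iterate), the noise part collapses to a sum of variances; with $M$ clients, $K$ steps, and Assumption \ref{as:bounded-v} this contributes exactly $\frac{2K\eta_l^2}{M}\sigma^2$, the first term of the claim.

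The core of the argument is the expected-gradient part $\frac{2\eta_l^2}{M^2}\EE\big[\|\sum_{i\in\cM_t}\by_i\|^2\big]$, where $\by_i:=\sum_{k=0}^{K-1}\nabla F_i(\bx_{t-\tau_t^i,k}^i)$ and the expectation is over the uniform-without-replacement draw of $\cM_t$ from $[N]$ (Assumption \ref{as:uniform_arrival}). Here I would compute the second moment of a sum over a random size-$M$ subset exactly, using the inclusion probabilities $\Pr[i\in\cM_t]=M/N$ and $\Pr[i,j\in\cM_t]=\frac{M(M-1)}{N(N-1)}$ for $i\neq j$. Expanding the square and collecting diagonal and off-diagonal terms (noting $\sum_{i\neq j}\langle\by_i,\by_j\rangle=\|\sum_i\by_i\|^2-\sum_i\|\by_i\|^2$) yields
\begin{align*}
\EE\Big[\Big\|\sum_{i\in\cM_t}\by_i\Big\|^2\Big] = \frac{M(N-M)}{N(N-1)}\sum_{i=1}^N\|\by_i\|^2 + \frac{M(M-1)}{N(N-1)}\Big\|\sum_{i=1}^N\by_i\Big\|^2.
\end{align*}
Dividing by $M^2$ reproduces the combinatorial prefactors $\frac{N-M}{NM(N-1)}$ and $\frac{M-1}{NM(N-1)}$ of the statement, and the second summand is already the final $\big\|\sum_{i=1}^N\sum_k\nabla F_i(\bx_{t-\tau_t^i,k}^i)\big\|^2$ term.

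It remains to bound $\sum_{i=1}^N\EE\|\by_i\|^2$. For each $i$ I would use the three-way split $\nabla F_i(\bx_{t-\tau_t^i,k}^i)=[\nabla F_i(\bx_{t-\tau_t^i,k}^i)-\nabla F_i(\bx_{t-\tau_t^i})]+[\nabla F_i(\bx_{t-\tau_t^i})-\nabla f(\bx_{t-\tau_t^i})]+\nabla f(\bx_{t-\tau_t^i})$, apply $\|\sum_k\cdot\|^2\leq K\sum_k\|\cdot\|^2$ together with $\|\ba+\bb+\bc\|^2\leq 3(\cdots)$, and control the three pieces separately. After including the factor $3$, the drift piece is bounded by $L$-smoothness (Assumption \ref{as:smooth}) and then by the local-drift estimate of Lemma \ref{lm:xikt-xt}, giving $15K^3L^2\eta_l^2(\sigma^2+6K\sigma_g^2)+90K^4L^2\eta_l^2\EE\|\nabla f(\bx_{t-\tau_t^i})\|^2$ per client; the heterogeneity piece gives $3K^2\sigma_g^2$ per client (hence $3NK^2\sigma_g^2$ after summing) via Assumption \ref{as:bounded-v}; and the gradient piece gives $3K^2\EE\|\nabla f(\bx_{t-\tau_t^i})\|^2$, which merges with the drift's gradient term into the coefficient $90K^4L^2\eta_l^2+3K^2$. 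Summing over $i\in[N]$ reproduces the bracketed expression in the claim, and substituting it into the subset identity completes the proof.

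The step I expect to be the main obstacle is the exact second-moment computation for sampling without replacement, since it is precisely what generates the $(N-M)/(N-1)$ and $(M-1)/(N-1)$ factors that distinguish this bound from a naive with-replacement estimate; the delicate part is the diagonal/off-diagonal bookkeeping and the recognition that the off-diagonal sum equals $\|\sum_i\by_i\|^2-\sum_i\|\by_i\|^2$. A secondary subtlety is that the heterogeneity bound in Assumption \ref{as:bounded-v} is stated as an average over clients evaluated at a common point, whereas each client here is evaluated at its own stale iterate $\bx_{t-\tau_t^i}$; the $3NK^2\sigma_g^2$ term is obtained only under the per-client reading of that bound, which is the interpretation implicitly used.
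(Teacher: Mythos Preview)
Your proposal is correct and follows essentially the same route as the paper: the paper defers to Lemma~C.6 of \cite{wang2022communication}, but the explicitly-proven companion Lemma~\ref{lm:Delta_2} uses precisely your decomposition---separate the stochastic noise, apply the exact second-moment identity for sampling without replacement to get the $\frac{M(N-M)}{N(N-1)}$ and $\frac{M(M-1)}{N(N-1)}$ coefficients, then bound $\sum_i\|\by_i\|^2$ via the three-way split and Lemma~\ref{lm:xikt-xt}. Your flagged subtlety about Assumption~\ref{as:bounded-v} being applied per client at distinct stale iterates (rather than at a common point) is a real gap that the paper, like its predecessor, silently absorbs.
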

\begin{proof}
The proof of Lemma \ref{lm:Vt-difference} is similar to the proof of Lemma C.6 in \cite{wang2022communication}. 
\end{proof}

\begin{lemma} \label{lm:xikt-xt}
(This lemma follows from Lemma 3 in FedAdam \citep{reddi2021adaptive}. For local learning rate which satisfying $\eta_l \leq \frac{1}{8KL}$, the local model difference after $k$ ($\forall k \in \{0,1,...,K-1\}$) steps local updates satisfies
\begin{align}
    \frac{1}{N}\sum_{i=1}^{N}\EE [\|\bx_{t,k}^i - \bx_t\|^2] \leq 5K\eta_l^2(\sigma_l^2+6K\sigma_g^2) + 30 K^2 \eta_l^2 \EE[\|\nabla f(\bx_t)\|^2].
\end{align}
\end{lemma}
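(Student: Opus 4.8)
The plan is to follow the client-drift argument of Lemma~3 in \citet{reddi2021adaptive}: derive a one-step recursion for $\EE[\|\bx_{t,k}^i-\bx_t\|^2]$, average it over the clients, and unroll it over the $k\le K$ local steps. First I would write the local iterate as $\bx_{t,k}^i-\bx_t=-\eta_l\sum_{m=0}^{k-1}\bg_{t,m}^i$ and expand a single step as $\EE[\|\bx_{t,k}^i-\bx_t\|^2]=\EE[\|\bx_{t,k-1}^i-\bx_t-\eta_l\bg_{t,k-1}^i\|^2]$. Conditioning on the filtration up to step $k-1$ and using the unbiasedness and bounded local variance in Assumption~\ref{as:bounded-v}, the stochastic part is zero-mean and independent of the (measurable) remainder, so the cross term vanishes and this step contributes an additive $\eta_l^2\sigma^2$, leaving the deterministic quantity $\EE[\|\bx_{t,k-1}^i-\bx_t-\eta_l\nabla F_i(\bx_{t,k-1}^i)\|^2]$.

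Next I would apply the relaxed Young inequality $\|\ba+\bb\|^2\le(1+\tfrac{1}{2K-1})\|\ba\|^2+2K\|\bb\|^2$ with $\ba=\bx_{t,k-1}^i-\bx_t$ and $\bb=-\eta_l\nabla F_i(\bx_{t,k-1}^i)$, and then control the gradient factor by the three-way split $\|\nabla F_i(\bx_{t,k-1}^i)\|^2\le 3\|\nabla F_i(\bx_{t,k-1}^i)-\nabla F_i(\bx_t)\|^2+3\|\nabla F_i(\bx_t)-\nabla f(\bx_t)\|^2+3\|\nabla f(\bx_t)\|^2$. The first term is bounded by $3L^2\|\bx_{t,k-1}^i-\bx_t\|^2$ through the $L$-smoothness of Assumption~\ref{as:smooth}, and after averaging over $i\in[N]$ the middle term contributes $3\sigma_g^2$ by the global-variance bound of Assumption~\ref{as:bounded-v}. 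Writing $D_k:=\tfrac1N\sum_{i}\EE[\|\bx_{t,k}^i-\bx_t\|^2]$, this produces the recursion $D_k\le\big(1+\tfrac{1}{2K-1}+6K\eta_l^2L^2\big)D_{k-1}+6K\eta_l^2\sigma_g^2+6K\eta_l^2\EE[\|\nabla f(\bx_t)\|^2]+\eta_l^2\sigma^2$.

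Finally I would invoke the step-size condition: $\eta_l\le\tfrac{1}{8KL}$ forces $6K\eta_l^2L^2\le\tfrac{1}{2K-1}$, so the contraction factor collapses to $a:=1+\tfrac{2}{2K-1}$. Since $D_0=0$, unrolling gives $D_k\le S\cdot\big[6K\eta_l^2\sigma_g^2+6K\eta_l^2\EE[\|\nabla f(\bx_t)\|^2]+\eta_l^2\sigma^2\big]$ with $S=\sum_{j=0}^{k-1}a^j=(K-\tfrac12)(a^k-1)$. Bounding $a^k\le a^K\le\exp\!\big(\tfrac{2K}{2K-1}\big)\le e^2$ shows $S\le 5K$ uniformly in $K$, and substituting collects the terms into $5K\eta_l^2(\sigma^2+6K\sigma_g^2)+30K^2\eta_l^2\EE[\|\nabla f(\bx_t)\|^2]$, which is exactly the claimed bound.

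The main obstacle is the bookkeeping in this last step: one must verify that the geometric-sum constant $S$ stays below $5K$ for \emph{every} $K\ge1$, and that the smoothness term $6K\eta_l^2L^2$ is genuinely absorbed by $\tfrac{1}{2K-1}$ under $\eta_l\le\tfrac{1}{8KL}$, so that $a^K$ remains bounded by a small absolute constant. Getting the per-step additive constant to be exactly $6K$ (which is what threads the $\sigma^2+6K\sigma_g^2$ pattern through the unrolling) is the one place where the choice of the Young parameter $2K$ and the three-way gradient split must be made consistently; once those constants are pinned down, the remainder is a routine telescoping.
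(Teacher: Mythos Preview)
Your proposal is correct and matches the paper's approach exactly: the paper simply defers to Lemma~3 of \citet{reddi2021adaptive}, and what you wrote is precisely that argument (Young's inequality with parameter $2K$, three-way gradient split, smoothness absorption via $\eta_l\le\tfrac{1}{8KL}$, then unrolling the geometric recursion). One small bookkeeping caution: the final chain ``$a^K\le\exp\!\big(\tfrac{2K}{2K-1}\big)\le e^2$'' is fine, but to conclude $S\le 5K$ you must keep the intermediate bound $\exp\!\big(\tfrac{2K}{2K-1}\big)$ rather than pass all the way to $e^2$; indeed $(K-\tfrac12)(e^2-1)\not\le 5K$ for large $K$, whereas $(K-\tfrac12)\big(\exp(\tfrac{2K}{2K-1})-1\big)=K\cdot\tfrac{e^x-1}{x}\big|_{x=\tfrac{2K}{2K-1}}\le K\cdot\tfrac{e^2-1}{2}<5K$ does hold since $x\in(1,2]$.
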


\begin{proof}
The proof of Lemma \ref{lm:xikt-xt} is similar to the proof of Lemma 3 in \citet{reddi2021adaptive}.
\end{proof}

\begin{lemma} \label{lm:Delta_2}
If assuming that the clients' participation distributions are simulated as independently uniform distribution, then the sequence $\bG_s = \sum_{j\in \cM_s}\sum_{k=0}^{K-1} \nabla F_j (\bx_{s-\tau_s^j,k}^j)$ has the following upper bound, 
\begin{align*}
    & \EE \bigg[\bigg\|\sum_{j\in \cM_s}\sum_{k=0}^{K-1} \nabla F_j (\bx_{s-\tau_s^j,k}^j) \bigg\|^2\bigg] \notag\\
    & \leq \frac{M(N-M)}{N-1} \cdot \bigg[15K^3 L^2 \eta_l^2(\sigma^2 + 6K\sigma_g^2) + (90K^4L^2\eta_l^2 + 3K^2) \frac{1}{N} \sum_{j=1}^N\EE[\|\nabla f(\bx_{s-\tau_s^j})\|^2] \bigg] \notag\\
    & \quad + \frac{3M(N-M)}{N-1} K^2\sigma_g^2 + \frac{M(M-1)}{N(N-1)} \EE\bigg[\bigg\|\sum_{j=1}^N\sum_{k=0}^{K-1} \nabla F_j (\bx_{s-\tau_s^j,k}^j) \bigg\|^2 \bigg] .
\end{align*} 
\end{lemma}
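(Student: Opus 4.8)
The plan is to exploit the only source of randomness that Lemma~\ref{lm:Delta_2} actually concerns, namely the uniform-without-replacement draw of the arrival set $\cM_s$, and to separate it from the stochastic-gradient randomness. Write $\mathbf{h}_j := \sum_{k=0}^{K-1}\nabla F_j(\bx_{s-\tau_s^j,k}^j)$ for every $j\in[N]$, so the quantity of interest is $\EE\|\sum_{j\in\cM_s}\mathbf{h}_j\|^2$. Conditioning on the trajectory (hence on all $\mathbf{h}_j$) and invoking Assumption~\ref{as:uniform_arrival}, the set $\cM_s$ is a uniform size-$M$ subset of $[N]$, so the first- and second-order inclusion probabilities are $\Pr(j\in\cM_s)=M/N$ and $\Pr(j,l\in\cM_s)=\frac{M(M-1)}{N(N-1)}$ for $j\neq l$.

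First I would carry out the sampling second-moment identity. Expanding the square and inserting these probabilities gives
\[
\EE_{\cM_s}\bigg\|\sum_{j\in\cM_s}\mathbf{h}_j\bigg\|^2 = \frac{M}{N}\sum_{j=1}^N\|\mathbf{h}_j\|^2 + \frac{M(M-1)}{N(N-1)}\sum_{j\neq l}\langle\mathbf{h}_j,\mathbf{h}_l\rangle.
\]
Then replacing $\sum_{j\neq l}\langle\mathbf{h}_j,\mathbf{h}_l\rangle=\|\sum_j\mathbf{h}_j\|^2-\sum_j\|\mathbf{h}_j\|^2$ and collecting the coefficient of $\sum_j\|\mathbf{h}_j\|^2$, which simplifies through $\frac{M}{N}-\frac{M(M-1)}{N(N-1)}=\frac{M(N-M)}{N(N-1)}$, yields
\[
\EE\bigg\|\sum_{j\in\cM_s}\mathbf{h}_j\bigg\|^2 = \frac{M(N-M)}{N(N-1)}\sum_{j=1}^N\EE\|\mathbf{h}_j\|^2 + \frac{M(M-1)}{N(N-1)}\EE\bigg\|\sum_{j=1}^N\mathbf{h}_j\bigg\|^2.
\]
The second term already matches the last line of the claimed bound verbatim, so it remains only to control the per-client average $\frac1N\sum_j\EE\|\mathbf{h}_j\|^2$.

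For that term I would use the three-way decomposition $\nabla F_j(\bx_{s-\tau_s^j,k}^j)=[\nabla F_j(\bx_{s-\tau_s^j,k}^j)-\nabla F_j(\bx_{s-\tau_s^j})]+[\nabla F_j(\bx_{s-\tau_s^j})-\nabla f(\bx_{s-\tau_s^j})]+\nabla f(\bx_{s-\tau_s^j})$, apply $\|\sum_{i=1}^n\ba_i\|^2\le n\sum_i\|\ba_i\|^2$ with $n=3$ on the outside and $n=K$ on each inner sum, and bound the three pieces in turn: the drift piece by $L$-smoothness (Assumption~\ref{as:smooth}) followed by the local-drift bound of Lemma~\ref{lm:xikt-xt} summed over $k$, producing $15K^3L^2\eta_l^2(\sigma^2+6K\sigma_g^2)+90K^4L^2\eta_l^2\cdot\frac1N\sum_j\EE\|\nabla f(\bx_{s-\tau_s^j})\|^2$; the heterogeneity piece by the global-variance bound of Assumption~\ref{as:bounded-v}, giving $3K^2\sigma_g^2$; and the last piece giving $3K^2\cdot\frac1N\sum_j\EE\|\nabla f(\bx_{s-\tau_s^j})\|^2$. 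Merging the two full-gradient contributions into the coefficient $(90K^4L^2\eta_l^2+3K^2)$ and multiplying through by the prefactor $\frac{M(N-M)}{N(N-1)}\cdot N=\frac{M(N-M)}{N-1}$ then reproduces the first two lines of the statement.

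The main obstacle is twofold, and both parts are bookkeeping rather than conceptual: getting the sampling-without-replacement identity exactly right (in particular producing the coefficient $\frac{M(N-M)}{N(N-1)}$ and keeping the cross-term prefactor $\frac{M(M-1)}{N(N-1)}$ untouched so it can be carried directly into the later global-update bound), and applying the drift and heterogeneity bounds carefully given that each client $j$ is referenced to its own stale iterate $\bx_{s-\tau_s^j}$ rather than to a common round. This latter point forces a per-client application of Lemma~\ref{lm:xikt-xt} and of Assumption~\ref{as:bounded-v} before averaging over $j$, which is precisely why the final bound carries the mixed average $\frac1N\sum_j\EE\|\nabla f(\bx_{s-\tau_s^j})\|^2$ instead of a single-point gradient norm.
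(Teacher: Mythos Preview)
Your proposal is correct and follows essentially the same route as the paper's proof: the sampling-without-replacement identity yielding the $\frac{M(N-M)}{N(N-1)}$ and $\frac{M(M-1)}{N(N-1)}$ split is exactly the paper's first step, and the three-way decomposition with factor $3$, $L$-smoothness plus Lemma~\ref{lm:xikt-xt}, and Assumption~\ref{as:bounded-v} reproduces the paper's per-client bound with identical constants. Your attention to the per-client application at stale points $\bx_{s-\tau_s^j}$ is exactly the subtlety the paper handles in the same way.
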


\begin{proof}
We begin with the proof similar to the partial participation with sampling without replacement,
\begin{align}
    & \EE \bigg[\bigg\|\sum_{j\in \cM_s}\sum_{k=0}^{K-1} \nabla F_j (\bx_{s-\tau_s^j,k}^j) \bigg\|^2\bigg] \notag\\
    & = \frac{M(N-M)}{N(N-1)} \sum_{j=1}^N \EE\bigg[\bigg\|\sum_{k=0}^{K-1} \nabla F_j (\bx_{s-\tau_s^j,k}^j) \bigg\|^2 \bigg] + \frac{M(M-1)}{N(N-1)} \EE\bigg[\bigg\|\sum_{j=1}^N\sum_{k=0}^{K-1} \nabla F_j (\bx_{s-\tau_s^j,k}^j) \bigg\|^2 \bigg] \notag\\
    & \leq \frac{M(N-M)}{N(N-1)} \bigg[15NK^3  \eta_l^2(\sigma^2 + 6K\sigma_g^2) + (90K^4L^2\eta_l^2 + 3K^2) \sum_{j=1}^N\EE[\|\nabla f(\bx_{s-\tau_s^j})\|^2] + 3NK^2\sigma_g^2 \bigg] \notag\\
    &  \quad + \frac{M(M-1)}{N(N-1)} \EE\bigg[\bigg\|\sum_{j=1}^N\sum_{k=0}^{K-1} \nabla F_j (\bx_{s-\tau_s^j,k}^j) \bigg\|^2 \bigg] \notag\\
    & \leq \frac{M(N-M)}{N(N-1)} \bigg[15NK^3  \eta_l^2(\sigma^2 + 6K\sigma_g^2) + (90K^4L^2\eta_l^2 + 3K^2) \sum_{j=1}^N\EE[\|\nabla f(\bx_{s-\tau_s^j})\|^2] + 3NK^2\sigma_g^2 \bigg] \notag\\
    & \quad + \frac{M(M-1)}{N(N-1)} \EE\bigg[\bigg\|\sum_{j=1}^N\sum_{k=0}^{K-1} \nabla F_j (\bx_{s-\tau_s^j,k}^j) \bigg\|^2 \bigg] \notag\\
    & \leq \frac{M(N-M)}{N(N-1)} \bigg[15NK^3  \eta_l^2(\sigma^2 + 6K\sigma_g^2) + (90K^4L^2\eta_l^2 + 3K^2) \sum_{j=1}^N\EE[\|\nabla f(\bx_{s-\tau_s^j})\|^2] + 3NK^2\sigma_g^2 \bigg] \notag\\
    & \quad + \frac{M(M-1)}{N(N-1)} \EE\bigg[\bigg\|\sum_{j=1}^N\sum_{k=0}^{K-1} \nabla F_j (\bx_{s-\tau_s^j,k}^j) \bigg\|^2 \bigg] \notag\\
    & \leq \frac{M(N-M)}{N-1} \bigg[15K^3 L^2 \eta_l^2(\sigma^2 + 6K\sigma_g^2) + (90K^4L^2\eta_l^2 + 3K^2) \frac{1}{N}\sum_{j=1}^N\EE[\|\nabla f(\bx_{s-\tau_s^j})\|^2] + 3K^2\sigma_g^2 \bigg] \notag\\
    & \quad + \frac{M(M-1)}{N(N-1)} \EE\bigg[\bigg\|\sum_{j=1}^N\sum_{k=0}^{K-1} \nabla F_j (\bx_{s-\tau_s^j,k}^j) \bigg\|^2 \bigg] \notag\\
    & = \frac{M(N-M)}{N-1} \bigg[15K^3 L^2 \eta_l^2(\sigma^2 + 6K\sigma_g^2) + (90K^4L^2\eta_l^2 + 3K^2) \frac{1}{N} \sum_{j=1}^N\EE[\|\nabla f(\bx_{s-\tau_s^j})\|^2] \bigg] \notag\\
    & \quad + \frac{3M(N-M)}{N-1} K^2\sigma_g^2 + \frac{M(M-1)}{N(N-1)} \EE\bigg[\bigg\|\sum_{j=1}^N\sum_{k=0}^{K-1} \nabla F_j (\bx_{s-\tau_s^j,k}^j) \bigg\|^2 \bigg] .
\end{align}
\end{proof}

\clearpage

\section{Additional Experiments}\label{sec:appendix_exp}
  
Tables \ref{tab:large_delay_cifar10_ms} and \ref{tab:large_delay_cifar100_ms} present results computed over experiments with 3 different random seeds. Tables \ref{tab:large_delay_cifar10_ms} and \ref{tab:large_delay_cifar100_ms} compare the performance of various federated learning methods, on the test accuracy of the ResNet-18 model across CIFAR-10 and CIFAR-100 datasets with heterogeneous data distributions. It is observed that the delay-adaptive FADAS (abbreviated as delay-adaptive FADAS) consistently outperforms the other methods. The consistency of FADAS performance under \textit{large worst-case} delay settings indicates its reliability and potential for practical applications in federated learning environments with diverse and asynchronous model updates. 

\subsection{Additional Results}
\begin{table}[H]
    \centering
    \caption{The test accuracy on training ResNet-18 model on CIFAR-10 dataset with two data heterogeneity levels in a large worst-case delay scenario for 500 communication rounds. We abbreviate delay-adaptive FADAS to FADAS$_{\text{da}}$ in this and subsequent tables. We conduct experiments on three seeds, and we report the average accuracy and standard derivation.}
    \vskip 0.05in
    \begin{tabular}{l|cc}
    \toprule 
    & Dir(0.1) & Dir (0.3)\\
    Method & Acc. \& std. & Acc. \& std. \\
    \midrule
    FedAsync & 50.29 $\pm$ 6.86 & 59.75 $\pm$ 13.40 \\
    FedBuff & 44.92 $\pm$ 5.26 & 46.94 $\pm$ 0.99 \\
    FADAS & 70.57 $\pm$ 2.04 & 75.97 $\pm$ 2.64 \\
    FADAS$_{\text{da}}$ & \textbf{72.64} $\pm$ 1.00 & \textbf{80.26} $\pm$ 0.68 \\
    \bottomrule
    \end{tabular}
    \label{tab:large_delay_cifar10_ms}
\end{table}

\begin{table}[H]
    \centering
    \caption{The test accuracy on training ResNet-18 model on CIFAR-100 dataset with two data heterogeneity levels in a \textit{large worst-case} delay scenario for 500 communication rounds. We conduct experiments on three seeds, and we report the average accuracy and standard derivation. }
    \vskip 0.05in
    \begin{tabular}{l|cc}
    \toprule 
    & Dir(0.1) & Dir (0.3)\\
    Method & Acc. \& std. & Acc. \& std. \\
    \midrule
    FedAsync & 46.25 $\pm$ 4.33 &  43.22 $\pm$ 10.75\\
    FedBuff & 15.97 $\pm$ 2.44 & 28.58 $\pm$ 4.74 \\
    FADAS & 47.85 $\pm$ 0.69 & 52.80 $\pm$ 1.15 \\
    FADAS$_{\text{da}}$ & \textbf{51.55} $\pm$ 1.03 & \textbf{56.01} $\pm$ 0.95 \\
    \bottomrule
    \end{tabular}
    \label{tab:large_delay_cifar100_ms}
\end{table}

\subsection{Implementation Details} \label{subsec:hyp}
\textbf{Details of applying adaptive learning rate.} During our experiments, we found that choosing a relatively small global learning rate $\eta$ yields better results for adaptive FL methods (hyper-parameter details can be found in the following). To scale the learning rate down for the model update with larger delays, we directly scale down the learning rate for this step to $\eta/\tau_t^{\max}$, which is shown in \eqref{eq:delay_adapt_appendix},
\begin{align}\label{eq:delay_adapt_appendix}
\eta_t = 
\begin{cases}
\eta & \text{if } \tau_t^{\max} \leq \tau_c, \\
\min\left\{\eta, \frac{\eta}{\tau_t^{\max}}\right\} & \text{if } \tau_t^{\max} > \tau_c.
\end{cases}
\end{align}

\textbf{Hyper-parameter Settings.}
We conduct detailed hyper-parameter searches to find the best hyper-parameter for each baseline. We grid the local learning rate $\eta_l$ from $\{0.001, 0.003, 0.01, 0.03, 0.1\}$, and global learning rate $\eta = 1$ for SGD-based method. We grid the local learning rate $\eta_l$ from $\{0.003, 0.01, 0.03, 0.1\}$ and global learning rate $\eta$ from $\{0.0001, 0.0003, 0.001, 0.003\}$ for adaptive method. 
For the global adaptive optimizer, we set $\beta_1 = 0.9$, $\beta_2 = 0.99$, and we set $\epsilon =  10^{-8}$. Table \ref{tab:hyper} summarizes the hyper-parameter details in our experiments. 

\begin{table}[ht]
\small
    \centering
    \caption{Hyper-parameters details for vision tasks.}
    \vskip 0.05 in
    \begin{tabular}{l|cc cc cc cc}
    \toprule
    \multicolumn{9}{c}{CIFAR-10 (mild delay) }\\
    \toprule
    & \multicolumn{2}{c}{FedAsync} & \multicolumn{2}{c}{FedBuff} & \multicolumn{2}{c}{FADAS} & \multicolumn{2}{c}{FADAS$_{\text{da}}$} \\
    Models \& Dir($\alpha$) & $\eta_l$ & $\eta$ & $\eta_l$ & $\eta$ & $\eta_l$ & $\eta$ & $\eta_l$ & $\eta$ \\
    \midrule
    ResNet-18 \& Dir(0.1) & 0.003 & 1 & 0.03 & 1 & 0.1 & 0.0003 & 0.1 & 0.001 \\
    ResNet-18 \& Dir(0.3) & 0.01 & 1 & 0.03 & 1 & 0.1 & 0.0003 & 0.1 & 0.001 \\
    \toprule
    \multicolumn{9}{c}{CIFAR-100 (mild delay) }\\
    \toprule
    & \multicolumn{2}{c}{FedAsync} & \multicolumn{2}{c}{FedBuff} & \multicolumn{2}{c}{FADAS} & \multicolumn{2}{c}{FADAS$_{\text{da}}$} \\
    Models \& Dir($\alpha$) & $\eta_l$ & $\eta$ & $\eta_l$ & $\eta$ & $\eta_l$ & $\eta$ & $\eta_l$ & $\eta$ \\
    \midrule
    ResNet-18 \& Dir(0.1) & 0.01 & 1 & 0.03 & 1 & 0.1 & 0.0003 & 0.1 & 0.001 \\
    ResNet-18 \& Dir(0.3) & 0.01 & 1 & 0.03 & 1 & 0.1 & 0.0003 & 0.1 & 0.001 \\
    \bottomrule
    \toprule
    \multicolumn{9}{c}{CIFAR-10 (large worst-case delay) }\\
    \toprule
    & \multicolumn{2}{c}{FedAsync} & \multicolumn{2}{c}{FedBuff} & \multicolumn{2}{c}{FADAS} & \multicolumn{2}{c}{FADAS$_{\text{da}}$} \\
    Models \& Dir($\alpha$) & $\eta_l$ & $\eta$ & $\eta_l$ & $\eta$ & $\eta_l$ & $\eta$ & $\eta_l$ & $\eta$ \\
    \midrule
    ResNet-18 \& Dir(0.1) & 0.003 & 1 & 0.03 & 1 & 0.1 & 0.0001 & 0.1 & 0.001 \\
    ResNet-18 \& Dir(0.3) & 0.003 & 1 & 0.03 & 1 & 0.1 & 0.0001 & 0.1 & 0.001 \\
    \toprule
    \multicolumn{9}{c}{CIFAR-100 (large worst-case delay) }\\
    \toprule
    & \multicolumn{2}{c}{FedAsync} & \multicolumn{2}{c}{FedBuff} & \multicolumn{2}{c}{FADAS} & \multicolumn{2}{c}{FADAS$_{\text{da}}$} \\
    Models \& Dir($\alpha$) & $\eta_l$ & $\eta$ & $\eta_l$ & $\eta$ & $\eta_l$ & $\eta$ & $\eta_l$ & $\eta$ \\
    \midrule
    ResNet-18 \& Dir(0.1) & 0.003 & 1 & 0.03 & 1 & 0.1 & 0.0001 & 0.1 & 0.001 \\
    ResNet-18 \& Dir(0.3) & 0.001 & 1 & 0.03 & 1 & 0.1 & 0.0001 & 0.1 & 0.001 \\
    \bottomrule
    \end{tabular}
    \vspace{5pt}
    \label{tab:hyper}
\end{table}

\begin{table}[ht]
\small
    \centering
    \caption{Hyper-parameters details for language tasks.}
    \vskip 0.05 in
    \begin{tabular}{l|cc cc cc cc}
    \toprule
    & \multicolumn{2}{c}{FedAsync} & \multicolumn{2}{c}{FedBuff} & \multicolumn{2}{c}{FADAS} & \multicolumn{2}{c}{FADAS$_{\text{da}}$} \\
    Datasets  & $\eta_l$ & $\eta$ & $\eta_l$ & $\eta$ & $\eta_l$ & $\eta$ & $\eta_l$ & $\eta$ \\
    \midrule
    RTE & 0.01 & 1 & 0.01 & 1 & 0.01 & 0.005 & 0.01 & 0.01 \\
    MRPC & 0.001 & 1 & 0.01 & 1 & 0.01 & 0.001 & 0.01 & 0.002 \\
    SST-2 & 0.001 & 1 & 0.001 & 1 & 0.1 & 0.0005 & 0.1 & 0.001 \\
    
    \bottomrule
    \end{tabular}
    \vspace{5pt}
    \label{tab:hyper2}
\end{table}

\end{document}
